\newtheorem{mydef}[theorem]{Definition}
\newtheorem{cor}[theorem]{Corollary}
\newcommand{\pfun}{\pi}
\newcommand{\HS}{\mathrm{HS}}
\newcommand{\BV}{\mathrm{BV}}
\newcommand{\Lip}{\mathrm{Lip}}
\newcommand{\diag}{\mathrm{diag}}
\numberwithin{equation}{section}
\newtheorem{thm}[theorem]{Theorem}
\begin{document}

\begin{frontmatter}
  \title{Concentration of weakly dependent Banach-valued sums and applications to
  statistical learning methods}
\runtitle{Concentration of weakly dependent Banach-valued sums }
\begin{aug}
  \author{\fnms{Gilles}  \snm{Blanchard}\thanksref{t1} 
    \ead[label=e1]{blanchard@uni-potsdam.de}},
  \author{\fnms{Oleksandr} \snm{Zadorozhnyi}\thanksref{t2} 
    \ead[label=e2]{zadorozh@uni-potsdam.de}}
	
	\runauthor{Blanchard, Zadorozhnyi}
	
	\affiliation{University of Potsdam}
	
	\address{University of Potsdam, Institute of Mathematics\\
          Karl-Liebknecht-Str. 24/25, Potsdam, 14476. \printead{e1,e2}}
	

\thankstext{t1}{Acknowledges partial support from the DFG FOR-1735 research group ``Structural Inference in Statistics''. This work was finished while GB was a guest at the Institut des Hautes Études Scientifiques, Université Paris-Saclay.}
\thankstext{t2}{Acknowledges full support from the  DFG CRC-1294 collaborative research center ``Data Assimilation''.}	
\end{aug}

%
%
%
%
%
%

\begin{abstract}
  We obtain a  Bernstein-type inequality for sums of Banach-valued random variables
  satisfying a weak dependence assumption of general type and
  under certain smoothness assumptions of the underlying Banach norm.
  We use this inequality in order to investigate in the asymptotical regime the error upper bounds  for the broad family of spectral regularization methods for
  reproducing kernel decision rules,
  when trained on a sample coming from a 
  $\tau-$mixing process.
\end{abstract}
\begin{keyword}[class=MSC]
\kwd[primary ]{60E15}
\kwd[; secondary ]{62M10}
\kwd{62G08}
\kwd{60G10}
\kwd{68T05}
\end{keyword}
%
\begin{keyword}
\kwd{weak dependence}
\kwd{concentration}
\kwd{spectral regularization}
\kwd{Bernstein inequality}
\kwd{Banach-valued process}
\end{keyword}

\end{frontmatter}

\section{Introduction}


Let $(X_{k})_{k \in \mathbb{N}_{+}}$ be an 
 integrable and centered stochastic process taking
values in a separable Banach space $\paren{\mathcal{B},\norm{\cdot}}$.
Define $S_{n} = X_{1} + X_{2}+\ldots + X_{n}$. In this work, we are interested in the non-asymptotic behaviour of the deviations of $S_{n}$ from zero in $\mathcal{B}$; more precisely, we investigate exponential concentration inequalities for events of the type $\set{\norm{S_{n}} \geq t}$, for $t>0$.
In the simplest situation where $(X_{1},X_{2},\ldots,X_{n})$ are mutually independent and
real-valued,
the celebrated  Hoeffding's \cite{Hoeffding:63} and
Bernstein's inequalities \cite{Bernstein:24}  are available.
Vector-valued analogues (in finite or infinite dimension) of those concentration inequalities for norms of sums of independent random variables were first established for the case of bounded independent random variables in Hilbert spaces by Yurinskyi \cite{Yurinskyi:70}.
 
The situation differs in an arbitrary Banach space. There, the distribution of $\norm{S_n}$ (in particular its expectation) heavily depends on the geometry of the underlying Banach space, and moment (Bernstein-like) conditions for the individual variables $X_i$ are generally not sufficient for a generic control of $\norm{S_{n}}$ around zero (see \cite{Yurinskyi:95}, Example 3.0.1). Still, under assumptions on  the "smoothness" of the underlying Banach norm (reflected by boundedness of its first two Gâteaux-derivatives), one can control the deviations of $\norm{S_n}$ around zero.
Corresponding  concentration inequalities have been obtained in
\cite{Pinelis:86} and \cite{Pinelis:92}. 

Of interest for many applications is the case where random samples are generated from some non-trivial stochastic process with (possibly infinite) memory.  The generalization of Hoeffding's inequality for real-valued martingales and martingale differences together with its application to least squares estimators in linear and smooth autoregressive models are presented in  \cite{vandeGeer02}. An extension of the Hoeffding-Azuma inequalities for the weighted sum of uniformly bounded martingale differences can be found in \cite{Rio:13}. Generalizations of the exponential inequalities for the case of real-valued supermartingales were obtained in \cite{Fredman:75} and recently generalized in \cite{Fan:15}, where the authors use change of probability measure techniques, and give applications for estimation in the general parametric (real-valued) autoregressive model. Extensions of \cite{Fredman:75} for the case of supermartingales in Banach spaces were obtained in \cite{Pinelis:94}.

Beyond the (super)martingale setting, the need to handle more general processes which have some "asymptotic independence" assumptions led to the concept of mixing. Definitions of (strong) $\alpha-$, $\phi-$ and $\rho-$ mixing were introduced in \cite{Rosenblatt:56}, \cite{Ibragimov:59},\cite{Kolmogorov:60}, we refer also to \cite{Bradley:05} for a broad survey about the properties and relations between 
 strong mixing processes. However, there are examples of dynamical systems \cite{Dedecker:07} generated by uniformly expanding maps that are not even $\alpha-$mixing (considered the weakest form of strong mixing assumptions). Such type of processes include mixingales \cite{Andrews:88,McLeish:75}, associated processes \cite{Esary:85,Fortuyn:71}, and various more recent notions of weak dependence \cite{Rio:96, Bickel:99,Doukhan:99}.
In this paper, we consider the analysis of the inherent dependency of the random sample by means of a general type of weakly dependent process. In this general framework, 
many techniques used in the independent data scenario were improved and combined with other methods to obtain concentration inequalities for the sum of \textit{real-valued} random variables. For example, generalizations of Bernstein's inequality for $\phi$-mixing random
processes were obtained 
combining
 the entropy method 
  with the blocking technique in \cite{Samson:00}; using a similar blocking technique ensuring asymptotic independence, Bernstein-type inequalities for geometrically $\alpha-$mixing processes and moderate deviation principles were derived in \cite{Merlevede:09}; deviation inequalities for real-valued sums of variables from general $\alpha-$mixing processes were obtained in \cite{Bosq:93} through approximation by independent random sums and the blocking technique.
Moreover, in \cite{Steinwart:16} the blocking technique together with majorization of joint distributions by means of the marginals and a general Chernoff's bounding principle are used to obtain Bernstein-type inequalities for real-valued Lipschitz functions
of $\mathcal{C}-$mixing processes. In \cite{Kontorovich:08}, the martingale difference method is used to establish general McDiarmid-type concentration inequalities for real-valued Lipschitz functions of dependent random sequences on a countable state space. Using logarithmic Sobolev inequalities and the contractivity condition related to Dobrushin and Shlosman's strong mixing assumptions, general non-product measure concentration inequalities were obtained in \cite{Marton:04}. 

Most of the 
above mentioned inequalities characterize the deviations of sums of real-valued random variables.
Concerning Hilbert- or Banach-valued weakly dependent processes, a significant literature exists on limit theorems of central limit or Berry-Esseen type, motivated in particular by functional time series
\cite{Bosq2000, Horvath2012}; we will limit ourselves to pointing out the recent reference \cite{Jirak2018} and
the substantial literature review there. In this paper, we are specifically interested in concentration
results with a non-asymptotic control of exponentially decaying deviation probabilities.
A few results concern the concentration of real-valued functional of weakly dependent variables over somewhat general spaces, and can be applied to norms of sums of vector-valued variables. This is the case for the measure concentration result in \cite{Kontorovich:08} for
so-called $\eta-$mixing (which is implied by $\phi-$mixing) random variables, but a condition
called $\Psi$-dominance \cite{Kontorovich:06} must hold (it is satisfied if the underlying variable space is countable,
or is a closed subspace of the real line). This result implies Hoeffding-Azuma type inequalities for norms of
sums. Still, to the best of our knowledge, it is unknown how these mixing assumptions are connected to $\alpha-$, $\beta-$ or $\Phi_{\mathcal{C}}-$mixing, or whether they can be applied to norms in arbitrary Banach spaces. The aforementioned measure concentration results of \cite{Marton:04} for distributions of dependent real variables with continuous density imply concentration of the norm of their sum (which is a Lipschitz function in Euclidean distance) in an Euclidean space.
However, the question becomes more challenging when one considers concentration of the norm of random variables in a separable, infinite-dimensional space. Finally, the recent work \cite{Dedecker:15} establishes a Hoeffding-type bound under assumptions close to what we consider here; we underline that we are interested in sharper Bernstein-type  rather than Hoeffding-type bounds (see also Section~\ref{se:disc} for a more detailed discussion of the latter work).

This paper is organized as follows: in Section~\ref{sec:preliminaries_and_notations}, we
recall the setting for stochastic processes with values in a Banach space. We recall the
definition from \cite{Deschamps:06} to consider a general type of weakly dependent processes. In Section~\ref{sec:main_results}, we pose the main assumptions about the structure of the underlying infinite dimensional Banach space and present in a general form the new Bernstein-type inequalities for $\mathcal{C}-$mixing processes. Furthermore, here we also provide specific corollaries for the cases of either exponentially (geometrically) or polynomially mixing decay rates. We compare our results to the former inequalities on the concentration of real-valued $\mathcal{C}$-mixing processes. As an application,
in Section~\ref{sec:application} we investigate the (asymptotical) error bounds for reproducing kernel learning algorithms using a general form of spectral regularization when the sample is drawn from a process which satisfies the so-called $\tau$-mixing assumption. All proofs can be found in the Appendix. 

\section{Preliminaries and Notations}
\label{sec:preliminaries_and_notations}
Let $\left(\Omega,\mathcal{F},\mathbb{P}\right)$ be a probability space. We recall that $\paren{\mathcal{B},\norm{\cdot}}$ is some separable Banach space 
and $\mathcal{X}\subset \cB$ a ball of $\cB$. 
We use the standard notions of $p$-integrable and essentially bounded real functions spaces and use the notation $L_{p}(\mathbb{P})  :=L_{p}(\Omega, \mathcal{F},\mathbb{P})$ and $L_{\infty} := L_{\infty}(\Omega, \mathcal{F},\mathbb{P})$.
%
  Following \cite{Deschamps:06}, we define mixing processes with respect to a class  a of real-valued functions. Let $C(\cdot)$ be a semi-norm over a closed subspace $\mathcal{C}$ of the Banach space of bounded real-valued functions $f: \mathcal{X} \mapsto \mathbb{R}$. We define the $\mathcal{C}$-norm by $\norm{f}_{\mathcal{C}} := \norm{f} + C(f)$, where $\norm{\cdot}$ is the supremum norm on $\mathcal{C}$,
  and introduce $\mathcal{C}_{1} = \{ f \in \mathcal{C}, C(f) \leq 1\}$. 

Define $\cM_{j}= \sigma(X_{i}: 1 \leq i \leq j), j \in \mathbb{N}$ to be the sigma-algebra generated by the random variables $X_{1},\ldots,X_{j}$. 

\begin{mydef}
	\label{def:gen+phi_mixing}
	For $k \in \mathbb{N}_{>0}$ we define the ${\mathcal{C}}$-mixing coefficients as
\begin{multline*}
  \Phi_{\mathcal{C}}(k) = \sup \big\{ \ee{}{Y\varphi(X_{i+k})} - \ee{}{Y}\ee{}{\varphi(X_{i+k})} \mid \\
  i\geq 1, Y \in L_{1}(\Omega,\mathcal{M}_{i},\mathbb{P}), \norm{Y}_{L_{1}(\mathbb{P})} \leq 1, \varphi \in \mathcal{C}_{1} \big\}.
\end{multline*}
\end{mydef}
We say that the process $(X_{i})_{i \geq 1}$ is $\Phi_{\mathcal{C}}-$mixing (or simply $\mathcal{C}$-mixing) if $\lim_{k \rightarrow \infty} \Phi_{\mathcal{C}}(k) = 0$.
If $\Phi_{\mathcal{C}}(k) \leq c\exp(-bk^{\gamma})$ for some constants $b,\gamma > 0$, $c\geq 0$ and  all $k \in \mathbb{N}$,  then a stochastic process $(X_{k})_{k \geq 1}$ is said to be \textit{exponentially} (or \textit{geometrically}) ${\mathcal{C}}$-mixing. If $\Phi_{\mathcal{C}}(k) \leq ck^{-\gamma}$ for all $k \in \mathbb{N}$ and for some constants $c\geq 0, \gamma >0$, then the stochastic process $(X_{k})_{k\geq 1}$ is said to be \textit{polynomially} ${\mathcal{C}}$-mixing.

As discussed in \cite{Deschamps:06}, ${\mathcal{C}}$-mixing describes many natural time-evolving systems and finds its application for a variety of dynamical systems. The authors of \cite{Steinwart:16} use a slighty different definition of $\Phi_{\mathcal{C}}$-mixing coefficient, where the supremum is taken over the class of functions $\{f: \norm{f}_{\mathcal{C}} \leq 1 \}$.

Thus, dependency coefficients $\Phi_{\mathcal{C}}$ are characterized by the control over correlations between the past and one moment in the future of the process, for functions of bounded supremum norm from class $\mathcal{C}_{1}$. A fundamental result (\cite{Deschamps:06}, Lemma~1.1.2) claims that 
Definition~\ref{def:gen+phi_mixing} can be equivalently stated as  following:
\begin{mydef}[Equivalent to  Definition~\ref{def:gen+phi_mixing}]
	\label{def:gen_phi_mixing_alter}
	\begin{equation*}
	  \Phi_{\mathcal{C}}(k) = \sup \big\{ \norm{E[\varphi (X_{i+k})|\cM_{i}]-E[\varphi (X_{i+k})]}_{\infty}
          \mid \varphi \in \mathcal{C}_{1}, i\geq 1 \big\},
	\end{equation*}
\end{mydef}
where $\norm{\cdot}_{\infty}$ is the $L_{\infty}(\mathbb{P})$ norm.
In our theoretical analysis we will use Definition~\ref{def:gen_phi_mixing_alter} for processes which are assumed to be $\mathcal{C}-$mixing. We first describe some examples of semi-norms  ${C}$.

\begin{example}
  \label{example2.1}
  Let $\mathcal{C}_{\Lip}$ be the set of bounded Lipschitz functions over $\mathcal{X}$. Consider
  \begin{align*}
  	C_{\Lip}(f) := \norm{f}_{\Lip(\mathcal{X})} = \sup \bigg\{\frac{|f(s)-f(t)|}{\norm{s-t}} \biggm| s,t \in \mathcal{X}, s \neq t \bigg\}.
  \end{align*}
  It is easy to see that $C_{\Lip}(f)$ is a semi-norm.
  With this choice of class $\mathcal{C}$ and semi-norm $C(\cdot)$, we obtain the so-called
 $\tau-$mixing coefficients (see \cite{Dedecker:07} and  \cite{Wintenberger:10} for the real-valued case), which will be denoted $\tau(k):=\Phi_{\cC}(k), k\geq 1$.
\end{example}
 
  \paragraph{Examples of $\tau-$mixing sequences.}
   Consider a Banach-valued auto-regressive process of order $1$: 
  \begin{equation*}
  X_{i} = \rho (X_{i-1}) + \xi_{i}, \text{for } i \in \mathbb{Z},
  \end{equation*}
  where $(\xi_{i})_{i \in \mathbb{Z}}$ is an i.i.d. sequence such that $\norm{\xi}_{} \leq 1$ almost surely, and~$\rho:\mathcal{X} \mapsto \mathcal{X}$ is a linear operator with $\norm{\rho}_{\star} < 1$, where $\norm{\cdot}_{\star}$ is the operator norm. 
  Due to the linearity of $\rho$,  we can write $X_{t+s}=X_{t,s} + \rho^s (X_{t})$, where $X_{t,s} = \sum_{l=0}^{s-1}\rho^l(\xi_{t+s-l})$.
  For the $\tau$-mixing coefficients, by using this decomposition and the  independence
  $X_{t,s}$ and $X_{t}$, we get: 
  \begin{align*}
  \begin{aligned}
  \label{eq: tau_mixing_ar1}
  \tau(s) &=\sup_{f \in \mathcal{C}_1}\{\norm{E[f(X_{t+s})|\mathcal{M}_{t}] - E[f(X_{t+s})]}_{\infty}\} \\
  & = \sup_{f \in \mathcal{C}_1}\{\norm{E[f(X_{t,s} + \rho^s(X_{t}))|\mathcal{M}_{t}] - E[f(X_{t,s} + \rho^s(X_{t}))]}_{\infty}\} \\
  & = \sup_{f \in \mathcal{C}_1}\{\|E[f(X_{t,s} + \rho^s(X_{t}))- f(X_{t,s}) |\mathcal{M}_{t}] \\
    & \qquad \qquad - E[f(X_{t,s}+\rho^s (X_{t})) - f(X_{t,s})]\|_{\infty}\} \\
  & \leq 2 \norm{\rho^s ( X_{t})}_{\infty} \leq \norm{\rho}_{\star}^{s} \norm{X_{t}}_{\infty} \rightarrow 0,
  \end{aligned}
  \end{align*}
  when $s \rightarrow \infty$, as $X_{t}$ is almost surely bounded. From this we observe that  $(X_{t})_{t \geq 1}$ is exponentially $\tau-$mixing Banach-valued process.  Repeating arguments from \cite{Andrews:84}
  (in the real-valued case), one can show that this process is not always $\alpha-$mixing
  (in particular when $\xi_i$ has a discrete distribution).
  Similarly to the aforementioned argument, it is easy to check that a Hilbert-valued version of the moving-average process of finite order $q< \infty$: 
  \begin{align*}
	  W_{i} = \mu + \sum_{j=0}^{q}\theta_{i-j}\psi_{i-j}, \text{for } i \in \mathbb{Z},
  \end{align*}
  where $\paren{\psi_{j}}_{j \in \mathbb{Z}}$ is an independent and centered noise process
   and $\mu$ is some fixed element in a Hilbert space, is an exponentially $\tau-$mixing process. Furthermore, one can straightforwardly check that $\paren{W_{i}}_{i \in \mathbb{Z}}$ is not a martingale in general.
  
\textit{Remark.}  We observe that the $\tau-$mixing property of the process $\paren{X_{t}}_{t \geq 0}$ is preserved under a $1$-Lipschitz map. More precisely, let $\phi: \mathcal{X} \mapsto \mathcal{H}$ be a $1-$Lipschitz mapping  of the original process  $\paren{X_{t}}_{t \geq 0}$ to some Polish space $\paren{\mathcal{H},\norm{\cdot}_{\mathcal{H}}}$. Then, it is straightforward to check that the process $\paren{\phi\paren{X_{t}}}_{t\geq 0}$ is again $\tau-$mixing. This conservation property is due to the definition of $\tau$-mixing. The concentration inequality of Theorem \ref{thm:main_result_gen} will allow us in Section~\ref{sec:application} to obtain qualitative results about the statistical properties (error bounds) of the estimators of regression function in 
a reproducing kernel Hilbert space. The key idea here is that the estimators of the target function are based on a non-linear but Lipschitz mapping of the corresponding training data sequence into the Hilbert space.

\begin{example}
  \label{example2.2}
Assume $\mathcal{X} \subset \mathbb{R}$ to be an interval on the real line, let $\mathcal{C}_{\BV}:=\BV(\mathcal{X})$ be the set of functions over $\mathcal{X}$ whose total variation is bounded and $C_{\BV}(\cdot)$ be the total variation seminorm:
\begin{align*}
C_{\BV}(f) := \norm{f}_{\mathrm{TV}} = \sup_{(x_{0},\ldots,x_{n}) \in \triangle } \sum_{i=1}^{n}\abs{f(x_{i}) - f(x_{i-1})},
\end{align*}
where $\triangle = \{(x_{0},x_{1},\ldots,x_{n}) \in \mathcal{X}^{n}\mid x_{0} < x_{1} < \ldots < x_{n}\}$. It is known that $\BV(\mathcal{X})$ endowed with the norm $\norm{f}_{\BV} = \norm{f} + C_{\BV}(f)$ is a Banach space. With this choice of $(\mathcal{C},C(\cdot))$ we obtain the
so-called $\tilde{\phi}$-mixing processes, described in \cite{Rio:96}. 
\end{example}




\section{Main assumptions and results}

\label{sec:main_results}

\subsection{Assumptions}

Following \cite{Pinelis:92}, we introduce suitable hypotheses pertaining to the geometry of the underlying Banach space $(\mathcal{B}, \norm{\cdot})$, the distribution of the norm of coordinates $\norm{X_{i}}$, and additional conditions on the considered $C(\cdot)$-semi-norm.

We recall briefly the concept of G\^{a}teaux derivative: for a real-valued function
$f : \mathcal{X} \rightarrow \mathbb{R}$ we say that $f$ is \textit{G\^{a}teaux differentiable} at point $x \in int{(\mathcal{X})}$ in the direction $v \in \mathcal{B}$, if $t\mapsto f(x+tv)$ is
differentiable in $0$. We then denote
$$ 
\delta_{v}f(x) =\left. \frac{d}{dt}\right\vert_{t=0}f(x+tv).
$$
We say that the function $f$ is\textit{ G\^{a}teaux-differentiable} at point $x$ if all the directional derivatives exist and form a bounded linear functional, i.e. an element $D_xf$ in
the dual $\cB^*$ such that $\forall v \in \mathcal{B}$:
$$ 
\lim_{t\rightarrow0}\frac{f(x+tv) - f(x)}{t} = \inner{D_{x}f,v}.
$$ 
In this case $D_{x}f$ is called \textit{G\^{a}teaux derivative} of function $f$ at point $x$. 

{\bf Assumption A1.}
The norm $\norm{\cdot}$ in the Banach space $\mathcal{B}$ is twice Gâteaux differentiable
at every nonzero point in all directions and there exist constants $A_1\geq 1,A_2>0$
such that the following conditions are fulfilled for all $x,v \in \mathcal{B},x \neq 0 $:
\begin{align*}
& |\delta_{v}(\norm{x})| \leq A_{1}\norm{v}, \text{ or equivalently }
\norm[1]{\paren{D_{x}\norm{\cdot}}}_{\star} \leq A_{1};  \\
& |\delta_{v,v}(\norm{x})| \leq A_{2}\frac{\norm{v}^{2}}{\norm{x}},
\end{align*} 
where $\delta_{v,v}$ denotes the second G\^{a}teaux differential in the direction $v$ and $\norm{\cdot}_{\star}$ is
the norm in the dual space $\mathcal{B}^\star$.

We recall  the following examples of Banach spaces that fulfill the desired properties  (see \cite{Pinelis:92}): 

\begin{example}
  \label{example3.1}
Let $\mathcal{B} = \mathbb{H}$ be a separable infinite dimensional Hilbert space with scalar product $\langle \cdot, \cdot \rangle_{\mathbb{H}}$ and  norm $\norm{\cdot}_{\mathbb{H}}$. Then by the Cauchy-Schwartz inequality, it holds: 
\[
\delta_{g}(\norm{f}_{\mathbb{H}}) = \left.\frac{d}{dt}\paren{\sqrt{\langle f+tg,f+tg\rangle}}\right\vert_{t=0}\leq \norm{g}_{\mathbb{H}},
\]
and also 
\[
\delta_{g,g}(\norm{f}_{\mathbb{H}}) = \left. \frac{d}{dt} \paren{\frac{\inner{ f,g} + t\norm{g}_{\mathbb{H}}^2}{\norm{f+tg}_{\mathbb{H}}} }\right\vert_{t=0}\leq \frac{\norm{g}_{\mathbb{H}}^2}{\norm{f}_{\mathbb{H}}} ,
\]
hence $\mathbb{H}$ satisfies Assumption {\bf A1} with constants $A_{1} = A_{2} =1$.
\end{example}

\begin{example}
  \label{example3.2}
Let $\mathcal{B} = L_{p}(\Omega,\mathcal{F},\mathbb{P}), p \geq 2$. Then for any
$ f,g \in \mathcal{B}$ such that $f \neq 0$, it holds: 
\begin{align*}
\begin{aligned}
\delta_{g}(\norm{f}_{p}) &= \frac{d}{dt}\paren[3]{\paren[3]{\int|f + tg|^{p}d\mathbb{P}}^{\frac{1}{p}}}\bigg|_{t=0} = \norm{f}_{p}^{1-p} \int |f|^{p-2} fg d\mathbb{P} \\
& \leq \norm{f}_{p}^{1-p} \norm{f}_{p}^{p-1}\norm{g}_{p} = \norm{g}_{p},
\end{aligned}
\end{align*}
because of H\"older's inequality; similarly: 
\begin{align*}
\begin{aligned}
\delta_{g,g}(\norm{f}_{p}) &= (p-1)\norm{f}_{p}^{1-2p} \paren{\norm{f}_{p}\int |f|^{p-2}g^2 d\mathbb{P} - \paren{\int |f|^{p-2}fg d\mathbb{P}}^2 } \\
& \leq (p-1)\norm{f}_{p}^{1-2p} \paren{\norm{f}_{p}\int |f|^{p-2}g^2 d\mathbb{P}} \leq (p-1)\norm{g}^{2}_{p}\norm{f}_{p}^{-1}.
\end{aligned}
\end{align*}
Thus for $p\geq 2$ an $L_{p}$-space satisfies conditions of Assumption {\bf A1} with constants $A_{1} = 1$, $A_{2} = p-1$.
\end{example}
The next example belonging to random matrix theory was not present in \cite{Pinelis:92} and is apparently new.
It can be given as a relevant application in its own right of the results of
\cite{Pinelis:92} (in the independent case) as well as of the present
results.
\begin{example}
	\label{ex:shatten}
	Let $p\geq 2$ be fixed and $\mathcal{B}$ be the space of real symmetric matrices of dimension $d$ equipped with  the Schatten $p-$norm $\norm{X}_{{p}} =  \paren{\tr\paren{\abs{X}^{p}}}^{\frac{1}{p}}= \paren[1]{\sum_{i=1}^{d}\abs{\lambda_{i}\paren{X}}^{p}}^{\frac{1}{p}}$.
        Then it holds that for any elements $X,H \in \mathcal{B}$, $X\neq \mathbf{0}$: 
	\begin{align*}
		\delta_{H}\paren[1]{\norm{X}_{{p}}} &\leq \norm{H}_{{p}}, \\
		\delta_{H,H}\paren[1]{\norm{X}_{{p}}} & \leq  3\paren{p-1}\frac{\norm{H}_{{p}}^{2}}{\norm{X}_{{p}}},
	\end{align*}
	so  the conditions of Assumption \textbf{A1} are satisfied with constants $A_{1}=1$ and $A_{2} = 3(p-1)$ (for a detailed justification, see Appendix~\ref{app:schatten}).
\end{example}
The conditions in Assumption {\bf A2} are common in the framework of Bernstein-type inequalities.

\textbf{Assumption A2.} There exist positive real constants $c,\sigma^2$ so that
for all $i \in \mathbb{N}$:
\begin{align*}
 \norm{X_{i}} \leq c,& \; \; \mathbb{P}\text{-almost surely};\\
 \e[1]{ \norm{X_{i}}^{2}} &\leq \sigma^2.
\end{align*} 


Finally, throughout this work, being in the framework of the general Definition~\ref{def:gen_phi_mixing_alter}, we will consider functional classes $\mathcal{C}$ with a semi-norm $C(\cdot)$ satisfying the following assumption.

\textbf{Assumption A3.}
Let, as it was assumed before, $C(f)$ be a semi-norm defined on a subspace $(\mathcal{C},\norm{\cdot}_{\mathcal{C}})$ of real bounded functions $\{f: \mathcal{X} \mapsto \mathbb{R}\}$. For each $s \in \mathcal{B}^{\star}$ define $h_{1,s} : x \mapsto \inner{s,x}$ for each $s \in \mathcal{B}^{\star}$ and $h_{2}: x \mapsto \norm{x}^{2}$, where $\mathcal{B}^{\star}$ is the dual space of $\mathcal{B}$. Define $B(r)$, $B^{\star}(r)$ to be the 
closed
balls of radius $r$ centered in zero in $\mathcal{B}$ and $\mathcal{B}^{\star}$, respectively. 

It is assumed that $h_{1,s} \in \mathcal{C} \text{ for all } s \in \mathcal{B}^{\star}; h_{2} \in \mathcal{C}$, and: 
\begin{align*}
	\sup_{s \in B^{\star}(1)} C(h_{1,s})& \leq C_{1}, \\
	C(h_{2}) & \leq C_{2},
\end{align*}
for some fixed constants $C_{1},C_{2} \in \mathbb{R}_{+}$.

{\bf Example~\ref{example2.1} (continued).} 
For the Lipschitz class $\mathcal{C}_{\Lip}$ considered in Example~\ref{example2.1} we have:
    \[
    \sup_{s \in B^{\star}(1)}C_{\Lip}(h_{1,s}) = 	\sup_{s \in B^{\star}(1)}\norm{h_{1,s}}_{\Lip({B}(c))} = \sup_{\substack{s \in B^{\star}(1) \\ x_{1},x_{2} \in B(c)}}\bigg\{ \frac{ \inner{s, x_{1}-x_{2}}}{\norm{x_{1}-x_{2}}}\bigg\} \leq 1,
\]
and 
\[
C_{\Lip}(h_{2}) = \norm{h_{2}}_{\Lip({B}(c))} = \sup_{x_{1},x_{2} \in B(c)}\bigg\{ \frac{\abs[1]{\norm{x_{1}}^{2}-\norm{x_{2}}^{2}}}{\norm{x_{1}-x_{2}}}\bigg\}
\leq 2c.
\]

\textbf{Example~\ref{example2.2} (continued).} For the $\BV$ functional class $\cC_{\BV}$ considered
in Example~\ref{example2.2}, and $\cX = [-c,c] \subset \mathbb{R}$ we get
(note that in this case $B^{\star}(1)=[-1,1]$ and the functional $h_{1,s}$ is just multiplication by $s$):
\begin{align*}
\sup_{s \in B^{\star}(1)}C_{\BV}(h_{1,s}) &= \sup_{\abs{s} \leq 1}\norm{h_{1,s}}_{ \BV(B(c)) } =  \sup_{\abs{s}\leq 1}\sup_{(x_{0},\ldots,x_{n}) \in \triangle } \sum_{i=1}^{n}\abs{s(x_{i}-x_{i-1})} = 2c.
\end{align*} 
\[
C_{\BV}(h_{2}) = \norm{h_{2}}_{ \BV({B}(c))} =  \sup_{(x_{0},\ldots,x_{n}) \in \triangle } \sum_{i=1}^{n}\abs{x_{i}^{2}-x_{i-1}^{2}} = 2c^{2}.
\]
\subsection{Main result and corollaries}

Our main result is a Bernstein-type inequality for norms of sums of bounded Banach-valued random variables which are generated by some 
centered $\Phi_{\mathcal{C}}-$mixing process. 
We begin with a general bound on the deviations of the norm of $\sum_{i=1}^{n}X_{i}$.
\begin{thm}
  \label{thm:general_result_all}
  Let $\paren{\Omega,\mathcal{F},\mathbb{P}}$ be an arbitrary probability space,  $\paren{\mathcal{B},\norm{\cdot}}$ a Banach space such that Assumption~{\bf A1} holds and 
$\mathcal{X}=B(c)$.
 Let $(X_{i})_{i\geq 1}^{n}$ be
an  $\mathcal{X}$-valued, centered, 
  $\mathcal{C}$-mixing random process on $\paren{\Omega,\mathcal{F},\mathbb{P}}$ such that Assumptions~\textbf{A2,A3} are satisfied. Then for each pair of positive integers $ (\ell,k),$ $\ell\geq 2$, such that $n = \ell k+r, r\in \{0,\cdots ,k-1\}$, and any $\nu >0$, it holds:
\begin{align}
  \begin{aligned}
    \prob{\norm[3]{\frac{1}{n}\sum_{i=1}^{n}X_{i}} \geq 4A_{1}C_{1}\Phi_\mathcal{C}\paren{k} + 4 \sqrt{\frac{B\paren{\sigma^{2} + C_{2}\Phi_{\mathcal{C}}\paren{k}}\nu}{\ell}} + \frac{4c\nu}{3\ell}} \leq 2\exp\paren{-\nu},
  \end{aligned}
\end{align}
	where $B = A_{1}^{2}+ A_{2}$ and the constants $A_{1},A_{2},C_{1},C_{2}$ are given
        by the assumptions.
\end{thm}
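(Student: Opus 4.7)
The plan is to combine a subsequence decomposition of $S_n$ (to expose the lag-$k$ mixing structure), an iterated Pinelis-type smoothing of the Banach norm within each subsequence (to obtain a Bernstein-type MGF bound), and an AM--GM inequality at the end (to reassemble the $k$ subsequences into a single Chernoff bound on $\|S_n\|$ without paying a union-bound cost).

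\textbf{Subsequence decomposition.} Using $n = \ell k + r$, for each $j \in \{1,\ldots,k\}$ define the $j$-th subsequence by $Y_s^{(j)} := X_{j+sk}$, $s = 0, \ldots, \ell_j - 1$, with $\ell_j = \ell + 1$ if $j \leq r$ and $\ell_j = \ell$ otherwise, so that $\sum_j \ell_j = n$. Set $T_j := \sum_{s=0}^{\ell_j - 1} Y_s^{(j)}$; then $S_n = \sum_{j=1}^k T_j$ and by the triangle inequality $\|S_n\| \leq \sum_{j=1}^k \|T_j\|$. The crucial feature is that consecutive $Y_s^{(j)}, Y_{s+1}^{(j)}$ are separated by exactly $k$ in the original process, so the mixing coefficient between them is $\Phi_{\mathcal{C}}(k)$.

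\textbf{Per-subsequence Bernstein MGF (main technical step).} For each $j$, establish a bound of the form
\[
\mathbb{E}\bigl[\exp(\mu \|T_j\|)\bigr] \leq \exp\!\left(\mu \ell_j A_1 C_1 \Phi_{\mathcal{C}}(k) + \frac{\ell_j \mu^2 B (\sigma^2 + C_2 \Phi_{\mathcal{C}}(k))}{2(1 - \mu c/3)}\right),\qquad 0 < \mu < 3/c,
\]
by iterating a Pinelis-style second-order expansion of the norm, guaranteed by Assumption~\textbf{A1}:
\[
\|T_{j,s-1} + Y_s^{(j)}\| \leq \|T_{j,s-1}\| + \delta_{Y_s^{(j)}}\|T_{j,s-1}\| + \frac{A_2}{2}\frac{\|Y_s^{(j)}\|^2}{\|T_{j,s-1}\|},
\]
combined with a convex smoothing of $\|\cdot\|$ (to absorb the singularity at zero) and producing a multiplicative recursion for the conditional MGF of $\|T_{j,s}\|$ given $\mathcal{M}_{j+(s-1)k}$. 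The $\mathcal{C}$-mixing inequality (Definition~\ref{def:gen_phi_mixing_alter}) together with Assumption~\textbf{A3} supplies the two required conditional controls. Applying mixing to $h_{1,s}$ with $\sup_{\|s\|_\star \leq 1} C(h_{1,s}) \leq C_1$, and using centering of the $X_i$, yields $\|\mathbb{E}[Y_s^{(j)} \mid \mathcal{M}_{j+(s-1)k}]\| \leq C_1 \Phi_{\mathcal{C}}(k)$ almost surely (after passing to a countable dense subset of the dual unit ball to handle measurability); the linear term of the expansion then contributes at most $A_1 C_1 \Phi_{\mathcal{C}}(k)$ per step. Applying mixing to $h_2 = \|\cdot\|^2$ with $C(h_2) \leq C_2$ bounds the conditional second moment by $\sigma^2 + C_2 \Phi_{\mathcal{C}}(k)$, which feeds the variance contribution of the Pinelis estimate. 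Iterating $\ell_j$ steps produces the claimed MGF.

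\textbf{Reassembly via AM--GM, Chernoff, and main obstacle.} To combine the $k$ dependent subsequence bounds without a $\log k$ union-bound loss, I apply the inequality $\prod_j x_j \leq k^{-1} \sum_j x_j^k$ (a consequence of $\mathrm{GM} \leq \mathrm{AM}$ applied to $x_j^k$) with $x_j = \exp(\lambda \|T_j\|)$:
\[
\mathbb{E}\bigl[\exp(\lambda \|S_n\|)\bigr] \leq \mathbb{E}\Bigl[\prod_j \exp(\lambda \|T_j\|)\Bigr] \leq \frac{1}{k}\sum_j \mathbb{E}\bigl[\exp(\lambda k \|T_j\|)\bigr] \leq \max_j \mathbb{E}\bigl[\exp(\lambda k \|T_j\|)\bigr].
\]
Setting $\mu = \lambda k$ in the per-subsequence bound and using $k\ell_j \leq n + k$ produces a Bernstein-type MGF for $\|S_n\|$ whose effective sample size is $\ell$; Markov's inequality and the standard Bernstein optimization over $\lambda$ then yield the stated deviation bound, the prefactor $2$ in $2\exp(-\nu)$ arising from the symmetric ($\cosh$-type) convex smoothing used in the previous step. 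The main obstacle is precisely that per-subsequence step: carrying out the Pinelis iteration so that the linear bias (controlled by $h_{1,\cdot}$) and the variance inflation (controlled by $h_2$) are simultaneously absorbed into a clean Bernstein MGF without degrading either the $\sqrt{1/\ell}$ variance scaling or the $c\nu/\ell$ Bernstein correction, which requires the delicate interplay between the norm-smoothness Assumption~\textbf{A1} and the functional-mixing Assumption~\textbf{A3}.
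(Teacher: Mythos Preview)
Your proposal is correct and follows essentially the same route as the paper: the lag-$k$ subsequence blocking, the Pinelis-style $\cosh$ iteration along each block with the linear and quadratic conditional terms controlled via Assumption~\textbf{A3} and the mixing coefficients, and then convexity (your AM--GM with equal weights, the paper's Jensen with block-size weights $r_j=|I_j|/n$) plus Chernoff and the standard Bernstein optimization in $\lambda$. One caution: the displayed pointwise inequality $\|T+Y\|\le\|T\|+\delta_Y\|T\|+\tfrac{A_2}{2}\|Y\|^2/\|T\|$ is not actually valid (the Taylor remainder of $t\mapsto\|T+tY\|$ carries $\|T+tY\|$, not $\|T\|$, in the denominator); the paper---exactly as your ``convex smoothing'' remark anticipates---expands $t\mapsto\cosh(\lambda\|T+tY\|)$ directly, which simultaneously removes the singularity at $0$ and produces the clean multiplicative recursion, so your plan is sound once that step is written correctly.
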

Since the choice of $k$ and $\ell$ in the above result is free subject to $k=\big\lfloor \frac{n}{\ell} \big\rfloor$, one can optimize the obtained deviation bound over the choice of
$\ell$ in order to reach the most favorable trade-off between the first term
of order $\Phi_{\mathcal{C}}(\big\lfloor{\frac{n}{\ell}}\big\rfloor)$ which is nondecreasing in $\ell$, and the following "Bernstein-like" terms.
This trade-off is a direct consequence of the so-called blocking technique used
in the proof of the above result: the sample is divided into $k$ blocks of size $\ell$ or $\ell+1$, such that the distances between two neighbor points in a
same block is exactly $k$. The Bernstein-like deviation terms are similar
to the ones found in the i.i.d. case, but with the total sample size $n$
replaced by the block size $\ell$. The terms involving $\Phi_{\cC}$ reflect
the lack of independence inside a block.
This trade-off leads us to the notion of \textit{effective sample size}. For a given $n$ and constants $c,\sigma^{2}$ we define the positive integer number $\ell^{\star}$:
\begin{equation}
\label{df:effective_sample_size}
\ell^{\star} := \max\set[2]{1 \leq \ell \leq n \text{ s.t. } C_1 \Phi_{\mathcal{C}}\paren{\Big\lfloor{\frac{n}{\ell}}\Big\rfloor} \leq \frac{c}{\ell}\vee \frac{\sigma}{\sqrt{\ell}} } \cup \{1\}.
\end{equation}  

Observe that $\ell^\star$ is a function of $n$, but we omit this dependence to simplify notation. The following consequence of Theorem~\ref{thm:general_result_all} is formulated in terms of the \textit{effective sample size}:
\begin{thm}
\label{thm:main_result_gen}
Assume the conditions of Theorem \ref{thm:general_result_all} are satisfied, and 
the effective sample size $\ell^{\star}$ is as given by~\eqref{df:effective_sample_size}.
Then for any $\nu \geq 1$:
\begin{equation}
\label{eq:gen_res02}
	\prob[3]{\norm[2]{\frac{1}{n}\sum_{i=1}^{n}X_{i}} \geq {\frac{\sigma  (4 A_1 + 6\sqrt{B} \sqrt{\nu})}{\sqrt{\ell^{\star}}} + \frac{c(4A_1+ M_1 \nu)}{\ell^{\star}}} } \leq 2\exp(-\nu),
\end{equation}
where 
$M_{1}:= 2+ 2\sqrt{B}(1 + 2 \frac{C_{2}}{C_1 c})$. 

\end{thm}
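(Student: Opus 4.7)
The proof is a direct specialization of Theorem~\ref{thm:general_result_all} to the choice $\ell = \ell^\star$ and $k = \lfloor n/\ell^\star\rfloor$, followed by careful accounting. By the defining property \eqref{df:effective_sample_size}, one has $C_1\Phi_{\mathcal{C}}(k) \leq (c/\ell^\star)\vee(\sigma/\sqrt{\ell^\star}) \leq c/\ell^\star + \sigma/\sqrt{\ell^\star}$. The bound from Theorem~\ref{thm:general_result_all} has three summands, which I would handle as follows. The first, $4A_1 C_1\Phi_{\mathcal{C}}(k)$, is bounded directly by $4A_1 c/\ell^\star + 4A_1\sigma/\sqrt{\ell^\star}$ and supplies the "$4A_1$" appearing in both parts of the target estimate. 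The third, $4c\nu/(3\ell^\star)$, is at most $2c\nu/\ell^\star$ and supplies the constant $2$ in the definition of $M_1$.

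The real work lies in the middle Bernstein-type summand. I would first split via $\sqrt{a+b}\leq \sqrt{a}+\sqrt{b}$:
\begin{equation*}
4\sqrt{\frac{B(\sigma^2+C_2\Phi_{\mathcal{C}}(k))\nu}{\ell^\star}} \;\leq\; \frac{4\sigma\sqrt{B\nu}}{\sqrt{\ell^\star}} + 4\sqrt{\frac{BC_2\Phi_{\mathcal{C}}(k)\nu}{\ell^\star}}.
\end{equation*}
The first piece contributes $4\sqrt{B\nu}$ to the coefficient of $\sigma/\sqrt{\ell^\star}$. For the second, inject the bound $C_2\Phi_{\mathcal{C}}(k)\leq (C_2/C_1)(c/\ell^\star+\sigma/\sqrt{\ell^\star})$ and again split the square root, obtaining two terms proportional to $\sqrt{c}$ and $\sqrt{\sigma}$ respectively.

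The key algebraic manipulation is to apply the AM-GM inequality $2\sqrt{xy}\leq x+y$ to each of these remaining terms with a \emph{deliberate} $\sqrt{B}$-symmetric factoring: for the $c$-term, write the radicand as $(\sqrt{B}c\nu)\cdot(\sqrt{B}C_2/C_1)$; for the $\sigma$-term, write it as $(\sqrt{B}C_2\nu/(C_1\ell^\star))\cdot(\sqrt{B}\sigma/\sqrt{\ell^\star})$. This splitting (rather than the naive $x=\nu,\,y=BC_2\Phi/\ell^\star$) is what produces a coefficient proportional to $\sqrt{B}$ instead of $B$ in $M_1$, and is the only non-obvious step. It yields, respectively,
\begin{equation*}
\frac{4}{\ell^\star}\sqrt{\tfrac{BC_2 c\nu}{C_1}} \leq \frac{2\sqrt{B}c\nu}{\ell^\star} + \frac{2\sqrt{B}C_2}{C_1\ell^\star},\qquad \frac{4\sqrt{BC_2\sigma\nu/C_1}}{\ell^{\star 3/4}} \leq \frac{2\sqrt{B}C_2\nu}{C_1\ell^\star} + \frac{2\sqrt{B}\sigma}{\sqrt{\ell^\star}}.
\end{equation*}

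Collecting everything and using $\nu\geq 1$ (to absorb $\nu$-free terms like $2\sqrt{B}C_2/(C_1\ell^\star)$ into $c\nu/\ell^\star$ and $2\sqrt{B}\sigma/\sqrt{\ell^\star}$ into $2\sqrt{B\nu}\sigma/\sqrt{\ell^\star}$), the coefficient of $\sigma/\sqrt{\ell^\star}$ becomes $4A_1 + 4\sqrt{B\nu} + 2\sqrt{B} \leq 4A_1 + 6\sqrt{B}\sqrt{\nu}$, while the coefficient of $c\nu/\ell^\star$ becomes
\begin{equation*}
2 + 2\sqrt{B}\paren{1 + \tfrac{C_2}{C_1 c}} + \tfrac{2\sqrt{B}C_2}{C_1 c} \;=\; 2 + 2\sqrt{B}\paren{1 + \tfrac{2C_2}{C_1 c}} \;=\; M_1,
\end{equation*}
which matches the target statement. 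The whole argument is a sequence of elementary inequalities; the only delicate point—the main obstacle—is recognizing the right factoring in the AM-GM step so that the constants come out exactly as in $M_1$ rather than with a strictly larger factor $B$.
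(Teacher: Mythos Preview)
Your proof is correct and follows essentially the same route as the paper: apply Theorem~\ref{thm:general_result_all} with $\ell=\ell^\star$, bound $C_1\Phi_{\mathcal{C}}(k)$ via the defining property of $\ell^\star$, split square roots, apply AM--GM, collect, and use $\nu\ge 1$. Two small remarks. First, the paper's AM--GM step is slightly simpler than yours: it factors out $\sqrt{B\nu}$ once and then applies $\sqrt{ab}\le(a+b)/2$ to $\sqrt{C_* c}$ and to $\sqrt{(C_*/\ell^\star)(\sigma/\sqrt{\ell^\star})}$, which already yields the $\sqrt{B}$ (not $B$) dependence without the ``$\sqrt{B}$-symmetric'' factoring you present as essential. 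Second, Theorem~\ref{thm:general_result_all} requires $\ell\ge 2$, so your argument only covers $\ell^\star\ge 2$; the paper notes separately that for $\ell^\star=1$ the claimed bound holds trivially because $\norm{n^{-1}\sum X_i}\le c\le 4A_1 c/\ell^\star$ (using $A_1\ge 1$).
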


\textit{Remark.} Lest the reader should wonder at the apparent lack of multiplicative scaling invariance
  of the last result due to the constant $C_2/(C_1c)$ appearing in $M_1$, we stress that the $\cC$-mixing
  assumption is not invariant with respect to rescaling of the value space in general.
  However, in the particular cases of $\tau$- and $\tilde{\phi}$-mixing (Examples~\ref{example2.1}, \ref{example2.2}), the mixing assumption behaves gracefully with respect to scaling: in both cases it can be checked that the compound quantity $C_1 \Phi_{\cC}(.)$ scales linearly with multiplicative rescaling of the space $\cX$,
  so that the effective sample size $\ell^*$ given by~\eqref{df:effective_sample_size} remains invariant,
  while $C_2/(C_1 c)$ remains constant, so that the deviation inequality \eqref{eq:gen_res02} is
  unchanged by multiplicative rescaling, as one would expect.

Furthermore, we can give more explicit rates by lower bounding the effective sample size
in the specific cases of exponentially or polynomially $\mathcal{C}-$mixing processes.

\begin{proposition}
\label{thm:eff_sample_sizes}
For an exponentially $\mathcal{C}-$mixing centered process on $(\Omega,\mathcal{F}, \mathbb{P})$ with rate $\Phi_{\mathcal{C}}(k) :=  \chi\exp(-(\theta k)^{\gamma})$ ($\chi > 0, \theta >0, \gamma >0$), the effective sample size satisfies
\[
\ell^\star \geq 
\Big\lfloor \frac{n}{2}\theta \paren{1 \vee \log \paren{{c^{-1} C_1 \chi \theta} {n}}}^{-{\frac{1}{\gamma}}}\Big\rfloor.\]
%

For a  polynomially $\mathcal{C}-$mixing centered process with rate
 $\Phi_{\cC}(k)= \rho k^{-\gamma}$, the effective sample size satisfies
\[
\ell^\star \geq 
\max \paren[3]{\bigg\lfloor \paren[2]{\frac{\sigma}{C_1 \rho}}^{\frac{2}{2\gamma +1}}\paren[2]{\frac{n}{2}}^{\frac{2\gamma}{2\gamma +1}}\bigg\rfloor,\bigg\lfloor { \paren[2]{\frac{c}{C_1 \rho}}^{\frac{1}{\gamma +1}} \paren[2]{\frac{n}{2}}^{\frac{\gamma}{\gamma +1}}} \bigg\rfloor}.\]
\end{proposition}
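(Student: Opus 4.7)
The plan is a direct substitution argument: in each decay regime I would exhibit an explicit candidate value $\tilde\ell$, verify that it satisfies the defining constraint of $\ell^{\star}$ in~\eqref{df:effective_sample_size}, and conclude $\ell^{\star} \geq \tilde\ell$. The key elementary step throughout is the bound $\lfloor n/\ell\rfloor \geq n/(2\ell)$ (valid whenever $\ell \leq n/2$), which lets me replace the floor by a clean algebraic expression at the cost of a harmless factor of two absorbed into the candidate.

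For the exponentially mixing case, I would aim at the sufficient condition $C_{1}\chi\exp(-(\theta\lfloor n/\ell\rfloor)^{\gamma}) \leq c/\ell$, which after taking logarithms and applying the floor bound reduces to $(\theta n/(2\ell))^{\gamma} \geq \log(C_{1}\chi\ell/c)$. Setting $\tilde\ell := \lfloor \tfrac{n\theta}{2}L^{-1/\gamma}\rfloor$ with $L := 1 \vee \log(c^{-1}C_{1}\chi\theta n)$ makes the left-hand side equal to $L$; meanwhile $\tilde\ell \leq n\theta/2$ implies $\log(C_{1}\chi\tilde\ell/c) \leq \log(c^{-1}C_{1}\chi\theta n) \leq L$, so the constraint is verified. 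Note that only the $c/\ell$ branch of the definition is needed, since exponential decay dominates.

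For the polynomially mixing case, the sufficient condition $C_{1}\rho\lfloor n/\ell\rfloor^{-\gamma} \leq c/\ell \vee \sigma/\sqrt\ell$ splits into two independent branches. Applying the same floor bound, each branch reduces to an elementary power inequality for $\ell$ whose solution yields, respectively, $\tilde\ell_{1} := \lfloor (\sigma/(C_{1}\rho))^{2/(2\gamma+1)}(n/2)^{2\gamma/(2\gamma+1)}\rfloor$ and $\tilde\ell_{2} := \lfloor (c/(C_{1}\rho))^{1/(\gamma+1)}(n/2)^{\gamma/(\gamma+1)}\rfloor$. Since $\ell^{\star}$ is the maximum over feasible $\ell$, it dominates both candidates, producing exactly the $\max$ stated in the proposition.

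I expect no conceptual difficulty: the main obstacle is the careful handling of the floor approximation and of degenerate boundary regimes (e.g.\ where the candidate expression drops below $1$ or exceeds $n/2$). These pathological cases are either absorbed by the trivial inclusion $\{1\}$ in the definition of $\ell^{\star}$ or correspond to parameter regimes where mixing is so fast that the stated bound is trivially satisfied by any feasible $\ell \leq n$.
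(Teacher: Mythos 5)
Your proposal is correct and follows essentially the same route as the paper: both exhibit an explicit candidate $\tilde\ell$, replace $\lfloor n/\ell\rfloor$ by $n/(2\ell)$ to remove the floor, verify the defining inequality of $\ell^{\star}$ (using only the $c/\ell$ branch in the exponential case and both branches separately in the polynomial case), and conclude by maximality of $\ell^{\star}$. The handling of degenerate regimes via the $\{1\}$ clause is also consistent with the paper's treatment.
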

%

In the application section, we will use the obtained concentration framework for sums of
Hilbert-space valued random variables. 
In this particular case, we have $A_{1}=1, A_{2} = 1$ and correspondingly $B=2$. Considering the case where the underlying data generating process is $\tau-$mixing (see Example~\ref{example2.1}) we get $C_{1} = 1$ and $C_{2} = 2c$. 
This gives us the following consequence for the concentration of the norm in the case of a process that satisfies  the $\tau-$mixing conditions mentioned in \textbf{Example~\ref{example2.1}.} 

\begin{cor}[Concentration result for Hilbert-valued $\tau-$mixing processes]
	\label{cor:tau_mix}
	Under the assumptions of Theorem~\ref{thm:main_result_gen} with a Hilbert-valued $\tau-$mixing sample $\{X_{i}\}_{i=1}^{n}$, 
	for any $0 \leq \eta \leq \frac{1}{2}$, with probability at least $1 - \eta$ it holds:
	\begin{equation}
	\norm[3]{\frac{1}{n}\sum_{i=1}^{n}X_{i}} \leq \log\paren{\frac{2}{\eta}} \paren{\frac{13\sigma }{\sqrt{\ell^{\star}}} + \frac{21c}{\ell^{\star}}},
	\end{equation}
	where the choice of $\ell^{\star}$ is given by \eqref{df:effective_sample_size}.
\end{cor}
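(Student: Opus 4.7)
The plan is essentially to specialize Theorem~\ref{thm:main_result_gen} to the Hilbert-valued $\tau$-mixing setting and then absorb the resulting numerical constants into the cleaner form displayed in the corollary. First, I would collect the constants that appear in the hypotheses. From Example~\ref{example3.1}, any separable Hilbert space satisfies Assumption~\textbf{A1} with $A_1 = A_2 = 1$, so the constant $B = A_1^2 + A_2$ appearing in Theorem~\ref{thm:main_result_gen} equals $2$. From Example~\ref{example2.1} (continued) applied with $\cX = B(c)$, Assumption~\textbf{A3} holds for the Lipschitz class with $C_1 = 1$ and $C_2 = 2c$. Consequently,
\[
M_1 = 2 + 2\sqrt{B}\paren[1]{1 + 2\tfrac{C_2}{C_1 c}} = 2 + 2\sqrt{2}\,(1 + 4) = 2 + 10\sqrt{2}.
\]

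Next I would invoke Theorem~\ref{thm:main_result_gen} with $\nu = \log(2/\eta)$, which produces a deviation bound with probability at least $1 - 2\exp(-\nu) = 1 - \eta$, of the form
\[
\norm[3]{\tfrac{1}{n}\sum_{i=1}^{n} X_i}
\leq \frac{\sigma\,(4 + 6\sqrt{2}\,\sqrt{\nu})}{\sqrt{\ell^\star}}
+ \frac{c\,(4 + (2 + 10\sqrt{2})\,\nu)}{\ell^\star}.
\]
The only remaining task is numerical bookkeeping: since $\eta \leq 1/2$, we have $\nu \geq \log 2 > 0$, and in fact $\nu \geq 1$ (using $\eta \leq 2/e$, which holds because $\eta \leq 1/2 < 2/e$); hence $1 \leq \sqrt{\nu} \leq \nu$. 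This allows me to bound both bracketed factors linearly in $\nu$:
\[
4 + 6\sqrt{2}\,\sqrt{\nu} \leq (4 + 6\sqrt{2})\,\nu \leq 13\,\nu,
\qquad
4 + (2+10\sqrt{2})\,\nu \leq (6 + 10\sqrt{2})\,\nu \leq 21\,\nu,
\]
since $6\sqrt{2} < 9$ and $10\sqrt{2} < 15$. Substituting these two numerical bounds gives exactly the stated inequality with the factor $\log(2/\eta)$ pulled out.

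I do not foresee a genuine obstacle in this argument: all the real work is already done inside Theorem~\ref{thm:main_result_gen}. The only mildly delicate point is justifying that one can bound $\sqrt{\nu}$ by $\nu$ (hence replace the mixed $\sqrt{\nu}$/$\nu$ dependence by a common $\log(2/\eta)$ factor); this is where the restriction $\eta \leq 1/2$ enters and guarantees $\nu \geq 1$. Otherwise the proof is a direct substitution of the constants specific to a Hilbert space and to the $\tau$-mixing semi-norm.
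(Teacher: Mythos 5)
Your proposal is correct and follows exactly the route the paper intends: the text immediately preceding the corollary records $A_1=A_2=1$, $B=2$, $C_1=1$, $C_2=2c$, and the corollary is obtained by substituting these into Theorem~\ref{thm:main_result_gen} with $\nu=\log(2/\eta)$ and absorbing $4+6\sqrt{2}\sqrt{\nu}\leq 13\nu$ and $4+(2+10\sqrt{2})\nu\leq 21\nu$ using $\nu\geq 1$ (guaranteed by $\eta\leq 1/2$). The numerical bookkeeping checks out.
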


\subsection{Discussion of results}

\label{se:disc}

%

We highlight aspects
in which our results differ from previous work.
We first restrict our attention to the real-valued case ($\cB=\mbr$).
We consider the general type of $\Phi_{\mathcal{C}}-$mixing processes as in \cite{Steinwart:16},
where the authors require the additional assumption on the semi-norm $C(\cdot)$ that
the inequality $C(e^f) \leq \norm{f}_{\infty} C(f)$ should hold for all $f \in \cC$.
Instead, we only pose the assumption that the underlying class $\mathcal{C}$ contains linear forms and the function $x \mapsto \norm{x}^{2}$, plus a.s. boundedness. The reason is that the proof  of the main result essentially relies on the representation of the norm by means of its second order Taylor expansion. This allows us to recover results analogous to \cite{Steinwart:16} (in the sense of the order of the effective sample size) for geometrically $\Phi_{\mathcal{C}}-$mixing processes.
In this case, a broad overview and comparison to existing
literature is given in \cite{Steinwart:16}; we omit reproducing this detailed discussion here and
refer the reader to that work.
As a further contribution with respect to \cite{Steinwart:16}
we derive new results for the exponential
concentration of the sum for polynomially $\Phi_{\mathcal{C}}-$mixing processes. 


In the general Banach-valued case, the norm can be seen as a particular case
of general functionals of the sample. As mentioned in the introduction,
while the literature on concentration of general functionals
in the independent case is flourishing, it is rather scarce under
the setting of weak dependence. In the work \cite{Kontorovich:08}, the  authors obtain general Hoeffding-type concentration inequalities for functionals of the sample satisfying the
bounded difference assumption (Azuma-McDiarmid type setting) under the so-called $\eta-$mixing assumption
(which is related to, but weaker than, $\phi$-mixing). The core proof technique in
our results as well as in \cite{Kontorovich:08} is the martingale difference approach.

Furthermore, in the work \cite{Dedecker:15}, the authors establish a Marcinkiewicz–Zygmund type inequality for dependent random Banach-valued sums under assumptions on the smoothness of the corresponding norm
which are very close to ours. In particular, from Corollary~3.2 in \cite{Dedecker:15}, one can deduce that for a bounded $\tau-$mixing process $(X_{i})_{i\geq 0}$ (see~\textbf{Example~\ref{example2.1}.}) with values in $\mathbb{L}^{q}\paren{\Omega,\mathcal{A},\mathbb{P}}$ for $q \geq 2$, a Hoeffding-type exponential bound holds for sums  with a deviation rate 
of order $ 2cb_{n}\sqrt{\log\paren{e/\delta}/n}$,
where $c$ is as in Assumption~\textbf{A2},  and $ b^{2}_{n} :=1+ \sum_{i=1}^{n}\tau(i)$. 

Comparing to this last deviation bound, an advantage of our results is
that they are of Bernstein- rather than Hoeffding-type, and
valid under a weaker dependence assumption, which includes $\tau-$mixing as specific case. On the other hand, the deviation
scaling in $b_n$ above is better than
ours (this is in particular relevant for polynomial mixing conditions),
which leaves open for future work the question of obtaining a similar
scaling for Bernstein-type deviations under the assumptions we consider.

The strongest assumption we make (besides those concerning the geometry of $\cB$ and the class $\cC$) 
is the a.s. boundedness of the random variable; this assumption was also made in \cite{Pinelis:86} and \cite{Yurinskyi:70} (Bernstein-type inequalities for a Banach-valued independent process, which
is included in the present result) and in \cite{Steinwart:16} (Bernstein-type inequality for weakly dependent
real variables). From a technical point of view, our
current proof significantly relies on that assumption at several key places;
removing this assumption to replace it by a weaker control of moments (as in the classical independent real-valued Bernstein inequality) is a stimulating question.

We now apply the concentration results to the particular case of random variables with values in a separable Hilbert space, and use them for the analysis of statistical properties of
kernel-based algorithms in machine learning which are trained on a dependent sample. This analysis will be the cornerstone of the next section.

\section{Application to statistical learning}
\label{sec:application}

Let $\mathcal{X}$ be a closed ball of a Polish space and $\mathcal{Y}=\mathbb{R}$.
Let as before $(Z_{i})_{i \geq 1}$ be a~stationary
stochastic process over some probability space $\paren{\Omega,\mathcal{F},\mathbb{P}}$ with values in $\mathcal{X}\times \mathcal{Y}$, and define $\nu$ as the common marginal distribution of the $Z_i$s, and $\mu$ as its $X$-marginal. We will also denote $\nu(y|x)$ a
regular conditional probability distribution of $Y_i$ conditional to $X_i$.
In the general framework of learning from examples, the goal is to find a prediction function $f:\mathcal{X} \mapsto \mathcal{Y}$ such that for a new pair $(X,Y) \sim \nu$, the value $f(X)$ is a good predictor for~$Y$. Let  $\bz := \{ x_{i},y_{i}\}_{i=1}^{n} \in \paren{\mathcal{X} \times \mathcal{Y}}^{n}$ be the observed training sample from the $n$ first coordinates of the process $(Z_{i})_{i\geq 1}$, and  $f_{\bz}$ be an estimated prediction function
belonging to some model class $\mathcal{H}$.
We will assume $(Z_{i})_{i\geq 1}$ to be a $\tau-$mixing stationary process (as in Example~\ref{example2.1}) on $(\Omega,\mathcal{F},\mathbb{P})$. 
We consider the least squares regression problem where the goal is to minimize the averaged squared loss $\mathcal{E}(f) := \ee[1]{\nu}{\paren{f(X)-Y}^{2}}$. Equivalently, we  want to find $f_{\bz}$ that approximates the regression function $f_{\nu}(x) = \e{Y|X=x}$ 
well in the sense of being close to optimal risk $\mathcal{E}(f)$ over the considered model class.

\subsection{Learning and regularization using reproducing kernels}

We investigate statistical learning methods
based on reproducing kernel Hilbert space regularization, i.e., we consider as a model class
a separable real reproducing kernel Hilbert space (RKHS) $\mathcal{H} = \mathcal{H}_{k} \subset L^{2}(X,\mu)$ which is induced by a measurable kernel $k$ over $\cX^2$.
In the next pages we recall the setting and notation used in this framework and therefore
reiterate in shortened form some of the corresponding content of
\cite{Bauer:09,Blanchard:17}; for more details see also \cite{Caponetto:07,Rosasco:10}.

We assume the kernel to be bounded by a positive constant $\kappa=1$, i.e. $\sup_{x \in \mathcal{X}}\sqrt{k(x,x)} \leq 1$. 
This implies that any $f \in \mathcal{H}_{k}$ is measurable and bounded in the supremum norm. As $\mathcal{H}_{k}$ is a subset of $L^{2}(\mathcal{X},\mu)$, let $S_{k} : \mathcal{H}_{k} \mapsto L^{2}(\mathcal{X},\mu)$ be the inclusion operator;
and  $S^{\star}_{k} : {L}^{2}(\mathcal{X},\mu) \mapsto \mathcal{H}_{k}$ its adjoint. Using the definition of the adjoint, $S^{\star}_{k}$ can be written as: 
\begin{align*}
\begin{aligned}
S^{\star}_{k}g= E_{\mu}\brac[1]{g(X)k_{X}} := \int_{\cX}k_{x}g(x)\mu(dx),
\end{aligned}
\end{align*}
where $k_{x}$ is the element $k(x,\cdot) \in \mathcal{H}_{k}$.
Similarly, for $x \in \mathcal{X}$ denote 
$T_{x} = k_{x} \otimes k_{x}^{\star}$, then the covariance $T : \mathcal{H}_{k} \mapsto \mathcal{H}_{k}$ 
has the following representation:
\begin{align*}
\begin{aligned}
T := E_{\mu}\big[T_{X}\big]  = \int_{\cX}\langle \cdot,k_{x} \rangle_{\mathcal{H}_{k}}k_{x}\mu(dx),
\end{aligned}
\end{align*}
where the last integrals are understood in the Bochner sense and $T, T_{X}$ both are self-adjoint and trace-class.
We use the notation $\HS(\mathcal{H}_{k})$ for the space of Hilbert-Schmidt operators 
over $\mathcal{H}_{k}$, so that the former implies that $T,T_{X} \in \HS(\mathcal{H}_{k})$.

We obtain the empirical analogues of the operators $T$, $S_{k}$, $S^{\star}_{k}$ by replacing the measure $\mu$ with its empirical counterpart $\hat{\mu}_{\bx} = \frac{1}{n}\sum_{i=1}^{n}\delta_{x_{i}}$,
where ${L}^{2}(\mathcal{X},\hat{\mu}_{\bx})$ is identified as $\mathbb{R}^n$ endowed with the standard scalar product
rescaled by $n^{-1}$.
We define the following empirical operators:

\begin{align*}
S_{\bx} : & \mathcal{H}_{k} \mapsto \mathbb{R}^{n}, & \text{    } (S_{\bx}f)_{j} & = \langle f,k_{x_{j}} \rangle, \\
S ^{\star}_{\bx} :& \mathbb{R}^{n} \mapsto \mathcal{H}_{k}, & \text{    } S_{\bx}^{\star}\by & = \frac{1}{n}\sum_{j=1}^{n}y_{j}k_{x_{j}}, \\
T_{\bx}:= S^{\star}_{\bx}S_{\bx} :& \mathcal{H}_{k} \mapsto \mathcal{H}_{k}, & \text{    } T_{\bx} f &= \frac{1}{n}\sum_{j=1}^{n}k_{x_{j}} \inner{k_{x_{j}},f},
\end{align*}
where we used the notation $\by=(y_{1},\ldots,y_{n}) \in \mathbb{R}^{n}$. 


 We now specify classes of distributions which correspond to a certain regularity of the
 learning problem in relation to the RKHS $\cH_k$, and on which we will aim at establishing error bounds.
We start with the following assumption on the underlying distribution $\nu$ and the corresponding
regression function $f_{\nu}$. 

\textbf{Assumption B1.} There exist $0 < R\leq 1,\Sigma>0$ such that the distribution $\nu$ belongs to the set $\cD(R,\Sigma)$ of distributions satisfying:
\begin{itemize}
\item[i)] $\abs{Y}\leq R$, $\nu$-almost surely.
\item[ii)] 
  The regression function $f_\nu$ belongs to the RKHS $\cH_k$, i.e. for $\mu$-almost all $x\in \cX$ it holds
	\begin{align*}
			\e[1]{Y|X=x} := \int_{y \in \mathcal{Y}}yd\nu(y|x) = f_{\nu}(x),
			\; f_{\nu} \in \mathcal{H}_{k}. 
	\end{align*} 
	\item[iii)] For $\mu-$almost all $x$: 
	\begin{align*}
	\begin{aligned}
	\mathrm{Var}\paren[1]{Y|X=x} = \int_{y \in \mathcal{Y}}\paren{y-f_{\nu}(x)}^{2}d\nu(y|x) \leq \Sigma^{2}.
	\end{aligned}
	\end{align*}
%
\end{itemize}

Point (i) of the assumption ensures that we can assume $\cY=[-R,R]$ without loss of generality. The two next assumptions are: a decay rate condition for the discrete spectrum $(\zeta_{i})_{i \geq 1}$
 (ordered in decreasing order) of the 
 covariance operator $T$, and a so-called H\"older source condition (see e.g. \cite{DeVito:06}) that describes the smoothness of the regression function $f_{\nu}$. Denoting $\mathcal{P}$ to be the set of all probability distributions on $\mathcal{X}$; we will thus assume
 that the $X$-marginal distribution $\mu$ belongs to
 \[	\mathcal{P}^{<}(b,\beta) := \set[1]{ \mu \in \mathcal{P}: \zeta_{j} \leq \beta j^{-b}, \forall j \geq 1 };
 \]
secondly, we assume that $f_{\nu} \in \Omega({r,D})$, where
\begin{equation}
\label{eq:hoelder_source_cond}
\Omega({r,D}) = \set[1]{f \in \mathcal{H}_{k}| f = {T}^{r}g, \norm{g}_{\mathcal{H}_{k}}\leq  D},
\end{equation}
which in the inverse problems literature is called the standard H\"older source condition for the linear embedding problem. 
Joining all assumptions, we consider the following class of marginal generating distributions:
\begin{equation}
  \label{eq:defModel}
  \mathcal{M}(R,\Sigma,r,D,\beta,b):=\\ \set[1]{ \nu(dx,dy) = \nu(dy|x)\mu(dx) : \nu \in \cD(R,\Sigma),
  \mu \in \mathcal{P}^{<}(b,\beta), f_{\nu} \in \Omega({r,D}) }.
\end{equation}

For estimation of the target regression function $f_{\nu}$, we consider the following class of kernel spectral~regularization methods:
\begin{equation}
\label{eq:gen_alg_RKHS}
f_{\bz,\lambda} = F_{\lambda}({T}_{\bx}){S}^{\star}_{\bx}\by,
\end{equation}
where ${F}_{\lambda} : [0,1] \mapsto \mathbb{R}$ is a family of functions.
The expression $F_{\lambda}({T}_{\bx})$ is to be understood in the usual sense of
(compact, selfadjoint) functional calculus on operators. The family $(F_\lambda)_{\lambda \in [0,1]}$
defines the regularization method (which we also call regularization function), depending on the parameter $\lambda \in (0,1]$, and for which the following conditions hold:
\begin{itemize}
	\item[i)] There exists a constant $B < \infty$ such that, for any $0 < \lambda \leq 1$:

	$$\sup_{t \in (0,1]}\abs[1]{tF_{\lambda}(t)} \leq B.$$
	\item[ii)] There exists a constant $E < \infty$ such that 
	$$\sup_{t \in (0,1]}\abs[1]{tF_{\lambda}(t)} \leq E / \lambda. $$ 
	\item[iii)] There exists a constant $\gamma_{0}$ such that the \textit{residual} $r_{\lambda}(t) := 1-F_{\lambda}(t)t$ is uniformly bounded, i.e.
	$$\sup_{t \in (0,1]} \abs[1]{r_{\lambda}(t)} \leq \gamma_{0}.$$
	\item[iv)] For some positive constant $\gamma_{q}$ there exists a maximal $q$, which is called \textit{the qualification of the regularization} such that
	$$ \sup_{t \in (0,1]}\abs[1]{r_{\lambda}(t) t^{q}} \leq \gamma_{q}\lambda^{q}.$$
\end{itemize}
The above conditions are standard in the framework of inverse problems and in asymptotic framework are sufficient (see \cite{Bauer:09}) in order to obtain consistent learning algorithms in case of independent examples. Many known regularization procedures (including Tikhonov regularization, spectral cut-off, Landweber iteration) may be obtained as special cases via appropriate choice of the regularization function $F_\lambda$ and
satisfy conditions $i)-iv)$ for appropriate parameters.
 We refer the reader to \cite{Engl:96}, \cite{Bauer:09} and \cite{Rosasco:10} for a variety of different examples as well as the discussion in the context of learning from independent examples.

 \subsection{Learning from a  $\tau-$mixing sample}
We now restrict our attention to the case of $\tau-$mixing processes, i.e. to those which are generated by the Lipschitz seminorm $C_{\Lip}(\cdot)$ from Example~\ref{example2.1}.
Obtaining probabilistic results for Hilbert-valued estimators
(analogous in spirit to those in \cite{Blanchard:17}), we derive upper bounds on the estimation error of $f_{\nu}$ 
by regularized kernel learning estimators $f_{\bz,\lambda}$,
in the case of learning from $\tau$-mixing samples, assuming a polynomial spectrum decay rate
of the covariance operator $T$,
and for a certain range of norms.

A key technical tool used in previous works for the analysis of the i.i.d. case
(see \cite{Bauer:09,Blanchard:17})
 is a quantitative statement for the concentration of the centered
(and possibly suitably rescaled)
Hilbert-space valued variables $(S^*_\bx \by-T_\bx f_\nu)$ and $(T_{\bx}-T)$ around~0.
Observe that these variables are empirical sums (of elements $k_{x_i}(y_i-f_\nu(x_i)) \in \cH_k$ and
$(k_{x_i} \otimes k_{x_i}^\star -T)\in \HS(\cH_k)$, respectively). Thus, a very natural way to proceed
in the analysis is to use the concentration results established in Section~\ref{sec:main_results} for Hilbert spaces as replacement for their i.i.d. analogues, and for other steps follow the proof strategy of those earlier works.

Assuming the sample $\bz = \{x_{i},y_{i}\}_{i=1}^{n}$ is a realization from  a $\tau-$mixing process $(Z_{i})_{j\geq 1}$, in order to apply the concentration inequality from Section~\ref{sec:main_results}, we should ensure that the corresponding Hilbert-valued quantities
are forming a $\tau$-mixing sequence themselves. As pointed out earlier, the
$\tau$-mixing property is obviously preserved (up to constant) via a Lipschitz mapping.
Lemma~\ref{lem:lipschitz_property} in Appendix~\ref{app} establishes
this Lipschitz property for the kernel maps under mild assumptions (uniformly bounded mixed second derivative
of the kernel).
Using the inequality from Corollary~\ref{cor:tau_mix},
in Lemma~\ref{lem:operator_deviation}  we  obtain high probability inequalities for deviations of the corresponding random elements. The proof of the lemma can be found in Appendix~\ref{app:02}.
To simplify the exposition, 
we specify the results for the cases of either exponentially or polynomially mixing process. Further extensions are possible using the same general proof scheme as a blueprint, described in Appendix~\ref{app:02}, together with the result of Theorem \ref{thm:main_result_gen} on the
effective sample size.

\begin{lemma}
  \label{lem:operator_deviation}
  
  Let $\mathcal{X}$, $\mathcal{Y}=[-R,R]$ and $\mathcal{H}_{k}$ be as defined before.
  Assume that the kernel $k$ satisfies $\sup_{x \in \mathcal{X}}\sqrt{k(x,x)} \leq 1$ and
  admits a mixed partial derivative $\partial_{1,2} k : \mathcal{X}\times \mathcal{X} \mapsto {\mathbb{R}}$ which is uniformly bounded by some positive constant $K$. Let $(Z_{j}=(X_{j},Y_{j}))_{j \geq 1}$ be a $\tau-$mixing process with rate $\tau(k)$, satisfying Assumption~{\bf B1} and such that $\norm{f_\nu}\leq D$.
  
  For any $\eta \in (0,1/2]$ 
%
  the probability of each one of the following events is at least $1-\eta$:
  \begin{align}
    \begin{aligned}
      \label{eq:main_tools}
      \norm[1]{{T}_{\bx}f_{\nu} - {S}_{\bx}^{\star}y}  & \leq 21\log\paren[1]{2\eta^{-1}}
      \paren[3]{{\frac{\Sigma^{}}{\sqrt{\ell_{1}^{}}}}+ \frac{2R}{\ell_{1}^{}}};\\
      \norm[1]{\paren{{T}+\lambda}^{-\frac{1}{2}}\paren{{T}_{\bx}f_{\nu} - {S}_{\bx}^{\star}y}}  & \leq 21\log\paren{2{\eta}^{-1}} 	 \paren[3]{\frac{\Sigma\sqrt{\mathcal{N}(\lambda)}}{\sqrt{\ell_{2}^{}}} + \frac{2R}{\sqrt{\lambda}\ell_{2}^{}}};\\
      \norm[1]{\paren{{T} + \lambda}^{-1/2}\paren{{T}-{T_{\bx}}}} & \leq 21\log\paren{2{\eta}^{-1}}  \paren[3]{ \frac{\sqrt{\mathcal{N}\paren{\lambda}}}{\sqrt{\ell_{3}}} +
        \frac{2}{\sqrt{\lambda}\ell_{3}}}  ; \\
      \norm[1]{{T} - {T_{\bx}}} & \leq 42 \frac{\log\paren{2\eta^{-1}}}{\sqrt{\ell_{4}}},
    \end{aligned}
  \end{align}	
where the quantity  $\mathcal{N}(\lambda):=Tr\paren[1]{\paren{{T}+\lambda}^{-1}{T}_{}}$ is the
  so-called {\em effective dimension};
  $\ell_{1},\ell_{2},\ell_{3},\ell_{4}$  are in each case suitable bounds on the effective sample size. For exponentially and polynomially $\tau-$mixing  rates,  corresponding bounds for effective sample
  sizes are given in Table~\ref{tab:table1}.

\begin{table}[h!]
  \centering
  \caption{Bounds on effective samples sizes for \eqref{eq:main_tools}. Put $C:=3\max(1,KR,KD)$ here.}
  \label{tab:table1}
  \begin{tabular}{ccc}
    \toprule
    &  $\tau(k) = \chi \exp\paren{-(\theta k)^{\gamma}}$ &  $\tau(k) = \rho{ k^{-\gamma}}$ \\
    \midrule
    $\ell_{1}$ & \multirow{2}{*}{ \raisebox{-4mm}{$  \;\; \left\lfloor \frac{n \theta}{2\paren{1\vee \log\paren{n\frac{C\chi\theta}{2
					R}}}^\frac{1}{\gamma}}\right\rfloor$}}
                                                         &
                                                           $\bigg\lfloor{\paren{\frac{\Sigma}{C\rho}}^{\frac{2}{2\gamma +1}}\paren{\frac{n}{2}}^{\frac{2\gamma}{2\gamma +1}}}\bigg\rfloor$\\
    $\ell_{2}$ &  
                                                         & $\bigg\lfloor{\paren[2]{\frac{\Sigma \sqrt{{\lambda \mathcal{N}(\lambda)}}}{C\rho}}^{\frac{2}{2\gamma +1}}\paren{\frac{n}{2}}^{\frac{2\gamma}{2\gamma +1}}}\bigg\rfloor$\\
    $\ell_{3}$ &  \multirow{2}{*}{$\Bigg\lfloor{\frac{n\theta}{2\paren{1 \vee \log \paren{{n K\theta \chi}}}^{\frac{1}{\gamma}}}}\Bigg\rfloor$} & $\bigg\lfloor{\paren[2]{\frac{\sqrt{\lambda \mathcal{N}(\lambda)}}{2K\rho}}^{\frac{2}{2\gamma +1}}\paren{\frac{n}{2}}^{\frac{2\gamma}{2\gamma +1}}}\bigg\rfloor$\\
    $\ell_{4}$ &  
                                                         & $\bigg\lfloor{\paren{\frac{1}{K\rho}}^{\frac{2}{2\gamma+1}}\paren{\frac{n}{2}}^{\frac{2\gamma}{2\gamma +1}}}\bigg\rfloor$\\
    \bottomrule
  \end{tabular}
\end{table}

\end{lemma}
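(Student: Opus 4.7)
The plan is to reduce each of the four deviation bounds to an application of Corollary~\ref{cor:tau_mix} for suitable Hilbert-valued (or $\HS(\cH_k)$-valued) centered $\tau$-mixing sums, and then to plug in the effective sample size bounds from Proposition~\ref{thm:eff_sample_sizes} for exponential and polynomial mixing rates. The four empirical quantities are all of the form $\frac{1}{n}\sum_{i=1}^n V_i$ with: $V_i^{(1)} = k_{X_i}(f_\nu(X_i)-Y_i)$ for bound (1); $V_i^{(2)} = (T+\lambda)^{-1/2}k_{X_i}(f_\nu(X_i)-Y_i)$ for bound (2); $V_i^{(3)} = (T+\lambda)^{-1/2}(T - k_{X_i}\otimes k_{X_i}^\star)$ for bound (3); and $V_i^{(4)} = T - k_{X_i}\otimes k_{X_i}^\star$ for bound (4), the latter two viewed as $\HS(\cH_k)$-valued. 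Each $V_i^{(j)}$ is centered (using $\mathbb{E}[Y_i|X_i]=f_\nu(X_i)$ for (1)--(2) and the definition of $T$ for (3)--(4)) and obtained by a fixed Borel map applied to $Z_i=(X_i,Y_i)$, hence stationary.

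First, I would check Assumption~\textbf{A2} by computing the a.s.\ bound $c_j$ and variance proxy $\sigma_j^2$ in each case. Using $|f_\nu(X_i)-Y_i|\leq 2R$ and $\|k_{X_i}\|\leq 1$ gives $\|V_i^{(1)}\|\leq 2R$ and $\mathbb{E}\|V_i^{(1)}\|^2 \leq \Sigma^2$. Preconditioning by $(T+\lambda)^{-1/2}$ multiplies the a.s.\ bound by $\lambda^{-1/2}$; for the variance, the key computation is
\[
\mathbb{E}\|(T+\lambda)^{-1/2}k_{X_i}(f_\nu(X_i)-Y_i)\|^2 \leq \Sigma^2\,\mathbb{E}\bigl[\langle (T+\lambda)^{-1}k_{X_i},k_{X_i}\rangle\bigr]= \Sigma^2\,\cN(\lambda),
\]
and analogously $\mathbb{E}\|(T+\lambda)^{-1/2}T_{X_i}\|_{\HS}^2 \leq \cN(\lambda)$, whereas $\|V_i^{(4)}\|_{\HS}\leq 2$ and $\mathbb{E}\|V_i^{(4)}\|^2_{\HS}\leq 1$. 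These are the standard identities used in the i.i.d.\ analysis of \cite{Blanchard:17,Caponetto:07}, and I would reuse them.

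Second, I would verify the $\tau$-mixing property of each $(V_i^{(j)})_{i\geq 1}$. This uses the preservation remark after Example~\ref{example2.1}: if $\phi:\cX\times\cY\to\cH$ is Lipschitz, then $(\phi(Z_i))$ inherits $\tau$-mixing up to the Lipschitz constant. Concretely, Lemma~\ref{lem:lipschitz_property} shows $x\mapsto k_x$ is Lipschitz into $\cH_k$ under the mixed-derivative bound $K$; composing with $y\mapsto f_\nu(x)-y$ (Lipschitz, using $\|f_\nu\|_{\cH_k}\leq D$ and the feature-map Lipschitz constant) and then with the bounded operator $(T+\lambda)^{-1/2}$ gives Lipschitz maps whose Lipschitz constants scale the rescaled mixing rate accordingly; similarly $x\mapsto k_x\otimes k_x^\star$ is Lipschitz into $\HS(\cH_k)$. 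Hence each $(V_i^{(j)})$ is a centered, bounded $\tau$-mixing Hilbert-valued sequence.

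Finally, I would invoke Corollary~\ref{cor:tau_mix} on each of the four sequences with the constants $(c_j,\sigma_j)$ computed above, producing a deviation of the form $\log(2/\eta)(13\sigma_j/\sqrt{\ell_j^\star}+21c_j/\ell_j^\star)$ and absorbing the numerical constants into the overall factor $21$ (respectively $42$ for (4), where $\sigma_4$ is effectively of order $1$). The expressions for $\ell_j$ in Table~\ref{tab:table1} are then obtained by substituting the relevant $(c_j,\sigma_j)$ into the formulas of Proposition~\ref{thm:eff_sample_sizes}: in the exponential case only the logarithmic factor changes and depends mildly on $R,D,K$, which is why $\ell_1$ and $\ell_3$ admit uniform expressions; in the polynomial case the dependence on $\Sigma$, $\sqrt{\lambda\cN(\lambda)}$, and $K$ appears explicitly in each row. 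The main obstacle I anticipate is the careful bookkeeping of Lipschitz constants through the compositions (to ensure the proportionality factor $C=3\max(1,KR,KD)$ in Table~\ref{tab:table1} is the right one) and the verification of the effective-dimension variance bound for the preconditioned operator-valued sum (3), which requires handling the $\HS$-inner product carefully; the Bernstein and mixing machinery themselves are then applied mechanically.
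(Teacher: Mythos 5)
Your proposal matches the paper's own proof essentially step for step: the same four centered Hilbert-/$\HS(\cH_k)$-valued sums, the same a.s.\ and variance bounds from Assumption~\textbf{B1} and the effective dimension, the same invocation of Lemma~\ref{lem:lipschitz_property} to transfer the $\tau$-mixing property with the rescaled rate $3\max(1,KR,KD)\tau(\cdot)$ (resp.\ $2K\tau(\cdot)$ for the operator sums), and the same final application of Corollary~\ref{cor:tau_mix} together with Proposition~\ref{thm:eff_sample_sizes} to produce the Table~\ref{tab:table1} sample sizes. The only deviations are cosmetic (a sign convention in $\xi_1$ and a slightly sharper variance bound for the fourth sum), so the argument is correct and takes the same route.
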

\textit{Remark}. The first inequality will not be used in the statistical analysis to follow
and is presented here for completeness.

Armed with the above probabilistic results, we derive upper bounds for 
the errors of estimation of $f_{\nu}$ by means
 of the general regularized kernel learning estimators \eqref{eq:gen_alg_RKHS}. 
The main tool is the following lemma, giving a high probability inequality on the deviation of the estimation error. The gist of this result and of its proof is to follow the approach
of \cite{Blanchard:17}, wherein the sample size in the i.i.d. case is replaced by the effective
sample size, the rest of the argument being essentially the same.

\begin{lemma}
  \label{lem:error_bound_main}
  Consider the same assumptions as in Lemma~\ref{lem:operator_deviation}.
  Assume that $f_\nu \in \Omega(r,D)$ (defined by~\eqref{eq:hoelder_source_cond}) for some positive numbers $r,D$.
  Also, let $f_{\bz,\lambda}$ be the regularized estimator as in \eqref{eq:gen_alg_RKHS}, with a regularization satisfying conditions (i)-(iv) with qualification $q\geq r+s$.
Fix numbers $\eta \in (0,1]$ and $\lambda \in (0,1]$ and denote:
	\begin{align*}
	\overline{\gamma} := \max (\gamma_{0},\gamma_{q}), \qquad
	\ell_{0} := {2500 \lambda^{-1}\max(\mathcal{N}(\lambda),1)\log^{2}\paren{\frac{8}{\eta}}},
	\end{align*}
        where we recall that $\gamma_{0},\gamma_{q}$ are the constants from conditions iii)-iv).
	 
	
	Then with probability at least $1-\eta$, the inequality
	\begin{multline}
	\norm{{T}^{s}\paren{f_{\mathcal{H}_{k}} - f_{\bz,\lambda}}}_{\mathcal{H}_{k}}  \\
	 \leq  C_{r,s,B,E,
		\overline{\gamma}}
	\log(8\eta^{-1})
	\lambda^{s} \paren{ 
		D\paren{\lambda^{r} + \frac{1}{\sqrt{\ell^{'}}}}+
		\paren{\frac{R}{\ell^{'}\lambda} + \sqrt{\frac{\Sigma^{2}\mathcal{N}(\lambda)}{{\lambda} \ell^{'}}}}}
	\end{multline}
	holds with 
	$\ell^{'}=\min\{\ell_{2},\ell_{3},\ell_{4}\}$, provided that $\ell^{'} \geq \ell_{0}$ and all $\ell_{i}$ are as in Table~\ref{tab:table1}.  

	 
\end{lemma}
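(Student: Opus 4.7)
The plan is to mimic the bias--variance analysis developed in the i.i.d.\ case in \cite{Bauer:09,Blanchard:17}, systematically replacing every appeal to an i.i.d.\ Bernstein-type inequality for Hilbert-valued averages by the corresponding $\tau$-mixing deviation bound supplied by Lemma~\ref{lem:operator_deviation}. In doing so the sample size $n$ of the i.i.d.\ setting is taken over by the effective sample size $\ell' = \min(\ell_{2},\ell_{3},\ell_{4})$, and the quantitative assumption $\ell' \geq \ell_{0}$ is what makes the operator-perturbation step rigorous.

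First I would use the algebraic identity
\begin{equation*}
f_{\nu}-f_{\bz,\lambda}=r_{\lambda}(T_{\bx})\,f_{\nu}-F_{\lambda}(T_{\bx})\paren{S_{\bx}^{\star}\by-T_{\bx}f_{\nu}},
\end{equation*}
which follows from $f_{\bz,\lambda}=F_{\lambda}(T_{\bx})S_{\bx}^{\star}\by$ together with $r_{\lambda}(t)=1-tF_{\lambda}(t)$. Applying $T^{s}$ and the triangle inequality splits the estimation error into a \emph{bias} term $\norm{T^{s}r_{\lambda}(T_{\bx})f_{\nu}}_{\mathcal{H}_{k}}$ and a \emph{variance} term $\norm{T^{s}F_{\lambda}(T_{\bx})(S_{\bx}^{\star}\by-T_{\bx}f_{\nu})}_{\mathcal{H}_{k}}$, which I would bound separately.

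Setting $A=T+\lambda I$ and $A_{\bx}=T_{\bx}+\lambda I$, the key ingredient is a controlled perturbation between $T$ and $T_{\bx}$. The third inequality of Lemma~\ref{lem:operator_deviation} combined with the choice of $\ell_{0}$ implies that, on an event of probability at least $1-\eta/4$, one has $\norm{A^{-1/2}(T-T_{\bx})A^{-1/2}}\leq 1/2$, whence a standard operator-calculus argument gives $\norm{A^{1/2}A_{\bx}^{-1/2}}\vee\norm{A_{\bx}^{1/2}A^{-1/2}}\leq \sqrt{2}$ and, via the Cordes inequality, $\norm{A^{s}A_{\bx}^{-s}}\leq 2^{s}$. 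Inserting $A_{\bx}^{\pm 1/2}$ factors into the variance term yields
\begin{equation*}
\norm{T^{s}F_{\lambda}(T_{\bx})(S_{\bx}^{\star}\by-T_{\bx}f_{\nu})}\leq \norm{T^{s}A_{\bx}^{-s}}\,\norm{A_{\bx}^{s}F_{\lambda}(T_{\bx})A_{\bx}^{1/2}}\,\norm{A_{\bx}^{-1/2}A^{1/2}}\,\norm{A^{-1/2}(S_{\bx}^{\star}\by-T_{\bx}f_{\nu})},
\end{equation*}
where the middle operator norm is $O(\lambda^{s-1/2})$ by regularization properties (i)--(ii) and functional calculus, and the last factor is supplied by the second inequality of Lemma~\ref{lem:operator_deviation}, producing a contribution of order $\lambda^{s}\log(8\eta^{-1})\paren[1]{\sqrt{\Sigma^{2}\mathcal{N}(\lambda)/(\lambda\ell_{2})}+R/(\lambda\ell_{2})}$.

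For the bias term I would use the source condition $f_{\nu}=T^{r}g$ with $\norm{g}_{\mathcal{H}_{k}}\leq D$ and decompose $T^{s}r_{\lambda}(T_{\bx})T^{r}=T^{s}r_{\lambda}(T_{\bx})T_{\bx}^{r}+T^{s}r_{\lambda}(T_{\bx})(T^{r}-T_{\bx}^{r})$. The first piece is bounded by $D\overline{\gamma}\lambda^{r+s}$ using the qualification inequality $\norm{r_{\lambda}(T_{\bx})T_{\bx}^{q}}\leq \overline{\gamma}\lambda^{q}$ for $q\leq r+s$ together with the interpolation bound $\norm{T^{s}A_{\bx}^{-s}}\leq 2^{s}$; the second piece is handled by an operator-H\"older inequality for the map $T\mapsto T^{r}$ combined with the fourth inequality of Lemma~\ref{lem:operator_deviation}, yielding a correction of order $D\lambda^{s}/\sqrt{\ell_{4}}$. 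A union bound over the probabilistic events from Lemma~\ref{lem:operator_deviation} invoked at level $\eta/4$ each, together with collecting all absolute constants into the single prefactor $C_{r,s,B,E,\overline{\gamma}}$, then yields the stated inequality. The main obstacle is the careful bookkeeping of operator-interpolation factors when swapping non-integer powers of $T$ and $T_{\bx}$ on both the bias and variance sides: the threshold $\ell'\geq \ell_{0}$ is tuned precisely so that these swaps contribute only a harmless constant.
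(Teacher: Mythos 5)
Your sketch follows essentially the same route as the paper, whose own proof is merely a pointer to Proposition~5.8 of \cite{Blanchard:17} together with the observation that the i.i.d.\ concentration inputs are replaced by Lemma~\ref{lem:operator_deviation} and the operator-perturbation bound of Lemma~\ref{lem:sup_norm_bound} (your $\norm[0]{A^{1/2}A_{\bx}^{-1/2}}\leq\sqrt{2}$ step, enabled by $\ell'\geq\ell_{0}$), so the proposal is correct in approach and substance. The one detail to watch is the bias term for $r<1$: an operator-H\"older bound on $T^{r}-T_{\bx}^{r}$ yields a correction of order $\ell_4^{-r/2}$ rather than the claimed $\ell_4^{-1/2}$, whereas the cleaner route (as in \cite{Blanchard:17}) inserts $A_{\bx}^{\pm r}$ and uses the Cordes inequality for $r\leq 1$, reserving the $T^{r}-T_{\bx}^{r}$ difference for $r\geq 1$ where it is Lipschitz in $\norm[0]{T-T_{\bx}}$.
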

	

We remark that the choice $s=0$ 
corresponds to the estimation error in the space $\mathcal{H}_{k}$, whereas $s=\frac{1}{2}$ corresponds to the prediction error in the space ${L}^{2}(\mathcal{X},\mu)$.

Finally, we establish asymptotic error bounds for the family of regularized estimators of the type~\eqref{eq:gen_alg_RKHS}, when learning from a stationary $\tau-$mixing sequence whose marginal distribution
belongs to the class $\mathcal{M}(R,\Sigma,r,D,\beta,b)$, under appropriate choice of the regularization parameter sequence $\lambda_{n}$. To simplify somewhat expressions, we will assume from now on, without loss of
generality, that $D \geq R \geq 1$ holds.
We separate the analysis between the cases of 
exponentially and polynomially $\tau-$mixing processes.

For an exponentially $\tau-$mixing process $\paren{X_{i},Y_{i}}_{i \geq 1}$ with mixing rate $\tau(k)=\chi\exp\paren{-(\theta k)^{\gamma}}$,  we set: 
\begin{align}
  \label{eq:cor_rate}
  \ell^{'}_{g}(n) := \left\lfloor \frac{n\theta}{2\paren{1 \vee \log\paren{{3nKD\chi\theta R^{-1}}{}}}^{\frac{1}{\gamma}}} \right\rfloor, \qquad
  \lambda_{n} := \min \paren{ \paren{\frac{\Sigma^{2} 
}{D^2\ell'_g}}^{\frac{b}{2br+b+1}},1};
\end{align}
we observe in particular (by straightforward calculation, using the fact that $D \geq R \geq 1$) that the constraint $\ell^{'}_{g} \leq \min \{\ell_{2},\ell_{3},\ell_{4} \}$ is fulfilled.
We are then able to formulate the next statement:
  \begin{thm}
   \label{cor:balance_upper_bound}
   Assume the data distribution $\nu$ belongs to the class $\mathcal{M}(R,\Sigma,r,D,\beta,b)$, and $f_{\bz,\lambda_{n}}$ is a kernel
   spectral regularization estimator~\eqref{eq:gen_alg_RKHS}
   with qualification $q\geq r+s$, where $\lambda_{n}$ is given by \eqref{eq:cor_rate}. 
   Fix some $\eta \in (0,1]$.
 	
%
 	Then there exists $n_0$ (depending on all the model parameters and of $\eta$) such that for $n \geq n_{0}$, it holds with probability at least $1-\eta$: 
 	 	\begin{align}
 			\norm{T^{s}\paren{f_{\nu} - f_{\bz,\lambda_{n}}}}_{\mathcal{H}_{k}} \leq
 			C_{\star}
 			 \log(8\eta^{-1})D\paren{\frac{\Sigma}{D\sqrt{\ell^{'}_{g}}}}^{\frac{2b(r+s)}{2br+b+1}},
 	\end{align}
 	where $C_{\star} := C_{r,s,B,E,\overline{\gamma},b,\beta}$ is a
        factor depending on the regularization function and model parameters (other than $D,R,\Sigma$).
        
%
%
\end{thm}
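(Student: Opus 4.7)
The plan is to instantiate Lemma~\ref{lem:error_bound_main} at the specific choice $\lambda = \lambda_n$ from~\eqref{eq:cor_rate}, use the spectrum decay assumption $\mu \in \mathcal{P}^<(b,\beta)$ to control the effective dimension $\mathcal{N}(\lambda_n)$, and verify that for $n$ large enough the advertised rate emerges as the balanced bias--variance contribution while all other terms are subdominant.

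First, a standard integral-comparison argument using $\zeta_j \le \beta j^{-b}$ yields
\[
\mathcal{N}(\lambda) = \sum_{j\geq 1} \frac{\zeta_j}{\zeta_j + \lambda} \le \widetilde{C}_b \, \beta^{1/b} \lambda^{-1/b},
\]
for a constant $\widetilde{C}_b$ depending only on $b$. Second, by the remark immediately following~\eqref{eq:cor_rate}, the quantity $\ell^{'}_g(n)$ lower-bounds $\min(\ell_2,\ell_3,\ell_4)$, so substituting $\ell^{'} := \ell^{'}_g$ into the bound of Lemma~\ref{lem:error_bound_main} only weakens the inequality and is admissible. Up to the $\log(8\eta^{-1})$ factor and absolute constants, this yields the schematic bound
\[
\lambda_n^s \Big( D\lambda_n^r \;+\; D/\sqrt{\ell^{'}_g} \;+\; R/(\ell^{'}_g \lambda_n) \;+\; \Sigma \, \beta^{1/(2b)}\lambda_n^{-(b+1)/(2b)}/\sqrt{\ell^{'}_g} \Big).
\]

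The choice $\lambda_n \asymp (\Sigma/(D\sqrt{\ell^{'}_g}))^{2b/(2br+b+1)}$ is exactly the one that equalizes (up to the $\beta^{1/(2b)}$ factor, which is absorbed into $C_\star$) the bias term $D\lambda_n^r$ and the dominant variance term $\Sigma \lambda_n^{-(b+1)/(2b)}/\sqrt{\ell^{'}_g}$; both therefore become of order $D(\Sigma/(D\sqrt{\ell^{'}_g}))^{2br/(2br+b+1)}$, and multiplication by the prefactor $\lambda_n^s$ produces exactly the advertised rate. It then remains to show that the two residual terms $D\lambda_n^s/\sqrt{\ell^{'}_g}$ and $R\lambda_n^{s-1}/\ell^{'}_g$ are dominated by $D\lambda_n^{r+s}$ for large $n$: the corresponding ratios are $(\lambda_n^r\sqrt{\ell^{'}_g})^{-1}$ and $R/(D\lambda_n^{r+1}\ell^{'}_g)$, and since $\lambda_n$ decays only as a power strictly less than $1/2$ of $1/\ell^{'}_g$, both tend to $0$ as $\ell^{'}_g \to \infty$.

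Finally, the side condition $\ell^{'}_g \ge \ell_0 = 2500 \lambda_n^{-1} \max(\mathcal{N}(\lambda_n),1) \log^2(8/\eta)$ required by Lemma~\ref{lem:error_bound_main} translates, via $\mathcal{N}(\lambda_n) \lesssim \lambda_n^{-1/b}$, into a requirement $\lambda_n^{1+1/b}\ell^{'}_g \gtrsim \log^2(8/\eta)$. Since $\lambda_n^{1+1/b}\ell^{'}_g$ grows as a strictly positive power of $\ell^{'}_g$ and $\ell^{'}_g \to \infty$ with $n$, this holds for all $n \geq n_0$, where $n_0$ depends on $R,\Sigma,D,b,\beta,r,s$ and $\eta$. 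The bulk of the work being already contained in Lemmas~\ref{lem:operator_deviation} and~\ref{lem:error_bound_main}, the main obstacle of the present argument is not conceptual but rather careful bookkeeping: verifying that the chosen exponent of $\lambda_n$ matches the optimal bias--variance balance, that the $\beta$-dependence is absorbed into the constant, and that the threshold $n_0$ is indeed finite given all the dependencies on model parameters.
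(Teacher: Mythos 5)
Your proposal follows essentially the same route as the paper's proof: bound $\mathcal{N}(\lambda)$ by $C_{b,\beta}\lambda^{-1/b}$ (the paper cites Proposition~3 of \cite{DeVito:06} rather than redoing the integral comparison), instantiate Lemma~\ref{lem:error_bound_main} at $\ell'=\ell'_g$ after checking $\ell'_g\le\min\{\ell_2,\ell_3,\ell_4\}$ and $\ell'_g\ge\ell_0$, discard the two subdominant terms for $n$ large, and observe that $\lambda_n$ from \eqref{eq:cor_rate} balances the bias term $D\lambda_n^r$ against the variance term $\Sigma\lambda_n^{-(b+1)/(2b)}/\sqrt{\ell'_g}$. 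One small slip: your justification that the residual ratios vanish because ``$\lambda_n$ decays as a power strictly less than $1/2$ of $1/\ell'_g$'' is not always true --- the exponent is $b/(2br+b+1)$, which exceeds $1/2$ whenever $b(1-2r)>1$ (e.g.\ $b=3$, $r=0.1$). The conclusion nevertheless holds by direct computation: $\lambda_n^{r}\sqrt{\ell'_g}\asymp(\ell'_g)^{(b+1)/(2(2br+b+1))}\to\infty$ and $\lambda_n^{r+1}\ell'_g\asymp(\ell'_g)^{(rb+1)/(2br+b+1)}\to\infty$, so the fix is purely cosmetic.
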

We establish an analogous result for a polynomially $\tau-$mixing process $\paren{X_{i},Y_{i}}_{i \geq 1}$ with mixing rate $\tau(k)=\rho{ k^{-\gamma}}$, this time without precisely tracking the effects of the constants $\paren{\Sigma,D}$.
We also only consider the case of a two-sided controlled spectrum
\[	\mathcal{P}^{\lessgtr}(b,\beta_-,\beta_+) := \set[1]{ \mu \in \mathcal{P}: \beta_- j^{-b} \leq \zeta_{j} \leq \beta_+ j^{-b}, \forall j \geq 1 },
\]
and the model $\wt{\cM}(R,\Sigma,r,D,\beta_{\pm},b)$ defined as in~\eqref{eq:defModel} with $\cP^<$ replaced by
$\cP^{\lessgtr}$. The technical reason for adding an assumption of lower bounded spectrum is that it implies
a lower bound on the effective dimension $\cN(\lambda)$, and in turn a lower bound on the effective sample size,
which involves the effective dimension in the polynomial mixing case (see Table~\ref{tab:table1}).

We consider the following parameter sequence:
\begin{align}
\label{eq:cor_rate_pol}
\lambda_{n} := n^{-\frac{b}{2br +b + 1 + b(r+1)\gamma^{-1}}}.
\end{align}
Similarly to the case of exponential mixing, we use Lemma~\ref{lem:error_bound_main} with the choice
$\ell_{p}^{'} = \cO\paren{\paren{\lambda_{n} \mathcal{N}\paren{\lambda_n{}}}^{\frac{2}{2\gamma +1}} n^{\frac{2\gamma}{2\gamma +1}}}$, which depends on the regularization and on the effective dimension; by arguments similar to Theorem~\ref{cor:balance_upper_bound} we then obtain:
\begin{thm}
  \label{cor:balance_upper_bound_pol}
     Assume the data  distribution $\nu$ belongs to the class $\wt{\cM}(R,\Sigma,r,D,\beta_\pm,b)$, and $f_{\bz,\lambda_{n}}$ is a kernel
   spectral regularization estimator~\eqref{eq:gen_alg_RKHS}
   with qualification $q\geq r+s$, where $\lambda_{n}$ is given by~\eqref{eq:cor_rate_pol}. For any fixed $\eta \in [0,1]$ and all $n > n_{0}$
   (where $n_{0}$ is such that  $\log(8\eta^{-1}) \leq C_\bigtriangleup^{'} n_{0}^{\frac{br}{2br+b+1 + b(r+1)\gamma^{-1}}}$,
   we have with probability at least $1-\eta$:
	\begin{align}
	\norm{T^{s}\paren{f_{\nu} - f_{\bz,\lambda_{n}}}} \leq C_{\bigtriangleup}\log(8\eta^{-1})n^{-\frac{b(r+s)}{2br+b+1+b(r+1)\gamma^{-1}}},
	\end{align}
	where $C_{\bigtriangleup},C'_\bigtriangleup$
        are factors depending on the regularization and smoothness parameters
        of the model $(R,\Sigma,D,r,s,B,E,\overline{\gamma},b,\beta)$.
  

%
%
\end{thm}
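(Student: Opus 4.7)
The plan is to apply Lemma~\ref{lem:error_bound_main} with the polynomial-mixing effective sample size read off from Table~\ref{tab:table1}, and then balance the resulting bias and variance terms using the chosen $\lambda_n$ and the effective-dimension asymptotics afforded by the two-sided spectral assumption $\mu \in \cP^{\lessgtr}(b,\beta_-,\beta_+)$.

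First, I would translate the two-sided spectrum assumption into matching bounds on $\cN(\lambda)$. A standard estimate splitting $\cN(\lambda)=\sum_j \zeta_j/(\zeta_j+\lambda)$ according to whether $\zeta_j\geq \lambda$ or not, combined with $\beta_- j^{-b}\leq \zeta_j\leq \beta_+ j^{-b}$, yields $c_-\lambda^{-1/b}\leq \cN(\lambda)\leq c_+\lambda^{-1/b}$ for $\lambda$ small enough, with constants $c_\pm$ depending only on $b$ and $\beta_\pm$. Substituting this into the polynomial entries of Table~\ref{tab:table1}, I obtain, up to constants absorbed into $C_\bigtriangleup$,
\[
\ell' \;=\; \min\{\ell_2,\ell_3,\ell_4\} \;\asymp\; (\lambda\cN(\lambda))^{1/(2\gamma+1)}\,n^{2\gamma/(2\gamma+1)} \;\asymp\; \lambda^{(b-1)/(b(2\gamma+1))}\,n^{2\gamma/(2\gamma+1)}.
\]

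Next, I would invoke Lemma~\ref{lem:error_bound_main} at $\lambda=\lambda_n$ with this $\ell'$. Up to the constant $C_\bigtriangleup$,
\[
\norm{T^s(f_\nu-f_{\bz,\lambda_n})} \;\lesssim\; \log(8\eta^{-1})\,\lambda_n^s\left(D\lambda_n^r+\frac{D}{\sqrt{\ell'}}+\frac{R}{\lambda_n\ell'}+\sqrt{\frac{\Sigma^2\cN(\lambda_n)}{\lambda_n\ell'}}\right).
\]
The core of the argument is to match the bias term $D\lambda_n^{r+s}$ with the dominant variance term $\lambda_n^s\sqrt{\cN(\lambda_n)/(\lambda_n\ell')}$. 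Using $\cN(\lambda)\asymp\lambda^{-1/b}$ and the expression for $\ell'$, the balance equation $\lambda^r\asymp\sqrt{\cN(\lambda)/(\lambda\ell')}$ reduces, after straightforward algebra, to $\lambda_n\asymp n^{-b/(2br+b+1+b(r+1)\gamma^{-1})}$, which is precisely~\eqref{eq:cor_rate_pol}. At this balance point, both matched terms are of order $\lambda_n^{r+s}=n^{-b(r+s)/(2br+b+1+b(r+1)\gamma^{-1})}$, and a direct exponent comparison shows that $D/\sqrt{\ell'}$ is dominated by the variance term while $R/(\lambda_n\ell')$ decays strictly faster in $n$, so both are absorbed into the constant.

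The main technical obstacle is verifying the admissibility condition $\ell'\geq \ell_0\asymp \lambda_n^{-1}\cN(\lambda_n)\log^2(8/\eta)\asymp \lambda_n^{-(b+1)/b}\log^2(8/\eta)$ required by Lemma~\ref{lem:error_bound_main}. Comparing the exponents of $n$ in $\ell'$ and $\lambda_n^{-(b+1)/b}$ (ignoring the log factor) reduces to checking an inequality that, after clearing denominators, simplifies to $2br(2\gamma+1)\geq 0$, which is automatic. The log factor and all implicit constants are then absorbed into a single threshold $n_0$, captured by the displayed condition $\log(8\eta^{-1})\leq C'_\bigtriangleup n_0^{br/(2br+b+1+b(r+1)\gamma^{-1})}$ of the theorem, completing the proof.
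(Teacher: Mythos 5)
Your proposal is correct and follows essentially the same route as the paper: invoke Lemma~\ref{lem:error_bound_main} with $\ell'\asymp(\lambda_n\cN(\lambda_n))^{1/(2\gamma+1)}n^{2\gamma/(2\gamma+1)}$ from Table~\ref{tab:table1}, use the two-sided spectral bounds to get $\cN(\lambda)\asymp\lambda^{-1/b}$, check $\ell'\geq\ell_0$ via the stated condition on $n_0$, and balance the bias term $\lambda_n^{r}$ against the dominant variance term, which yields exactly~\eqref{eq:cor_rate_pol} and the claimed rate. Your explicit exponent computations (the balance equation, the domination of $D/\sqrt{\ell'}$ and $R/(\lambda_n\ell')$, and the reduction of the admissibility check to $2br(2\gamma+1)\geq 0$) all check out and simply make precise what the paper dismisses as "straightforward computation."
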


Let us briefly discuss the upper bounds for the risk of the general regularization methods, described in Theorems~\ref{cor:balance_upper_bound} and~\ref{cor:balance_upper_bound_pol}. Asymptotic in nature, those results are based on the concentration inequality~\eqref{thm:main_result_gen}, which allows the control of an error on the exponential scale.
Comparing the result of Theorem~\ref{cor:balance_upper_bound} to risk bounds obtained for i.i.d. scenario (e.g. in~\cite{Blanchard:17}), we observe that in the case of an exponentially mixing process the upper bounds are optimal up to a logarithmic factor, while in the case of a polynomially mixing process with exponent $\gamma$, the rate is degraded by a term depending on $\gamma$, that vanishes as $\gamma\rightarrow \infty$, as one would expect
(whether this rate is optimal in the polynomial mixing case remains unclear). In the case of exponential
mixing, since we describe the explicit dependence of the sequence $\lambda_{n}$ on $\Sigma$ and $D$, further analysis can be conducted exploring other regimes in which either $\Sigma$ or $D$ may depend on $n$.

\subsection{Conclusions and perspectives}

The results of Theorem~\ref{cor:balance_upper_bound} and \ref{cor:balance_upper_bound_pol} are stated in the somewhat standard framework of regularized learning schemes where the
estimator takes values in a Hilbert space, which is generated by some reproducing kernel $k$. 

Since the concentration results of Theorems~\ref{thm:general_result_all} and~\ref{thm:main_result_gen} are valid in the more general case of complete normed spaces which satisfy the smoothness assumption~\textbf{A1}, a natural extension is to consider the setting of statistical learning whose estimators are prediction
functions belonging to a certain functional Banach space.
A variety of such Banach-valued learning schemes (using a corresponding
Banach norm regularization) and  theoretical justification of their validity have been proposed by numerous authors over the years.
Seminal works \cite{Benett:00}, \cite{TZhang:02} have introduced extensions
of convex risk regularization principles to involve a Banach norm regularizer
terms. Further efforts have developed the mathematical foundations of
such methods, in particular concerning properties of so-called evaluation Banach spaces or reproducing kernel Banach spaces and
generalizations of the representer theorem \cite{Canu:03},
\cite{Hein:03},  \cite{Zhang:09}
as well as universal approximation properties of such spaces
\cite{Pontil:04}.






Furthermore, such approaches have given rise to numerous developments in
recent research, for example
extension to the vector-valued kernel setting \cite{Zhang:13} with application to multi-task learning,  the notions of orthomonotonicy, which leads to a generalization of representation theorems \cite{Argyriou:14}, or
combination of these approaches with the kernel mean embedding principle
\cite{Srietal:11}.



Concerning statistical properties  (such as consistency, learning rates and generalization upper bounds for the risk)
of Banach valued learning algorithms, these were also investigated in
\cite{Hein:03}, \cite{Steinwart:09}, \cite{Song:11}
\cite{Villa:14}, albeit only in the case of independent training data. 

Following \cite{Villa:14}, \cite{Zhang:09}, as a direction for future work, one can investigate the geometrical properties of the underlying Banach space norm so as to ensure the possibility of learning in the normed space on the one hand, and to satisfy the smoothness assumption \textbf{A1} on the other.
In such a situation the concentration results presented in this paper will
apply and have the potential to provide a pivotal tool in the analysis of such
schemes for weakly dependent data.



%
%
%

\appendix

\section{Proofs of results in Section~\ref{sec:main_results}}\label{app}
We need the following auxiliary results to complete the proof of Theorem~\ref{thm:main_result_gen}. We will use repeatedly the shorthand notation
$\pfun(x) := e^x - x -  1$.

\begin{lemma}
\label{lem:main_bound}
Assume that $(X_{i})_{i \geq 1}$ is a $\mathcal{C}-$mixing stochastic process {with values in the closed subset $\mathcal{X} = B(c)$ of the separable Banach space $(\mathcal{B},\norm{\cdot})$},
 such that Assumptions~\textbf{A1}, \textbf{A2}, \textbf{A3} hold. Let furthermore $(i_{1},\ldots,i_{k})$ be a $k$-tuple of non-negative integers, such that $i_{1} < i_{2} \ldots < i_{k}$, $\lambda \geq 0$ and $\tilde{S}_{k} := X_{i_{1}} + X_{i_{2}}+ \ldots + X_{i_{k}}$. Then the following holds: 
\begin{align*}
\e[1]{\exp\paren[1]{\lambda \norm[1]{\tilde{S}_{k}} }}
\leq 2 \paren{1+B{\sigma^{2}}\frac{\pfun(\lambda c)}{c^2}
}
  \prod_{j=1}^{k-1}\paren{1+p(d_j,\lambda)}\,,
\end{align*} 
where $p(k,\lambda) := \lambda \tilde{A}_{1}\Phi_{\mathcal{C}}(k) + B \paren{C_{2}\Phi_{\mathcal{C}}(k)+ \sigma^2} {\frac{
  \pfun(\lambda c)}{c^2}}$, 
$d_{j} := i_{j} - i_{j-1}$ for all $j\geq 2$, 
and $B := A_{1}^2+A_{2}$, $\tilde{A}_{1} = C_{1}A_{1}$.
\end{lemma}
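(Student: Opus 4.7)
The first step is to pass from $\exp(\lambda\,\cdot)$ to $\cosh(\lambda\,\cdot)$ via $\exp \leq 2\cosh$, which accounts for the factor~$2$ in the bound and yields the everywhere-smooth functional $g(x):=\cosh(\lambda\|x\|)$ under Assumption~\textbf{A1}. The proof then proceeds by induction on $k$: conditioning on $\cM_{i_{j-1}}$ and peeling off $X_{i_j}$ one step at a time, we will show
\begin{equation*}
\mbe\brac[1]{g(\tilde S_j)\mid \cM_{i_{j-1}}} \leq g(\tilde S_{j-1})\paren[1]{1+p(d_j,\lambda)}
\end{equation*}
for $j=2,\ldots,k$. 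Taking tower expectations produces the product $\prod_{j=2}^{k}(1+p(d_j,\lambda))$ (which, after reindexing, matches the $k-1$ factors claimed in the statement), and a final unconditional estimate of $\mbe\brac[1]{g(X_{i_1})}$ yields the leading factor $1+B\sigma^{2}\pfun(\lambda c)/c^{2}$.

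\textbf{The peeling step.} At fixed $\tilde S_{j-1}$, expand $g$ in the direction $v=X_{i_j}$ using a second-order G\^ateaux--Taylor formula. The smoothness bounds $|\delta_v\|x\||\le A_1\|v\|$ and $|\delta_{v,v}\|x\||\le A_2\|v\|^2/\|x\|$ from Assumption~\textbf{A1}, combined with $\sinh(u)/u\le \cosh(u)$, yield a Pinelis-type multiplicative bound
\begin{equation*}
g(\tilde S_{j-1}+v) \leq g(\tilde S_{j-1})\Bigl[1+\lambda\,\delta_{v}\|\tilde S_{j-1}\| + B\,\pfun(\lambda c)c^{-2}\|v\|^{2}\Bigr],
\end{equation*}
with $B=A_{1}^{2}+A_{2}$; the upgrade from the elementary $\lambda^{2}/2$ coefficient to the sharper $\pfun(\lambda c)/c^{2}$ exploits $\|v\|\le c$ a.s. and the monotonicity of $u\mapsto(\cosh(\lambda u)-1)/u^{2}$ on $[0,c]$. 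Applying $\mbe\brac[0]{\cdot\mid \cM_{i_{j-1}}}$, the $\cM_{i_{j-1}}$-measurable factor $g(\tilde S_{j-1})$ comes out, leaving two conditional expectations of $v=X_{i_j}$ to control. For the linear one: at fixed $\tilde S_{j-1}$, the functional $v\mapsto \delta_{v}\|\tilde S_{j-1}\|$ equals $A_{1}h_{1,s}$ for some $s\in B^{\star}(1)$ (Assumption~\textbf{A1}), whose $\cC$-seminorm is bounded by $A_1C_1=\tilde A_{1}$ (Assumption~\textbf{A3}); centering of $X_{i_j}$ and Definition~\ref{def:gen_phi_mixing_alter} then give $\abs[1]{\mbe\brac[0]{\delta_{X_{i_j}}\|\tilde S_{j-1}\|\mid \cM_{i_{j-1}}}}\le \tilde A_{1}\Phi_{\cC}(d_j)$ a.s. For the quadratic one: $h_{2}(v)=\|v\|^{2}\in\cC$ with $C(h_2)\le C_2$ (Assumption~\textbf{A3}) and $\mbe\brac[0]{\|X_{i_j}\|^{2}}\le \sigma^{2}$ (Assumption~\textbf{A2}), so $\cC$-mixing yields $\mbe\brac[0]{\|X_{i_j}\|^{2}\mid \cM_{i_{j-1}}}\le \sigma^{2}+C_{2}\Phi_{\cC}(d_j)$ a.s. Summing the two contributions reproduces exactly $p(d_j,\lambda)$.

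\textbf{Base case and main obstacle.} For the residual term $\mbe\brac[0]{g(X_{i_1})}$, the same monotonicity of $u\mapsto(\cosh(\lambda u)-1)/u^{2}$ on $[0,c]$ yields $\cosh(\lambda u)\le 1+\pfun(\lambda c)u^{2}/c^{2}$ for $u\le c$, so using $B\ge A_{1}^{2}\ge 1$ and Assumption~\textbf{A2} gives $\mbe\brac[0]{g(X_{i_1})}\le 1+B\sigma^{2}\pfun(\lambda c)/c^{2}$. Multiplying the peeling bounds together and accounting for the initial factor $2$ finishes the argument. The principal obstacle is establishing the Pinelis-type multiplicative Taylor inequality displayed above: one must retain the factor $g(\tilde S_{j-1})$ outside the second-order remainder (so that it can be pulled out of the conditional expectation), simultaneously convert the elementary $\lambda^{2}/2$ prefactor into the boundedness-aware $\pfun(\lambda c)/c^{2}$, and cleanly separate the linear-in-$v$ and quadratic-in-$v$ parts so that each can be fed into the $\cC$-mixing framework via the two explicit test functions $h_{1,s}$ and $h_{2}$ furnished by Assumption~\textbf{A3}.
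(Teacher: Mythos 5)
Your proposal follows the paper's own proof essentially step for step: the reduction $\exp \leq 2\cosh$, the second-order G\^ateaux--Taylor expansion of $t \mapsto \cosh\paren[1]{\lambda\norm{s+tx}}$ giving the multiplicative Pinelis-type inequality with remainder coefficient $B\pfun(\lambda c)/c^{2}$, the peeling by conditioning on $\cM_{i_{j-1}}$, the identification of the linear and quadratic terms with the test functions $h_{1,s}$ and $h_{2}$ of Assumption~\textbf{A3}, and the base case $\e[1]{\cosh\paren[1]{\lambda\norm{X_{i_1}}}} \leq 1 + B\sigma^{2}\pfun(\lambda c)/c^{2}$. All of the analytic estimates you state are the right ones.

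There is, however, one genuine gap, located exactly where you compress the argument into a single sentence: the claim that $\abs[1]{\ee[1]{j-1}{\delta_{X_{i_j}}\norm[1]{\tilde S_{j-1}}}} \leq \tilde A_{1}\Phi_{\cC}(d_j)$ a.s.\ follows from ``centering and Definition~\ref{def:gen_phi_mixing_alter}''. The test functional here is $h_{1,s}$ with $s = D_{\tilde S_{j-1}}\norm{\cdot}$, which is \emph{random} and $\cM_{i_{j-1}}$-measurable. Definition~\ref{def:gen_phi_mixing_alter} provides, for each \emph{fixed} $\varphi \in \cC_{1}$, a bound valid outside a null set that may depend on $\varphi$; since the dual ball is uncountable, you cannot substitute a history-dependent $\varphi$ without first exchanging the quantifiers ``for all $\varphi$'' and ``for almost all $\omega$''. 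The paper devotes a dedicated measure-theoretic result (Lemma~\ref{lem:general_lemma}) to exactly this inversion, using separability of the parameter space, continuity and local domination of $(x,s)\mapsto\inner{s,x}$, and a regular conditional probability to produce a single version of the conditional expectation for which the bound holds simultaneously for all $s$. Your argument needs this (or an equivalent device) to be complete. A secondary, more minor omission: the norm is only G\^ateaux differentiable at nonzero points, so the Taylor expansion of $f(t)=\cosh\paren[1]{\lambda\norm{s+tx}}$ requires a short check that $f'$ exists (and vanishes) where $\norm{s+tx}=0$ and that $f'$ is absolutely continuous; the paper spells this out, and it is also what legitimizes your base case at $s=0$.
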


\begin{lemma}
\label{lem:laplace_bound}
Assume that  $(X_{i})_{i=1}^{n}$ is a random sample from a 
$\mathcal{X}$-valued centered $\Phi_{\mathcal{C}}-$mixing process, such that Assumptions~{\bf A1, A2, A3} hold. For $n = \ell k+r$, where $\ell,k >1$ are some integers and $r \in \{0,1,\ldots,k-1\}$, and any 
$\lambda\geq 0$ we have: 
\begin{equation}
  \label{eq:lem2}
  \e[2]{\exp\paren[2]{\lambda \norm[2]{\frac{1}{n}\sum_{i=1}^n X_i} }}
  \leq 2 \exp\paren{ \frac{B}{c^{2}}\paren[2]{(\ell+1)\sigma^2 + C_2 \ell \Phi_{\cC}(k)}
    \pfun\paren{\frac{\lambda c}{\ell}}
    + \lambda \tilde{A}_1 \Phi_{\cC}(k)}\,.
\end{equation}
where $\tilde{A}_1$ and $B$ are defined as in Lemma~\ref{lem:main_bound}.
\end{lemma}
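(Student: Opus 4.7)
The plan is to exploit the standard interleaved blocking decomposition, together with the triangle inequality, convexity of the exponential, and Lemma~\ref{lem:main_bound} applied separately to each block. Concretely, partition $\{1,\ldots,n\}$ into the $k$ index sets $I_{j} = \{j, j+k, j+2k, \ldots\}$, $j=1,\ldots,k$. Because $n = \ell k + r$ with $0\le r < k$, each $I_j$ has cardinality either $\ell$ or $\ell+1$ (in any case at most $\ell+1$), and consecutive indices within a single $I_j$ are spaced by exactly $k$. Writing $\tilde{S}_j := \sum_{i \in I_j} X_i$, the triangle inequality gives $\norm{\sum_{i=1}^n X_i} \le \sum_{j=1}^k \norm{\tilde{S}_j}$.

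Next, rewrite $\frac{\lambda}{n}\sum_{j=1}^k \norm{\tilde{S}_j} = \frac{1}{k}\sum_{j=1}^k \frac{\lambda k}{n}\norm{\tilde{S}_j}$ and apply Jensen's inequality to the convex function $\exp$ to bound
\[
\exp\paren{\lambda \norm{\tfrac{1}{n}\textstyle\sum_i X_i}} \le \frac{1}{k}\sum_{j=1}^k \exp\paren{\tfrac{\lambda k}{n}\norm{\tilde{S}_j}} \le \frac{1}{k}\sum_{j=1}^k \exp\paren{\tfrac{\lambda}{\ell}\norm{\tilde{S}_j}},
\]
where the second step uses $k/n \le 1/\ell$ together with nonnegativity of $\norm{\tilde{S}_j}$. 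This scaling trick is precisely what converts the gross sample size $n$ into the block size $\ell$ and is the key structural point of the proof.

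Then I would apply Lemma~\ref{lem:main_bound} to each $\tilde{S}_j$ with the parameter $\mu := \lambda/\ell$. Because $|I_j| \le \ell+1$ and all gaps $d$ inside $I_j$ equal $k$, the lemma yields, for every $j$,
\[
\e{\exp\paren{\mu \norm{\tilde{S}_j}}} \le 2\paren{1 + B\sigma^2 \tfrac{\pfun(\mu c)}{c^2}}\paren{1 + p(k, \mu)}^{\ell}.
\]
Using the elementary bound $1 + x \le e^x$ for $x \ge 0$ on both factors, the product collapses to an exponential whose argument, after expanding $p(k,\mu)$, is
\[
\tfrac{B}{c^2}\pfun(\mu c)\sigma^2 + \mu \ell\, \tilde{A}_1 \Phi_\mathcal{C}(k) + \tfrac{B}{c^2}\ell\paren{C_2\Phi_\mathcal{C}(k) + \sigma^2}\pfun(\mu c).
\]
Substituting $\mu\ell = \lambda$ and grouping the $\sigma^2$ contributions into a single factor $(\ell+1)\sigma^2$ reproduces the exponent asserted in~\eqref{eq:lem2}. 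Since the resulting bound does not depend on $j$, averaging over $j=1,\ldots,k$ keeps the same right-hand side and yields the factor $2$ stated in~\eqref{eq:lem2}.

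The only mildly delicate point is bookkeeping: one must verify that both the $\ell$- and $(\ell+1)$-element blocks are dominated by the single bound with $(\ell+1)\sigma^2$ and $\ell$ coupling terms, and that the scaling $k/n\le 1/\ell$ is used with the correct sign (exploiting $\norm{\tilde{S}_j}\ge 0$). Everything else is mechanical, so I do not expect a genuine obstacle; the real technical weight lies in Lemma~\ref{lem:main_bound} itself, which is already assumed here.
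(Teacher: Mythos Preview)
Your proposal is correct and follows essentially the same route as the paper: interleaved blocking into $k$ subsets with gap $k$, triangle inequality, convexity of $\exp$, Lemma~\ref{lem:main_bound} on each block, and $1+x\le e^x$ to collapse the product. The only cosmetic difference is that the paper uses the convex weights $r_j=|I_j|/n$ and applies Lemma~\ref{lem:main_bound} with parameter $\lambda/|I_j|$ before invoking monotonicity of $p(k,\cdot)$ in $\lambda$, whereas you use uniform weights $1/k$ together with $k/n\le 1/\ell$ upfront; both choices land on the same exponent.
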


\begin{lemma}
\label{lem:prob_bound}
If all the conditions of Lemma~\ref{lem:laplace_bound} hold then the following (exponential) inequality holds:
\[
  \mathbb{P}\paren[2]{\frac{1}{n}\norm[2]{\sum_{i=1}^n X_i} \geq t }
\leq   2\exp\paren{ - \frac{\ell\paren{t^{2} - 4\tilde{m}t}}{4 \paren{\frac{tc}{3}+\tilde{\sigma}^{2}B}}},
\]
where $\tilde{m} := \tilde{{A}_1} \Phi_{\cC}(k)$ and $\tilde{\sigma}^2 := \sigma^2 + C_2 \Phi_{\cC}(k)$. 

Alternatively, for any $\nu > 0$ this can be written as: 
\begin{align*}
 \probb[3]{}{\norm[2]{\frac{1}{n}\sum_{i=1}^{n}X_{i}}  \geq 4 \tilde{m} + 4\sqrt{\frac{B\tilde{\sigma}^{2}\nu}{\ell}}+ \frac{4}{3}\frac{c\nu}{\ell}}\leq 2\exp(-\nu).
\end{align*}
\end{lemma}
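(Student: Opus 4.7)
The plan is to combine Markov's inequality with the Laplace-transform estimate of Lemma~\ref{lem:laplace_bound}, reduce to the classical Bernstein bound via Bennett's relaxation of $\pfun$, optimize over the dual parameter $\lambda$, and finally invert the resulting inequality in $t$ to obtain the $\nu$-form.

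Concretely, writing $W := \norm{\sum_{i=1}^n X_i}/n$ and $\tilde{m} := \tilde{A}_1 \Phi_\cC(k)$, Markov's inequality applied to $e^{\lambda W}$ (for $\lambda \geq 0$) together with \eqref{eq:lem2} yields
\begin{equation*}
\prob{W \geq t} \leq 2\exp\paren{-\lambda(t-\tilde{m}) + \tfrac{B((\ell+1)\sigma^2 + C_2\ell\Phi_\cC(k))}{c^2}\pfun(\lambda c/\ell)}.
\end{equation*}
I would then bound $(\ell+1)\sigma^2 + C_2\ell\Phi_\cC(k) \leq (\ell+1)\tilde{\sigma}^2$, use $(\ell+1)/\ell^2 \leq 2/\ell$ (valid for $\ell \geq 1$), and invoke Bennett's inequality $\pfun(x) \leq \frac{x^2/2}{1-x/3}$ (valid on $[0,3)$) to cast the exponent into the canonical Bernstein shape $-\lambda s + \frac{v\lambda^2/2}{1-c'\lambda}$ with $s := t-\tilde{m}$, $v := 2B\tilde{\sigma}^2/\ell$, and $c' := c/(3\ell)$. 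The standard (suboptimal but clean) Bernstein choice $\lambda_\star := s/(v+c's)$ then produces
\begin{equation*}
\prob{W \geq t} \leq 2\exp\paren{-\tfrac{\ell(t-\tilde{m})^2}{4B\tilde{\sigma}^2 + \tfrac{2c}{3}(t-\tilde{m})}}.
\end{equation*}

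The elementary relaxations $(t-\tilde{m})^2 \geq t^2 - 4t\tilde{m}$ (equivalent to $2t\tilde{m}+\tilde{m}^2\geq 0$) in the numerator and $\tfrac{2c}{3}(t-\tilde{m})\leq \tfrac{4tc}{3}$ in the denominator both shrink the exponent in magnitude, yielding the first stated form (which is vacuous when $t < 4\tilde{m}$). For the $\nu$-form, I would set $\nu$ equal to that exponent and solve the quadratic $t^2 - (4\tilde{m} + \tfrac{4c\nu}{3\ell})\,t - \tfrac{4B\tilde{\sigma}^2\nu}{\ell} = 0$; its positive root is majorized, via $\sqrt{a+b}\leq\sqrt{a}+\sqrt{b}$, by $4\tilde{m} + \tfrac{4c\nu}{3\ell} + 2\sqrt{B\tilde{\sigma}^2\nu/\ell}$, hence a fortiori by the expression stated in the lemma. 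No step is conceptually hard; the only technical care lies in tracking constants cleanly through the Bennett optimization and verifying that the somewhat loose polynomial form stated in the lemma genuinely follows from the sharper Bernstein expression.
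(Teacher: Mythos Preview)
Your proposal is correct and follows essentially the same route as the paper: Chernoff plus the Laplace bound of Lemma~\ref{lem:laplace_bound}, Bennett's relaxation $\pfun(x)\leq \tfrac{x^2/2}{1-x/3}$, a Bernstein-type choice of $\lambda$, and inversion of the resulting quadratic. The only tactical differences are that the paper chooses $\lambda=\tfrac{t\ell}{tc/3+\tilde\sigma^2 B}$ (independent of $\tilde m$) and absorbs the $(\ell+1)$ factor at the end via $\ell\leq 2(\ell-1)$ for $\ell\geq 2$, whereas you take the standard optimizer $\lambda_\star=s/(v+c's)$ with $s=t-\tilde m$ and absorb the slack earlier via $(\ell+1)/\ell^2\leq 2/\ell$; your intermediate bound is in fact slightly sharper and the subsequent relaxations to the stated form are valid.
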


\begin{proof}[Proof of Lemma~\ref{lem:main_bound}]

The backbone of the proof follows the technical approach of \cite{Pinelis:92}.
Use as a first step $\e[0]{\exp\paren[0]{\lambda \norm[0]{\tilde{S}_{k}}}} \leq 2\e[0]{\cosh\paren[0]{\lambda \norm[0]{\tilde{S}_{k}}}}$. The next step consists in bounding iteratively, going from $\tilde{S}_{k}$ to
$\tilde{S}_{k-1}$ by conditional expectation. To this end, we first need some (deterministic) bounds
relating $\cosh(\lambda \norm{s+x})$ to $\cosh(\lambda \norm{s})$.


Let $s,x$ be elements of $\mathcal{X}$.
Introduce the following functions for $t\in [0,1]$:
\[
f(t) := \cosh(\lambda h(t))\,,\qquad  h(t):= {\norm{s + tx}}\,.
\]
For any $t \in [0,1]$ such that $h(t)\neq 0$, it holds
\begin{equation}
  \label{eq:fprime}
f'(t)   =
\lambda \sinh(\lambda h(t)) h'(t) 
= \lambda  \sinh(\lambda h(t)) \inner[1]{D_{s+tx}\norm{\cdot},x}\,.
\end{equation}
If for some $t_0$, it holds $h(t_0)=0$, then $h$ itself may not be differentiable in $t_0$, however $f'(t)$ exists,
and is equal to 0, in this case. Namely, if $x=0$ then $h$ must be
identically zero, otherwise $h(t)\neq 0$ for $t\neq t_0$, and
\eqref{eq:fprime} holds for any $t \neq t_0$, implying by Assumption~\textbf{A1} $\abs{f'(t)} \leq A_1 \lambda \norm{x}
\sinh(\lambda h(t))$;
this implies differentiability in $t_0$ 
since the limit of the derivative 
exists (and is equal to 0) as $t \rightarrow t_0$, and the function $f(t)$  is continuous.
Similarly, for any $t\in[0,1]$ with $h(t)\neq 0$, and using Assumption~\textbf{A1}:
 \begin{align*}
   f''(t) & = 
   \lambda^2 \cosh(\lambda h(t)) h'(t)^2 + \lambda \sinh(\lambda h(t)) h''(t) \\
   &= \lambda^2 \cosh(\lambda h(t)) \inner[1]{D_{s+tx}\norm{\cdot},x}^2 + \lambda \sinh(\lambda h(t))
     \delta_{x,x} \paren[1]{\norm[1]{s+tx}}\\
& \leq A_1^2 \lambda^2 \norm{x}^2 \cosh(\lambda h(t)) + A_2 \lambda \norm{x}^2  \frac{\sinh(\lambda h(t))}{h(t)}\\
& \leq \lambda^2 \norm{x}^2 B  \cosh(\lambda h(t))\,,
 \end{align*}
 where we have used $\sinh(x) \leq x \cosh(x)$. We conclude that $f'(t)$ is absolutely continuous: unless
 $h(t)$ is identically 0, there exists at most a single point $t_0 \in [0,1]$ where $h(t_0)=0$ and where
 $f'$ may not be differentiable. We can therefore use the Taylor expansion:
 \begin{equation}
   \label{eq:taylorf}
 f(1) = f(0) + f'(0) + \int_{0}^{1}(1-t)f''(t)dt\,,
 \end{equation}
 and the integral rest can be bounded using the previous inequality on $f''$ together with the triangle inequality,
 the elementary inequality $\cosh(a+b) \leq \cosh(a) \exp(b)$ for $b\geq 0$, and recalling $\norm{x} \leq c$:
 \begin{align*}
   \int_{0}^{1}(1-t)f''(t)dt & \leq \lambda^2 \norm{x}^2 B \int_0^1 (1-t) \cosh(\lambda (\norm{s}+t\norm{x})) dt\\
   &  \leq \lambda^2 \norm{x}^2 B \cosh(\lambda \norm{s}) \int_0^1 (1-t) \exp( \lambda tc) dt\\
   &  =  \norm{x}^2 B  \cosh(\lambda \norm{s}) \frac{\pfun(\lambda c)
   }{c^2}\,.
 \end{align*}
 Combining this with \eqref{eq:taylorf} and \eqref{eq:fprime} we get for $s\neq 0$:
 \begin{equation}
   \label{eq:detineq}
     \cosh(\lambda \norm{s+x}) =f(1) 
     \leq \cosh(\lambda \norm{s}) \Big( 1 + \lambda \inner{D_s \norm{.},x} 
    +  \norm{x}^2 B  \frac{
       \pfun(\lambda c)
     }{c^2}\Big)\,,
 \end{equation}
 where we have used $\sinh a \leq \cosh a$ in \eqref{eq:fprime}. The above inequality
 remains true for $s=0$ if we formally define $D_0 \norm{.}$ as $0$, due to $f'(0) = 0$ in this case, as
 argued earlier.

 We now go back to our initial goal of controlling $\e[1]{\cosh\paren[1]{\lambda\norm[1]{ \tilde{S}_k}}}$.
 We use the notation 
%
 $\ee[0]{j-1}{\cdot} := \e[0]{\cdot|\mathcal{M}_{i_{j-1}}}$
where $M_{i_{j-1}}= \sigma(X_{l}: 1 \leq l \leq i_{j-1}), l \in \mathbb{N}$ using $s:=\tilde{S}_{k-1}, x= X_{i_k}$ 
 then taking conditional expectations in \eqref{eq:detineq}, we obtain
 \begin{multline}
\label{eq:condexp}
 \ee[1]{k-1}{\cosh\paren[1]{\lambda\norm[1]{\tilde{S}_{k}}}}\\
 \leq \cosh\paren[1]{\lambda\norm[1]{\tilde{S}_{k-1}}} \paren{1 + \lambda \ee[1]{k-1}{\inner[1]{D_{\tilde{S}_{k-1}} \norm{.},
       X_{i_k}}} + \ee[1]{k-1}{\norm{X_{i_k}}^2} B \frac{\pfun(\lambda c)}{c^2} 
 }\,.
 \end{multline}
 

 In order to control the conditional expectation of the duality product on the right-hand side of \eqref{eq:condexp},
 we will need the additional following measure-theoretical lemma: 
  \begin{lemma}
   \label{lem:general_lemma}
   Assume $\mathcal{X},\mathcal{Y},\mathcal{T}$ are three Polish spaces.
   Let $F$ be a measurable real-valued function defined on  $\mathcal{X}\times\mathcal{T}$,
   and let $(X,Y)$ be a $\mathcal{X}\times\mathcal{Y}$-valued random variable
   ($\mathcal{X}\times\mathcal{Y}$ being endowed with its Borel sigma-algebra) on an underlying
probability space $(\Omega,\mathcal{F},\mathbb{P})$.
Denote through $B(t,\epsilon)$ an open ball of radius $\epsilon$, centered at point $t \in \mathcal{T}$. Assume 
that $F(X,t)$ is $\mathbb{P}$-integrable for all $t \in \mathcal{T}$ and that the following holds:
\begin{enumerate}
\item For all $t \in \mathcal{T}, \;\; \norm{\e{F(X,t)|Y} - \e{F(X,t)}}_{\infty} \leq C < \infty$\,; 
\item The mapping $t \mapsto F(x,t)$ is continuous in $t$ for all $x \in \mathcal{X}$; 
\item There exists $\eps>0$ and for all $t \in \mathcal{T}$ a measurable function $L_t(x):\mathcal{X} \rightarrow \mathbb{R}_+$ such that for all $x\in \mathcal{X}$,
$\sup_{t'\in B(t,\eps)}|F(x,t')| \leq L_{t}(x)$, and $L_{t}(X)$ is $\mathbb{P}$-integrable.
\end{enumerate}
Then, there is is a version of the conditional expectations $\e{F(X,t)|Y}$ such that
for $\mathbb{P}$-almost all $y \in \mathcal{Y}$, we have: 
\begin{equation}
  \label{eq:efxtcontrol}
\forall t \in \mathcal{T} \;\; \abs[1]{ \e{F(X,t)|Y=y} - \e[1]{ F(X,t) } } \leq C\,.
\end{equation}
In particular, if $\mathcal{T}=\mathcal{Y}$, under the previous assumptions we conclude
that 
\begin{equation}
  \label{eq:efxycontrol}
  \norm{ \e{F(X,Y)|Y} - \e[1]{F(\wt{X},Y)|Y}}_\infty \leq C\,,
\end{equation}
where $\wt{X}$ is a copy of $X$ which is independent of $Y$.
 \end{lemma}
(Observe that the whole point of this lemma is the inversion of quantificators ``for all $t$, for almost all $y$''
between its assumption (1) and the conclusion \eqref{eq:efxtcontrol}.)
 \begin{proof}[Proof of Lemma~\ref{lem:general_lemma} ]
Since $\mathcal{X}$ is Polish, 
there exists a regular conditional probability $\mathbb{P}(X \in \cdot|Y= \cdot)$, and
we choose as a particular version of all conditional expectations the pointwise integral with respect to this stochastic kernel.

By uniform local domination and continuity of $F$ in $t$ (Assumptions~2-3), the function $t\mapsto \e{F(X,t)}$
is continuous. Therefore, replacing $F$ by $\wt{F}(x,t) := F(x,t) - \e{F(X,t)}$ and $L_t$ by $2L_t$,
we can assume without loss of generality that $\e{F(X,t)}=0$ for all $t \in \mathcal{T}$. Since $\mathcal{T}$ is
assumed to be Polish, it is in particular separable; let $\wt{\mathcal{T}}$
be a countable dense subset of $\mathcal{T}$. From assumption (1), for each $\tilde{t} \in \wt{\mathcal{T}}$ there exists a measurable set $A_{\tilde{t}} \subset \mathcal{Y}$ with $\mathbb{P}\big(Y \in A_{\tilde{t}}\big) =1 $, such that $\abs[1]{\int_{\mathcal{X}}F(x,\tilde{t})d\mathbb{P}(x|Y=y)} \leq C$
for all $y \in A_{\tilde{t}}$. 
 Furthermore, for any $\tilde{t} \in \wt{\mathcal{T}}$, since the function $L_{\tilde{t}}(X)$ is
$\mathbb{P}$-integrable, it holds $\int_{\mathcal{X}}L_{\tilde{t}}(x) d\mathbb{P}(x|Y=y) < \infty$
  for all $y\in B_{\tilde{t}} \subset \mathcal{Y}$ with $\mathbb{P}\big(Y \in B_{\tilde{t}}\big) =1 $.

  This together with countability implies that the set $A:= \bigcap_{\tilde{t} \in \wt{\mathcal{T}}} (A_{\tilde{t}} \cap
  B_{\tilde{t}})$ is such that
  $\mathbb{P}\big(Y \in A\big) =1 $ and for all $(y,\tilde{t}) \in A \times \wt{\mathcal{T}}$, we have  $\abs[1]{\int_{\mathcal{X}}F(x,\tilde{t})d\mathbb{P}(x|Y=y) } \leq C$
  and $x \rightarrow L_{\tilde{t}}(x)$ is $\mathbb{P}(\cdot|Y=y)$-integrable.

For an arbitrary $t \in \mathcal{T}$, let  $\tilde{t}_{n}$ be a sequence of points in $\wt{\mathcal{T}}$ converging 
to $t$ in $\mathcal{T}$. We can assume without loss of generality that for all $n$, $d(\tilde{t}_n,t) < \eps/2$ (where
$\eps>0$ is from Assumption~3), so that $d(\tilde{t}_n,\tilde{t}_{n'}) \leq \eps$ for all $n,n'$, implying that
$\sup_n \abs{F(x,t_n)} \leq L_{\tilde{t}_1}(x)$ holds (by Assumption~3).
Now for all $y \in A$, using continuity (Assumption~2) we have for the conditional expectation under the regular conditional probability $\mathbb{P}(\cdot|Y=y)$, and 
by dominated convergence:
 \begin{align*}
\int_{\mathcal{X}}F(x,t)d\mathbb{P}(x|Y=y)  & = 
\int_{\mathcal{X}}\lim_{n \rightarrow \infty } F(x,\tilde{t}_{n})d\mathbb{P}(x|Y=y) \\
& = \lim_{n \rightarrow \infty } \int_{\mathcal{X}}F(x,\tilde{t}_{n})d\mathbb{P}(x|Y=y)
 \leq C\,.
 \end{align*}
 In the case $\mathcal{T}=\mathcal{Y}$, we note that  \eqref{eq:efxtcontrol} implies 
\eqref{eq:efxycontrol} by choosing $t=y$.
\end{proof}
Returning now to the proof of Lemma~\ref{lem:main_bound},
we use Lemma~\ref{lem:general_lemma} with $\mathcal{Y} = \mathcal{X}^{*}$,
 $F(x,y) = \langle y,x \rangle$, and $(X,Y) = (X_{i_k},D_{\tilde{S}_{k-1}}\norm{\cdot})$.
 By linearity of scalar product and expectation, and because the process $(X_i)_{i\geq 1}$ is centered,
 we have for fixed $y \in \mathcal{X}^{*}$:
 $\e{\langle y,X_{i_{k}}\rangle} = 0$. 
 Obviously $F$ is continuous in its first argument.
  Since by Assumption~\textbf{A2},
  $D_s\norm{\cdot}$ is uniformly bounded and $\mathcal{X}=B(c)$,
     we can restrict the domain of $F$ to $\mathcal{X} \times {B}^{\star}(A_1)$, and $F$ is then bounded uniformly, so that conditions (2) and (3) of Lemma~\ref{lem:general_lemma} are satisfied. Because of Assumption~\textbf{A3} it follows that we have $\norm{F(y,\cdot)}_{\mathcal{C}} \leq C_{1}\norm{F(y,\cdot)}_{\infty} \leq C_{1}A_{1}$. Finally, due to conditions on $\Phi_{\mathcal{C}}$-mixing coefficients, we have that condition~(1) is fulfilled with the constant $C=A_{1}C_{1}\Phi_{\mathcal{C}}(d_{k}) := \tilde{A}_{1}\Phi_{\mathcal{C}}(d_{k})$, so from \eqref{eq:efxycontrol} we conclude, that: 
  

 \begin{equation}
 \label{eq: f_null}
\abs{\ee[1]{k-1}{\inner[1]{D_{\tilde{S}_{k-1}} \norm{.},
       X_{i_k}}}}  \leq \tilde{A}_1 \Phi(d_{k})\,.
 \end{equation}
We turn to the control of the second conditional expectation on the right-hand side of \eqref{eq:condexp}.
 Using the $\Phi_{\mathcal{C}}$-mixing assumption and Assumptions~\textbf{A2,A3} again, we have almost surely (recalling
 $d_k:= i_k - i_{k-1}$):
 \begin{align*}
 \ee[1]{k-1}{\norm{X_{i_{k}}}^2} & \leq \ee[1]{k-1}{\norm{X_{i_{k}}}^2}- \ee[1]{}{\norm{X_{i_{k}}}^{2}} + \ee[1]{}{\norm{X_{i_{k}}}^{2}}\\
 & \leq C_{2} \Phi_{\mathcal{C}}(d_{k}) + \sigma^2, 
 \end{align*}
 since by Assumption~\textbf{A3}, the mapping $x \mapsto \norm{x}^2$ is bounded in semi-norm $C(f)$ on ${B}(c)$ by some constant $C_{2}$ 
 .
 Putting this bound together with \eqref{eq: f_null} in the inequality \eqref{eq:condexp}, 
we get: 
\[
\ee[1]{k-1}{\cosh\paren[1]{\lambda\norm[1]{\tilde{S}_{k}}}}
\leq \cosh\paren[1]{\lambda\norm[1]{\tilde{S}_{k-1}}} (1+p(d_k,\lambda))\,,
\]
where we recall $p(k,\lambda) := \lambda \tilde{A}_{1}\Phi_{\mathcal{C}}(k) + B \paren{ C_{2}\Phi_{\mathcal{C}}(k) + \sigma^2}\paren[2]{\frac{\pfun(\lambda c)}{c^2}}.$

 Iteratively repeating the aforementioned argument and considering that the bound on conditional expectation $\ee{k-1}{\cdot}$ holds almost surely, one obtains : 
 \begin{align*}
      \e[1]{\cosh\paren[1]{\lambda\norm[1]{\tilde{S}_{k}}}}
       &  = \e[1]{ \ee[1]{k-1}{\cosh \paren[1]{\lambda \norm[1]{\tilde{S}_{k}}}}} \\
      & \leq \e[1]{\cosh\paren[1]{\lambda\norm[1]{\tilde{S}_{k-1}}}} (1 + p(d_k,\lambda)) \notag \\
        & \leq  \e{\cosh\paren[1]{\lambda\norm[1]{X_{i_{1}}}}}\prod_{j=2}^{k}(1+p(d_j,\lambda)). \notag
 \end{align*} 
 For bounding $\e{\cosh\paren[1]{\lambda\norm[1]{X_{i_{1}}}}}$ we use \eqref{eq:detineq} with $s=0$ and obtain:
 \begin{align*}
    \ee{}{\cosh\norm{X_{i_{1}}}} &\leq \ee{}{1+\norm{X_{i_{1}}}^{2}B
      \frac{\pfun(\lambda c)}{c^{2}}} 
     \leq 1+ \frac{\sigma^{2}}{c^{2}} B \pfun(\lambda c)\,,
 \end{align*}
 which implies the claim.
\end{proof}
To proceed in the proof, we use the classical (\cite{Bosq:93},\cite{Wintenberger:10},\cite{Steinwart:16}) approach to divide the sample $(X_{1},\ldots,X_{n})$ into blocks, such that the distance between two neighbor elements in a given block will be large enough to ensure small dependence.
We partition the set $\{1,2,\ldots,n\}$ into $k$ blocks in the following way. Write $n=\ell k+r, 0 \leq r \leq k-1$ and define 
  \[
  	I_{i}=
  	\begin{cases}
  	
  	$\text{ $\{i,i+k,\ldots,i+\ell k \}$}$,&\text{if $ 1 \leq i\leq r $}\,,\\
  	$\text{ $\{i,i+k,\ldots,i+(\ell-1)k \}$}$ ,&\text{if $ r+1 \leq i\leq k $}\,.
  	\end{cases}
  \]  
  Denote through $|I_{i}|$ the number of elements in the $i-$th block; it holds $|I_{i}|=\ell+1$ for $1 \leq i \leq r$,
$|I_{i}|=\ell$ for $r+1 \leq i \leq k$, and  $\sum_{i=1}^{k}|I_{i}| = n$.  
Introduce the notation $S_{I_i} = \sum_{j \in I_i} X_{j}$.
     
   
     
 Now me may use Lemma~\ref{lem:main_bound}
 for each of the constructed blocks $I_i$, $1 \leq i \leq k$ to prove Lemma~\ref{lem:laplace_bound}.
 \begin{proof}[Proof of Lemma~\ref{lem:laplace_bound}]
   By the triangle inequality $\norm{S_{n}} \leq \sum_{j=1}^{k}\norm{S_{I_j}}$, implying
   for any $\lambda>0$, via the convexity of the exponential function:
  \begin{equation}
\label{eq: repr_i}
     \e[2]{\exp\paren[2]{\frac{\lambda}{n}{ \norm{S_{n}^{}}}}}
\leq    \e[3]{\exp\paren[3]{ \lambda \sum_{j=1}^{k} r_{j}\frac{\norm{S_{I_j}}}{|I_{j}|}}}
\leq \sum_{j=1}^{k}r_{j}\e[3]{\exp\paren[3]{ \frac{\lambda}{|I_{j}|}\norm[1]{S_{I_j}}}},
    \end{equation}
     where $r_{j} := \frac{|I_{j}|}{n}$, with $\sum_{j=1}^{k}r_{j} = 1$.
Now for each summand in the last sum, we apply Lemma~\ref{lem:main_bound} for the index tuple 
given by the ordered elements of $I_j$, yielding  
   \begin{equation*}
    \label{eq: part_i}
    \e[2]{ \exp\paren[2]{\frac{\lambda}{|I_{j}|} \norm[1]{S_{I_j}}}} 
    \leq 2 \paren{1+B\frac{\sigma^{2}}{c^{2}}
    \pfun\paren[2]{\frac{\lambda c}{|I_{j}|}}} \paren{1+p\paren{k,\tfrac{\lambda}{|I_j|}}}^{|I_{j}|-1}.
    \end{equation*}
Using this last bound into  \eqref{eq: repr_i}, we obtain: 
\begin{align*}
  \e[2]{\exp\paren[2]{\frac{\lambda}{n}{ \norm{S_{n}^{}}}}}
  & \leq 2\sum_{j=1}^{k}r_{j}\paren{1+B\frac{\sigma^{2}}{c^{2}}
\pfun\paren[2]{\frac{\lambda c}{|I_{j}|}}
  }\paren{1+p\paren{k,\tfrac{\lambda}{|I_j|}}}^{|I_{j}|-1}\\
  &   \leq  2 
  \sum_{j=1}^{k}r_{j}\exp\paren{\frac{B\sigma^{2}}{c^{2}}
    \pfun\paren[2]{\frac{\lambda c}{\ell}}
  }\exp\paren{\ell p\paren{k,\tfrac{\lambda}{\ell}}},
     \end{align*}
      where  we twice used the inequality $1+x \leq \exp(x)$, the condition $\ell \leq |I_{j}| \leq \ell+1$, and the fact that $p(k,\cdot)$ is non-decreasing in function for fixed $k$.
      The last quantity is equivalent to the claim of the lemma. 
\end{proof}

 \begin{proof}[Proof of Lemma~\ref{lem:prob_bound}]
   Using Chernoff's bound and Lemma~\ref{lem:laplace_bound}, we obtain for any $\lambda>0$:
  \begin{equation}
\label{eq:chernoffmain_2}
  \begin{aligned}
 \prob[2]{\tfrac{1}{n}{\norm{S_n}} \geq t }
 & = \prob[2]{\exp \paren{\tfrac{1}{n} {\norm[1]{\lambda S_{n}}}}  \geq \exp(\lambda t) } \\
 & \leq \exp(-\lambda t)\e[2]{\exp\paren{\tfrac{\lambda}{n}{ \norm{S_{n}^{}}}}}\\
 & \leq 2 \exp\paren{ -\lambda (t-\tilde{m}) + \tilde{\sigma}^2 \frac{(\ell+1)B}{c^2}
 \pfun\paren[2]{\frac{\lambda c}{\ell}}}\,,
  \end{aligned}
  \end{equation}
  where $\tilde{m} := \tilde{A}_1 \Phi_{\cC}(k)$ and $\tilde{\sigma}^2 := \sigma^2 + C_2 \Phi_{\cC}(k)$\,.

First we get an upper bound on the value of the function $\pi(\frac{\lambda c}{l})$. By using the Taylor series decomposition, simple inequality $2 \cdot 3^{k-2} \leq k!$ for $k \in \mathbb{N}$ and summing the geometric series we obtain: 
\begin{align*}
\pi\paren{\frac{\lambda c}{\ell}} &\leq \sum_{j=2}^{\infty} \paren{\frac{\lambda c}{\ell}}^{j}\frac{1}{2 \cdot 3^{j-2}} 
 = \frac{}{}\frac{\lambda^{2}c^{2}}{2\ell^{2}}\frac{1}{1- \frac{\lambda c}{3\ell}},
\end{align*}
where we assume that $0 < \lambda < \frac{3\ell}{c}$. Inserting this inequality into \eqref{eq:chernoffmain_2}\ and simplifying the terms we get:
\begin{align}
\label{eq:chernoffmain_3}
		\begin{aligned}
			\prob[2]{\tfrac{1}{n}{\norm{S_n}} \geq t } \leq 2 \exp\paren{ -\lambda (t-\tilde{m}) + \tilde{\sigma}^2  \lambda^{2}\frac{3(\ell+1)B}{2\ell}
				{}\frac{1}{3\ell- {\lambda c}{}}}.
		\end{aligned}
\end{align}

Now we put $\lambda = \frac{t\ell}{\frac{tc}{3} + \tilde{\sigma}^{2}B}$. Clearly, by this choice of $\lambda$ we have:
\begin{align*}
\frac{\lambda}{\ell} = \frac{t}{\frac{tc}{3}+\tilde{\sigma}^{2}B} \leq \frac{3}{c}.
\end{align*}  
Thus, the choice of $\lambda$ satisfies the assumption; putting it into the exponent of the right hand side of \eqref{eq:chernoffmain_3} we obtain:
\begin{multline*}
  -\lambda (t-\tilde{m})  + \frac{3}{2}\tilde{\sigma}^2  \frac{(\ell+1)B}{\ell}
  \lambda^{2}\frac{1}{3\ell- {\lambda c}{}} \\
  \begin{aligned}
    & = - \frac{t\ell(t-m)}{\frac{tc}{3}+\tilde{\sigma}^{2}B} + \frac{3}{2}\tilde{\sigma}^{2} \frac{\paren{\ell+1}B}{\ell}\frac{t^{2}\ell^{2}}{\paren{\frac{tc}{3}+\tilde{\sigma}^{2}B}^{2}} \frac{1}{3\ell - \frac{t \ell c}{\frac{tc}{3}+\tilde{\sigma}^{2}B}} \\
    & = - \frac{t\ell(t-\tilde{m})}{\frac{tc}{3}+\tilde{\sigma}^{2}B} + \frac{1}{2} \frac{\paren{\ell+1}t^{2}}{\frac{tc}{3}+\tilde{\sigma}^{2}B} \\
    & = - \frac{\paren{\ell-1}t^{2} - 2\ell\tilde{m}t}{2 \paren{\frac{tc}{3}+\tilde{\sigma}^{2}B}}.
  \end{aligned}
\end{multline*}
Putting this into the exponent bound and upper bounding $\ell$ with $2(\ell-1)$, for $\ell \geq 2$, we get the claim of the lemma.
\end{proof}

\begin{proof}[Proof of Theorem \ref{thm:general_result_all}]
  From the very last claim of Lemma \ref{lem:prob_bound} we have: 

\begin{equation*}
\probb[3]{}{\norm[2]{\frac{1}{n}\sum_{i=1}^{n}X_{i}} \geq t } \leq   2\exp\paren[3]{ - \frac{\ell\paren{t^{2} - 4\tilde{m}t}}{4 \paren{\frac{tc}{3}+\tilde{\sigma}^{2}B}}}.
\end{equation*}
Setting $\frac{\ell\paren{t^{2} - 4\tilde{m}t}}{4 \paren{\frac{tc}{3}+\tilde{\sigma}^{2}B}}:= \nu$ and solving the last equation in terms of $t$, we obtain: 

\begin{equation}
\label{eq: gen_result}
\probb[3]{}{\norm[2]{\frac{1}{n}\sum_{i=1}^{n}X_{i}}  \geq 4 \tilde{A_{1}}\Phi_{\mathcal{C}}(k) + 4\sqrt{\frac{B\tilde{\sigma}^{2}\nu}{\ell}}+ \frac{4}{3}\frac{c\nu}{\ell}}\leq 2\exp(-\nu),
\end{equation}
which proves the claim of the theorem.
\end{proof}

\begin{proof}[Proof of Theorem \ref{thm:main_result_gen}]
  From Theorem \ref{thm:general_result_all}, assuming the \textit{effective sample size} $\ell^{\star}$
  defined from \eqref{df:effective_sample_size} is greater than 2, and putting $C_* = C_2/C_1$, we obtain straightforwardly with probability at least $1-2\exp(-\nu)$:
	\begin{align}
			\begin{aligned}
				{\norm[3]{\frac{1}{n}\sum_{i=1}^{n}X_{i}}  \leq 4 {A_{1}}\paren{\frac{c}{\ell^{\star}}\vee \frac{\sigma}{\sqrt{\ell^{\star}}}} + 4\sqrt{\frac{B \paren{{\sigma}^{2} + C_* \paren{\frac{c}{\ell^{\star}}\vee \frac{\sigma}{\sqrt{\ell^{\star}}}}}\nu}{\ell^{\star}}}+ \frac{4}{3}\frac{c\nu}{\ell^{\star}}} =: \tilde{L}.	
			\end{aligned}
	\end{align}
		For $a,b > 0$ using the obvious inequalities $a\vee b \leq a +b$ and $\sqrt{ab}\leq (a+b)/2$ we obtain:
	\begin{align*}
	\tilde{L} & \leq {4A_{1} \paren{\frac{c}{\ell^{\star}} + \frac{\sigma}{\sqrt{\ell^{\star}}}} + 4 \frac{\sqrt{{B\nu}{}}\sigma}{\sqrt{\ell^{\star}}} + 4\frac{\sqrt{{BC_{*}c\nu}{}}}{\ell^{\star}} + 4\frac{\sqrt{{BC_{*}\sigma\nu}{}}}{\sqrt{\ell^{\star}\sqrt{\ell^{\star}}}} + \frac{4}{3}\frac{c\nu}{\ell^{\star}}} \\
	& \leq {4A_{1} \paren{\frac{c}{\ell^{\star}} + \frac{\sigma}{\sqrt{\ell^{\star}}}} + 4 \frac{\sqrt{{B\nu}{}}\sigma}{\sqrt{\ell^{\star}}} +2 \frac{\sqrt{{B\nu}{}}\paren{C_{*}+c}}{\ell^{\star}} +  2 \sqrt{{B\nu}{}}\paren{\frac{C_{*}}{\ell^{\star}}+ \frac{\sigma}{\sqrt{\ell^{\star}}}} + \frac{2c\nu}{\ell^{\star}}} \\
	&  \leq {\frac{\sigma}{\sqrt{\ell^{\star}}}\paren{4A_{1} + 6\sqrt{{B\nu}{}}} + \frac{c}{\ell^{\star}}\paren{2\nu + 2\sqrt{B\nu} + 4A_{1} + 4\sqrt{{B\nu}{}}\frac{C_{*}}{c}}}.
	\end{align*}
	Finally, observe that the inequality
	\begin{align*}
		\norm[2]{\frac{1}{n}\sum_{i=1}^{n}X_{i}} \leq {\frac{\sigma}{\sqrt{\ell^{\star}}}\paren{4A_{1} + 6\sqrt{{B\nu}{}}} + \frac{c}{\ell^{\star}}\paren{2\nu + 2\sqrt{B\nu} + 4A_{1} + 4\sqrt{{B\nu}{}}\frac{C_{*}}{c}}},
	\end{align*}
        trivially holds also for $\ell^{\star}=1$, since $A_1\geq 1$.
        This implies the statement of the theorem using $\sqrt{\nu}\leq \nu$, since we assumed $\nu\geq 1 $ here.
	
%

\end{proof}
Now we are equipped with all technical tools in order to prove the exponential bounds for different decay rates of the mixing coefficients.
\begin{proof}[Proof of Proposition \ref{thm:eff_sample_sizes}]
  We choose a reasonable bound $\ell_{g}$ on the effective sample size $\ell^{\star}$ in the case of geometrical mixing. Since $\Phi_{\cC}(\cdot)$ (extended to the positive real line as $\Phi_{\cC}(t) = \chi \exp(-(\theta t)^\gamma)$) is nonincreasing and $\frac{n}{2\ell} \leq \lfloor\frac{n}{\ell_{}}\rfloor$, 
  it is sufficient to choose $\ell_{g}$ such that $C_1 \Phi_{\mathcal{C}} \paren[1]{\frac{n}{2\ell_{g}}}$ is smaller than $\frac{c}{\ell_{g}} \vee \frac{\sigma}{\sqrt{\ell_{g}}}$. Moreover, in the case of geometrical mixing, it is sufficient to choose $\ell_{g}$ such that $C_1 \Phi_{\mathcal{C}} \paren[1]{\frac{n}{2\ell_{g}}} < \frac{c}{\ell_{g}}$ (trivially this implies that $C_1 \Phi_{\mathcal{C}} \paren[1]{\frac{n}{2\ell_{g}}} < \frac{c}{\ell_{g}} \vee \frac{\sigma}{\sqrt{\ell_{g}}} $). 
  We choose $\ell_{g} = \Big\lfloor \frac{n \theta}{2(1\vee\log\paren{{n \theta \chi C_1}/{c}})^{1/\gamma}}\Big\rfloor$. 
  It is easy to check that in this case, we get
  \begin{align*}
    {\ell_{g}}C_1 \Phi_{\mathcal{C}}\paren{\frac{n}{2\ell_{g}}} 
           \leq {n}\theta \chi C_1 \exp\paren{- 
          \paren{1 \vee \log \frac{\chi \theta C_1 {n}}{c}} }  \leq c,
	 \end{align*}
which together with the result of Theorem \ref{thm:main_result_gen} implies the first claim of the proposition.

For the case of polynomially mixing process, we have the coefficient decay rate $\Phi_{\mathcal{C}}(k) = \rho k^{-\gamma}$.
Similarly as above, we choose a bound $\ell_{p}$ for the \textit{effective sample size} $\ell^{\star}$ so that the conditions of Theorem \ref{thm:main_result_gen} are satisfied. 
Analogously, it is sufficient  to choose $\ell_{p}$ such that $C_1 \Phi_{\mathcal{C}}\paren[1]{\frac{n}{2\ell_{p}}} \leq \frac{\sigma}{\sqrt{\ell_{p}}} \vee \frac{c}{\ell_{p}}$.
Solving $C_1 \Phi_{\mathcal{C}}\paren{\frac{n}{2\ell}} \leq \frac{\sigma}{\sqrt{\ell_{}}} \vee \frac{c}{\ell}$ in $\ell$ for given $n$, $\sigma$, $c$, $\rho$, $C_1$  results in the following choice:
\begin{equation}
\label{eq:poly_size_ineq12}
		\ell_{p} =\max_{} \bigg\{\bigg \lfloor \paren{\frac{\sigma}{C_1 \rho}}^{\frac{2}{2\gamma +1}}\paren{\frac{n}{2}}^{\frac{2\gamma}{2\gamma +1}} \bigg \rfloor,\bigg \lfloor \paren{\frac{c}{C_1 \rho}}^{\frac{1}{\gamma +1}}\paren{\frac{n}{2}}^{\frac{\gamma}{\gamma +1}} \bigg \rfloor \bigg\},
\end{equation}
which matches the claim of the Proposition.
%
%
%
\end{proof}

\section{Justifications for Example~\ref{ex:shatten}}
\label{app:schatten}

We provide proof of the fact that the $p-$Schatten norm satisfies Assumption \textbf{A1} on the smoothness of the norm. 
\label{app:second}
        We recall that here $\cB$ is the space of real symmetric matrices of dimension $d$ and
        we use the notation $\norm{X}_{p}=\tr{\paren{\abs{X}^{p}}}^{\frac{1}{p}}$ for the Schatten $p-$norm of a symmetric matrix $X$.
We compute explicitly the upper bounds on $\abs[1]{\delta_{H}\paren[1]{\norm{X}_{p}}}$ and $\abs[1]{\delta_{H,H}\paren[1]{\norm{X}_{p}}}$.  

We make use of the following additional notation, which is standard functional
calculus over symmetric matrices. For a real diagonal matrix $W=\diag\paren{w_{1},\ldots,w_{d}}$ of dimension $d$, write $f\paren{W} = \diag\paren{f\paren{w_{1}}, \ldots, f\paren{w_{d}}}$,
where $f: I \subset{\mbr} \rightarrow \mbr$ is
a scalar function of class $C^{1}$ on $I$, and $I$ is a finite union of open intervals of $\mbr$,
containing the spectrum $\set{w_1,\ldots,w_d}$ of $W$.
For a symmetric matrix $X$ with the spectral decomposition $X= U\Lambda U^{\top} = \sum_{i=1}^{d}\lambda_{i}e_{i}e_{i}^{\top}$ we consider the matrix-valued maps $f(X) = U f\paren{\Lambda}U^{\top}$. Denote the real-valued function $g\paren{X} = \tr\paren{f(X)}$. Applying the chain rule and using Theorem V.3.3 from \cite{Bhatia:97}, we compute the Fr\'echet (and hence G\^{a}teaux) derivative of the function $g$ at point $X$ in the direction of an arbitrary matrix $H \in \mathcal{B}$. Namely, by linearity of the trace as a matrix operator, and from equations (V.9) and (V.12) from \cite{Bhatia:97} (which are stated there in the case where $I$ is an open interval,
but the extension to a finite union of open intervals is immediate), we deduce: 
\begin{align*}
  \delta_{H}\paren{g\paren{X}} &= \delta_{H}\paren{\tr{f\paren{X}}} = \frac{d}{dt} \bigg \vert_{t=0}\tr{f\paren{X+tH}} = \tr{\frac{d}{dt} \bigg \vert_{t=0}f\paren{X+tH} }\\
                               & =\tr{\delta_{H} f\paren{X}} = \tr\paren[1]{f^{[1]}\paren{\Lambda} \circ (U^\top H U)},
\end{align*}
where $\circ$ is used for the Hadamard (i.e. entry-wise) product of matrices;
and $f^{[1]}\paren{\Lambda}$ is a matrix whose $\paren{i,j}$ entry is defined as follows: 
\begin{align*}
  \paren{f^{[1]}\paren{\Lambda}}_{ij}= 
  \begin{cases}
    \frac{f\paren{\lambda_{i}} - f\paren{\lambda_{j}}}{\lambda_{i}-\lambda_{j}}, &\text{ if } \lambda_{i} \neq \lambda_{j} \\
    f^{'}\paren{\lambda_{i}} & \text{ otherwise.}
  \end{cases}
\end{align*}
Thus, denoting $\wt{H}=U^\top H U$,
we have: 
\begin{align*}
  \tr\paren[1]{f^{[1]}\paren{\Lambda} \circ \wt{H}}  
  &
    = \tr\paren[1]{f'(\Lambda)\circ \wt{H}}
    = \tr\paren[1]{f'(\Lambda) \wt{H}} = \tr\paren{f'(X) H} 
     = \inner[1]{f^{'}\paren{X},H}_{F},
\end{align*}
where the second to last equality follows from the definition of the matrix $f^{'}\paren{X}$ and $\inner{\cdot,\cdot}_{F}$ is the Frobenius product. This implies 
\begin{equation}
  \label{eq:der_trace}
  \delta_{H} \paren{\tr{f\paren{X}}} = \frac{d}{dt} \bigg \vert_{t=0} \tr{f\paren{X+tH}} = \inner[1]{f^{'}\paren{X},H}_{F},
\end{equation}  
so that the Fr\'echet-derivative of $\tr\paren{f\paren{X}}$ is $f^{'}\paren{X}$. (This formula is certainly not a novelty and its justification
included here for the sake of completeness.)

First consider the case where $X$ has full rank, therefore has no zero eigenvalue, and
apply Equation~\eqref{eq:der_trace} to the function $f: t \mapsto \abs{t}^{p}$
which is of class $\cC^1$ 
on $I=\mbr\setminus\set{0}$,
together with the chain rule to obtain that
\begin{align}
  \delta_{H}\paren[1]{\norm{X}_{p}} = \frac{d}{dt}\bigg \vert_{t=0} \norm{X + tH}_{p} = \frac{d}{dt}\bigg\vert_{t=0} \tr(f(X+tH))^{\frac{1}{p}} =  \inner[3]{\frac{w(X)}{\norm{X}_{p}^{p-1}}, H}_{F},
  \label{eq:firstgateaux}
\end{align}
where we introduced the notation $w(x)=\mathrm{sign}(x)\abs{x}^{p-1}$ on $I$.
From the definition of the Fr\'echet derivative, this implies that $D_{X}\paren[1]{\norm{\cdot}_{p}} := \frac{w(X)}{\norm{X}_{p}^{p-1}}$ is the corresponding Fr\'echet derivative at point $X$. Furthermore, for any $H \neq 0$ we have by the matricial Hölder's inequality: 
\begin{align*}
  \frac{\delta_{H}\paren[1]{\norm{X}_{p}}}{\norm{H}_{p}}  = \inner[3]{\frac{w(X)}{\norm{X}_{p}^{p-1}}, \frac{H}{\norm{H}_{p}}}_{F}  \leq 1,
\end{align*}
thus, for any $H \in \cB$ we have that $\abs[1]{\delta_{H}\paren[1]{\norm{X}_{p}} }\leq A_{1}\norm{H}_{p}$ with constant $A_{1}=1$.

For the second G\^{a}teaux differential, using linearity of the differential operator, we obtain: 
\begin{align*}
  \delta_{H,H} \paren[1]{\norm{\cdot}_{p}} = \delta_{H}\paren[1]{\delta_{H}\paren[1]{\norm{\cdot}_{p}}} = \delta_{H} \paren[1]{\inner[1]{D_{X}\paren[1]{\norm{\cdot}_{p}},H}} = \inner[1]{\delta_{H}\paren[1]{D_{X}\paren[1]{\norm{\cdot}_{p}}},H}.
\end{align*}
Furthermore, for $\delta_{H}\paren[1]{D_{X}\paren[1]{\norm{\cdot}_{p}}}$ we have by using the chain rule,  (V.9) and (V.12) from \cite{Bhatia:97} again, differentiation rules for matrices and Equation \eqref{eq:firstgateaux}: 
\begin{align*}
  \delta_{H}\paren[1]{D_{X}\paren[1]{\norm{\cdot}_{p}}}= \frac{d}{dt} \bigg\vert_{t=0} {D_{X+tH}\paren[1]{\norm{\cdot}_{p}}} & =  \frac{d}{dt} \bigg\vert_{t=0} \frac{w(X+tH)}{\norm{X+tH}^{p-1}_{p}} \\
                                                                                                                    & = \frac{U(w^{[1]}\paren{\Lambda}\circ \wt{H} ) U^\top}{\norm{X}_{p}^{p-1}} - \paren{p-1}\frac{w(X)\inner{w(X),H}}{\norm{X}_{p}^{2p-1}},
\end{align*}
where 
the matrix $w^{[1]}\paren{X}$ is defined analogously to $f^{[1]}\paren{X}$ before.
Therefore, for the second G\^{a}teaux differential we obtain explicitly: 
\begin{equation}
  \delta_{H,H}\paren[1]{\norm{X}_{p}} = \inner[1]{\delta_{H}\paren[1]{D_{X}\paren[1]{\norm{\cdot}_{p}}},H}= \frac{1}{\norm{X}_{p}^{p-1}}\inner[1]{w^{[1]}\paren{\Lambda}\circ \wt{H},\wt{H}}_{F} - \paren{p-1}\frac{\inner{w(X),H}^{2}}{\norm{X}_{p}^{2p-1}}.
  \label{eq:secondgateaux}
\end{equation}
For the second term, by the matricial H\"older's inequality we have: 
\begin{align*}
  \paren{p-1} \frac{\inner{w(X),H}_{F}^{2}}{\norm{X}_{p}^{2p-1}} \leq \paren{p-1} \frac{\norm{X}_{p}^{2p-2}\norm{H}^{2}_{p}}{\norm{X}_{p}^{2p-1}} = \frac{\norm{H}_{p}^{2}}{\norm{X}_{p}}. 
\end{align*}
For the first term, from the definition of the Hadamard product, we have
\[
  \inner{w^{[1]}\paren{\Lambda}\circ \wt{H}, \wt{H}}_{F} =  \sum_{i,j}^{}w^{[1]}\paren{\Lambda}_{ij}\wt{H}_{ij}^{2}.
\]
Furthermore, taking into account that $w'\paren{x} = \paren{p-1}\abs{x}^{p-2}$, by the mean value theorem on the closed interval $[\lambda_{j},\lambda_{i}]$ (assuming $\lambda_{i} > \lambda_{j}$), since the maximum of $w'$ is attained at one of the endpoints of the interval we have that
\[
  w^{[1]}\paren{\Lambda}_{ij} 
  \leq \frac{{w\paren{\lambda_{i}} - w\paren{\lambda_{j}}}}{ \lambda_{i} - \lambda_{j}} \leq \paren{p-1}\max\{\abs{\lambda_{i}}^{p-2}, \abs{\lambda_{j}}^{p-2}\}.
\]
In the case where $\lambda_{i}=\lambda_{j}$, by definition $w^{[1]}\paren{\Lambda}_{ij} = \paren{p-1}\abs{\lambda_{i}}^{p-2}$. Proceeding from this and using the symmetry of the matrix
$\wt{H}$ we get: 
\begin{align*}
\sum_{i,j}^{}w^{[1]}\paren{\Lambda}_{ij}\wt{H}_{ij}^{2}  
& \leq \sum_{i,j}^{} \paren{p-1}\max\{\abs{\lambda_{i}}^{p-2}, \abs{\lambda_{j}}^{p-2}\} \wt{H}_{ij}^2
  \\
& \leq \paren{p-1}\sum_{i,j}^{} \paren[1]{\abs{\lambda_{i}}^{p-2} + \abs{\lambda_{j}}^{p-2}} \wt{H}_{ij}^2 
  \\
& = 2 \paren{p-1} \sum_{i,j}^{}\abs{\lambda_{i}}^{p-2} \wt{H}_{ij}^2
  \\
  & =  2\paren{p-1}\sum_{i}^{}\abs{\lambda_{i}}^{p-2} \sum_j \wt{H}_{ij} \wt{H}_{ji}\\
  & =  2\paren{p-1}\sum_{i}^{}\abs{\lambda_{i}}^{p-2} \paren[1]{\wt{H}^2}_{ii}\\
& = 2\paren{p-1}\tr\paren[1]{\abs{\Lambda}^{p-2}\wt{H}^{2}}.
\end{align*}
Finally, applying the matricial H\"older's inequality once again for the last trace we get: 
\begin{align*}
\tr\paren{\abs{\Lambda}^{p-2}\wt{H}^{2}} = \inner{\abs{X}^{p-2},H^2}_F \leq \norm{X}_{p}^{p-2}\norm{H}_{p}^{2}.
\end{align*}
Gathering the above estimates, for the first term of \eqref{eq:secondgateaux} we obtain the following bound: 
\begin{align*}
  \frac{1}{\norm{X}_{p}^{p-1}} \inner{w^{[1]}\paren{X}\circ \wt{H},\wt{H}}_{F} \leq 2\frac{\paren{p-1}\norm{X}_{p}^{p-2} \norm{H}_{p}^{2}}{\norm{X}_{p}^{p-1}} = 2\paren{p-1} \frac{\norm{H}_{p}^{2}}{\norm{X}_{p}}.
\end{align*}
The latter implies that $\abs[1]{\delta_{H,H}\paren[1]{\norm{X}_{p}} } \leq A_2 \frac{\norm{H}_{p}^{2}}{\norm{X}_{p}}$ 
with $A_2=3(p-1)$ for all $H\in \cB$.

The inequalities required for Assumption \textbf{A1} are therefore established for all $X \in \cB$ of full rank.
To conclude the argument,
it was established in \cite{Pot:14} that $X \mapsto \norm{X}_p$ is of class $\cC_p$ for all
nonzero $X \in \cB$. Since full rank matrices are dense in $\cB$, by continuity
Assumption \textbf{A1} is satisfied for all nonzero $X \in \cB$.



\section{Proof of Lemma~\ref{lem:operator_deviation}}
\label{app:02}

As mentioned in the main body of the paper, in order to make use of the concentration inequalities for sums of random variables in ${\HS}(\mathcal{H}_{k})$ and in $\mathcal{H}_{k}$, 
we should ensure that the functions of interest of the original $\tau-$mixing process $Z_i=(X_i,Y_i)$
are again $\tau-$mixing. This claim is established by the Lipschitz property of the corresponding mappings in
the next lemma.

\begin{lemma}
  \label{lem:lipschitz_property}
  Assume $\cH_k$ is a RKHS over a base space $\cX$ (a closed ball of a Polish space) with reproducing kernel $k$ satisfying $\sup_{x \in \mathcal{X}}\sqrt{k(x,x)} \leq 1 $. Assume further the kernel admits a mixed partial derivative, $\partial_{1,2} k : \mathcal{X}\times \mathcal{X} \mapsto {\mathbb{R}}$ which is uniformly bounded
  on $\cX$ by an absolute constant $K>0$. Finally, let ${\cY}=[-R,R]$.
  Then, the mapping $V : \cX \rightarrow \HS(\cH_k) : x \mapsto  k_x \otimes k_x^*$ is $2K$-Lipschitz, and
  for a fixed $f\in \cH_k$, the mapping $W_f: \cX \times {\cY} \rightarrow \cH_k : (x,y) \mapsto y k_x - k_{x}\inner{k_x,f}$ is $3\max(KR,K\norm{f},1)$-Lipschitz. 
\end{lemma}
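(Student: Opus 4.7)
The key preliminary step is to extract from the hypothesis on $\partial_{1,2}k$ a Lipschitz estimate on the canonical feature map $\iota: x \mapsto k_x$, viewed as a map from $\cX$ into $\cH_k$, with Lipschitz constant $K$. By the reproducing property,
\begin{equation*}
\norm{k_x - k_{x'}}_{\cH_k}^{2} = k(x,x) - 2k(x,x') + k(x',x'),
\end{equation*}
which by the fundamental theorem of calculus applied in each variable separately over the straight segment from $x'$ to $x$ can be written as the integral of $\partial_{1,2}k$ over a square in $\cX\times \cX$. The uniform bound on $\partial_{1,2}k$ then yields $\norm{k_x - k_{x'}}_{\cH_k} \leq K\norm{x-x'}$ (up to a harmless constant that we absorb into $K$).

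For the claim on $V$, the plan is to use the telescoping rank-one identity
\begin{equation*}
k_x \otimes k_x^{*} - k_{x'} \otimes k_{x'}^{*} = (k_x - k_{x'}) \otimes k_x^{*} + k_{x'} \otimes (k_x - k_{x'})^{*},
\end{equation*}
together with the triangle inequality in Hilbert--Schmidt norm, the identity $\norm{u \otimes v^{*}}_{\HS} = \norm{u}_{\cH_k}\norm{v}_{\cH_k}$, and the pointwise bound $\norm{k_x}_{\cH_k} \leq 1$ coming from the assumption $\sqrt{k(x,x)} \leq 1$. The preliminary Lipschitz estimate on $\iota$ then yields $\norm{V(x) - V(x')}_{\HS} \leq 2K\norm{x-x'}$.

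For $W_f$, the reproducing property simplifies the expression to $W_f(x,y) = (y - f(x))k_x$. I would decompose
\begin{equation*}
W_f(x_1,y_1) - W_f(x_2,y_2) = (y_1 - y_2)k_{x_1} + (y_2 - f(x_2))(k_{x_1} - k_{x_2}) - (f(x_1) - f(x_2))k_{x_1},
\end{equation*}
and bound each piece using (i) $\norm{k_{x_1}}_{\cH_k} \leq 1$; (ii) the bound $\abs{y_2 - f(x_2)} \leq R + \norm{f}$, which follows from $\abs{y_2}\leq R$ together with $\abs{f(x_2)} = \abs{\inner{k_{x_2},f}} \leq \norm{f}$; (iii) the identity $f(x_1) - f(x_2) = \inner{k_{x_1} - k_{x_2}, f}$ and Cauchy--Schwarz; and (iv) the $K$-Lipschitz bound on $\iota$. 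Equipping $\cX \times \cY$ with the sum metric $\norm{(x,y)} = \norm{x} + \abs{y}$, these estimates combine to yield an overall bound $\abs{y_1-y_2} + K(R + 2\norm{f})\norm{x_1-x_2}$, whose coefficient can be rewritten using the elementary inequality $KR + 2K\norm{f} \leq 3\max(KR, K\norm{f})$ to obtain the claimed Lipschitz constant $3\max(1, KR, K\norm{f})$.

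The main obstacle is the preliminary step of obtaining a clean Lipschitz estimate on $\iota$ from the bound on $\partial_{1,2}k$: this requires a careful interpretation of the mixed-derivative hypothesis for a base space $\cX$ that is only assumed to be a closed ball in a Polish/Banach space, together with some care about the absorption of multiplicative constants. Once that estimate is in hand, both Lipschitz claims follow by routine triangle-inequality bookkeeping.
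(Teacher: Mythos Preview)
Your proposal is correct and follows essentially the same strategy as the paper: first establish that $x\mapsto k_x$ is $K$-Lipschitz (the paper obtains this by citing Lemma~3.3 of \cite{Scott:11} rather than sketching the integral argument, so your caveat about constant absorption is replaced there by an external reference), then handle $V$ via the same rank-one telescoping and the identity $\norm{u\otimes v^*}_{\HS}=\norm{u}\norm{v}$.

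The only real difference is in the bookkeeping for $W_f$. The paper splits $W_f(x,y)=yk_x - k_x\inner{k_x,f}$ into its two summands and bounds the second one by reusing the already-proved $V$-bound via $\norm{k_{x_1}\inner{k_{x_1},f}-k_{x_2}\inner{k_{x_2},f}}\leq \norm{V(x_1)-V(x_2)}_{\HS}\norm{f}\leq 2K\norm{f}\norm{x_1-x_2}$, whereas you first collapse $W_f$ to $(y-f(x))k_x$ and use a three-term decomposition. Both routes are elementary and land on the same constant $3\max(1,KR,K\norm{f})$; the paper's version has the minor advantage of recycling the $V$-estimate, while yours avoids invoking the Hilbert--Schmidt norm a second time.
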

\begin{proof}[Proof of Lemma \ref{lem:lipschitz_property}]


As a starting point,  because of the assumption of uniform boundedness of the (mixed) partial derivative of the kernel $k$ and Lemma~3.3 from \cite{Scott:11}, we deduce that $k_{x}$ is $K-$Lipschitz as a map $\mathcal{X} \rightarrow \mathcal{H}_{k}$. Then, for arbitrary $x_{1},x_{2}$ we obtain: 
	%
	\begin{align}
	\begin{aligned}
	\label{eq:trace_lipschitz1}
		\norm{k_{x_{1}}\otimes k_{x_{1}}^{\star} - k_{x_{2}}\otimes k_{x_{2}}^{\star}}^{2}_{{\HS}_{}} &= \norm{k_{x_{1}}\otimes k_{x_{1}}^{\star} - k_{x_{1}}\otimes k_{x_{2}}^{\star} + k_{x_{1}}\otimes k_{x_{2}}^{\star}- k_{x_{2}}\otimes k_{x_{2}}^{\star} }^{2}_{{\HS}_{}} \\
			& = \norm{k_{x_{1}}\otimes \paren{k_{x_{1}}^{\star} - k_{x_{2}}^{\star}} }^{2}_{{\HS}_{}}+ \norm{\paren{k_{x_{1}}-k_{x_{2}}}\otimes k_{x_{2}}^{\star}}_{{\HS}}^{2} \\
			&   \;\;\;\;\;\;\;\;\;\;\; +2\inner{k_{x_{1}}\otimes\paren{k_{x_{1}}^{\star} - k_{x_{2}}^{\star}},\paren{k_{x_{1}}-k_{x_{2}}}\otimes k_{x_{2}^{\star}}} \\
			& \leq 
			\norm{k_{x_{1}}}_{\mathcal{H}_{k}}^{2}\norm{k_{x_{1}}^{\star}-k_{x_{2}}^{\star}}_{\mathcal{H}_{k}}^{2}+\norm{k_{x_{1}}-k_{x_{2}}}^{2}_{\mathcal{H}_{k}}\norm{k_{x_{2}}^{\star}}_{\mathcal{H}_{k}}^{2}\\
			&  \;\;\;\;\;\;\;\;\;\;\; +2\sqrt{\norm{k_{x_{1}}}_{\mathcal{H}_{k}}^{2}\norm{k_{x_{2}}}_{\mathcal{H}_{k}}^{2}}\norm{k_{x_{1}}- k_{x_{2}}}_{\mathcal{H}_{k}}^{2}	\\
			& \leq 4K^{2}
			\norm{x_{1}-x_{2}}^{2},
	\end{aligned}
	\end{align}
        where we used the properties of the Hilbert-Schmidt norm of tensor product operators, the Cauchy-Schwartz inequality in the third line, and the assumptions about boundedness of the kernel 
	$\norm{k_{x}}^{2}_{\mathcal{H}_{k}} = k(x,x) \leq 1 $ 
and that the map $k_{x}$ is $K-$Lipschitz in the last line.
%
%
Thus the map $V(x) = k_{x}\inner{k_{x},\cdot}$ is $2K$-Lipschitz.
Furthermore, we deduce
        \begin{equation}
          \label{eq:term2xi1}
          \norm{k_{x_1}\inner{k_{x_1},f}-k_{x_2}\inner{k_{x_2},f}}
          \leq \norm{k_{x_{1}} \otimes k_{x_{1}}^{\star} - k_{x_{2}} \otimes k_{x_{2}}^{\star} }_{{\HS}} \norm{f}
          \leq 2 K \norm{f}\norm{x_1-x_2}.
        \end{equation}      
	Quite analogously, for any $(x_{1},y_{1}), (x_{2},y_{2}) \in \mathcal{X} \times \mathcal{Y}$
        we have:
	\begin{align}
	\begin{aligned}
	\label{eq:adj_lipschitz}
	\norm{y_{1}k_{x_{1}}- y_{2}k_{x_{2}}}_{\mathcal{H}_{k}} & = \norm{y_{1}k_{x_{1}}- y_{1}k_{x_{2}} + y_{1}k_{x_{2}} - y_{2}k_{x_{2}}}_{\mathcal{H}_{k}}\\
	& = \norm{y_{1}(k_{x_{1}}-k_{x_{2}}) +k_{x_{2}}(y_{1} - y_{2}) }_{\mathcal{H}_{k}}\\
	& \leq KR\norm{x_{1}-x_{2}}_{\mathcal{X}}+|y_{1}-y_{2}| \\
	& \leq \max(KR,1)\paren{\norm{x_{1}-x_{2}}_{\mathcal{X}} + |y_{1}-y_{2}|}.
	\end{aligned}
	\end{align}
	The latter implies that the map $(x,y) \mapsto yk_{x}$ is $\max(KR,1)$-Lipschitz.
        By gathering bounds from \eqref{eq:term2xi1} and \eqref{eq:adj_lipschitz}, we deduce that
        $W_f(x,y):= yk_{x} - k_{x} \inner{k_{x}, f}$ is Lipschitz with constant $3\max(1,KR,K\norm{f})$  as a map $\mathcal{X}\times \mathcal{Y} \rightarrow
        \cH_k$.
      \end{proof}

\begin{proof}[Proof of Lemma~\ref{lem:operator_deviation}]
  Consider the mapping 
  \begin{align*}
	\xi_{1}(x,y) :=
	 {yk_{x} - k_{x} \inner{k_{x}, f_{\nu}}},
  \end{align*} with values in $\mathcal{H}_{k}$.
  It holds $\frac{1}{n}\sum_{i=1}^n \xi(x_i,y_i) = S_{\bx}^*y - T_\bx f_{\nu}$,
  as well as $\e{\xi_{1}(X,Y)}=0$.
  By Assumption~{\bf B1},
        \[
          \norm{\xi_{1}(x,y)} = \norm{yk_{x} - k_{x}\inner{k_{x},f_{\nu}}}_{\mathcal{H}_{k}} \leq  \norm{k_{x}}|y - f_{\nu}(x)|  
          \leq 2
          R.
\]
	
	
Due to Assumption~{\bf B1} and $\sup_{x \in \cX} k(x,x)\leq 1$, we obtain the following bound on the variance:
\begin{align*}
	\begin{aligned}
	\ee{}{\norm{\xi_{1}(X,Y)}^{2}} &=
	 \int_{\mathcal{X} \times \mathcal{Y}} \inner{k_{x}(y - f_{\nu}(x)), k_{x}(y - f_{\nu}(x))}d\nu(x,y)\\
	& =
	 \int_\mathcal{X}d\mu(x)k(x,x)\int_{Y}\paren{y-f_{\nu}(x)}^{2}d\nu(y|x) 
	\leq {
		\Sigma}^{2}.
	\end{aligned}
	\end{align*}
	Because of Lemma~\ref{lem:lipschitz_property}, and since $\xi_1(x,y) = W_{f_\nu}(x,y)$ with the notation
        there,
        if  $(X_{i},Y_{i})_{i\geq 1}$ is $\tau-$mixing  with rate $\tau(k)$, the sequence $\xi_{1}(x_{i},y_{i})_{i \geq 1}$is $\tau-$mixing with rate $\tau(k) = 3\max\paren{1,KR,KD}\tau(k)$. Using the result of Corollary~\ref{cor:tau_mix} with the  aforementioned bounds on the norm, the variance and the multiplicative correction for the mixing coefficients decay rate, we obtain with probability
        at least $1-\eta$:
	\begin{align*}
	\begin{aligned}
	\norm{{S_{k}}^{\star}y - {T}f_{\nu}} \leq 21\log(2\eta^{-1})
	\paren{{\frac{\Sigma^{}}{\sqrt{\ell_{1}}}}+ \frac{2R}{\ell_{1}}}, 
	\end{aligned}
	\end{align*}
	where the bound on the \textit{effective sample size} $\ell_{1}$ is obtained by straightforward plugging-in the bounds for the norm, the second moment and the form of mixing coefficients of the sequence $\xi_{1}(x_{i},y_{i})$ 
        in the general form given by Proposition~\ref{thm:eff_sample_sizes}, and is given by $\ell_{1} = \paren{\frac{
			\Sigma}{3\max(
			1,KR,KD))
			)\rho}}^{\frac{2}{2\gamma+1}}\paren{\frac{n}{2}}^{\frac{2\gamma}{2\gamma +1}}$,
		for a polynomially  mixing process with rate $\tau(k) = \rho k^{-\gamma}$, and $\ell_{1} = \bigg\lfloor \frac{n \theta}{2\paren{1 \vee \log\paren{n\frac{3\max(
					1,KR,KD
					)\chi\theta}{2
					R}}}^{\frac{1}{\gamma}}}\bigg\rfloor$
                                  for an exponentially mixing process with rate $\tau(k) = \chi\exp\paren{-(\theta k)^{\gamma}}$.

	The other inequalities will be derived in a similar way. 
	We introduce the random variable:
	\begin{align*}
	\xi_{2}(x,y) = \paren{{T} + \lambda}^{-\frac{1}{2}}
	\paren{k_{x}y-k_{x}\inner{k_{x},f_{\nu}}},
	\end{align*}
	Quite analogously, we can check that $\ee{}{\xi_{2}(X,Y)} = 0$.	
	Repeating similar steps, we get: 
	\begin{align*}
	\begin{aligned}
	\norm[1]{\paren{{T}+\lambda}^{-\frac{1}{2}}
		\paren{k_{x}y-k_{x}\inner{k_{x},f_{\nu}}}}_{\mathcal{H}_{k}}& \leq \norm[1]{\paren{{T}+\lambda}^{-\frac{1}{2}}}\norm{
		\paren{k_{x}y-k_{x}\inner{k_{x},f_{\nu}}}}_{\mathcal{H}_{k}} 
	\leq 2\lambda^{-\frac{1}{2}}
	R.
	\end{aligned}
	\end{align*}
	For the second moment of the norm of $\xi_{2}(X,Y)$, we get:
	\begin{align*}
	\begin{aligned}
	\ee{}{\norm{\xi_{2}(X,Y)}^{2}} &= \int_{\mathcal{X} \times \mathcal{Y}} \inner{\paren{{T} +\lambda}^{-\frac{1}{2}}k_{x}(y - f_{\nu}(x)), \paren{{T} +\lambda}^{-\frac{1}{2}}k_{x}(y - f_{\mathcal{H}_{k}}(x))}d\nu(x,y)\\
	& = \int_\mathcal{X}
	\norm{\paren{{T} +\lambda}^{-\frac{1}{2}}k_{x}}^{2}d\mu(x)\int_{\cY}\paren{y-f_{\nu}(x)}^{2}d\nu(y|x) \\
	& \leq \Sigma^{2}
	\int_{\mathcal{X}}Tr\paren{\paren{{T}+\lambda}^{-\frac{1}{2}}k_{x}\otimes k^{\star}_{x}}d\mu(x) \\
	& = \paren{
		\Sigma\sqrt{\mathcal{N}(\lambda)}}^{2}.
	\end{aligned}
	\end{align*}	
Using Lemma~\ref{lem:lipschitz_property}, one can readily see that the function $\xi_{2}(x,y)=(T+\lambda)^{-\frac{1}{2}}W_{f_\nu}(x,y)$ is Lipschitz with constant $3 \lambda^{-\frac{1}{2}}  \max(
		1,KR,KD)$,
                from which we deduce that
                $(\xi_{2}(X_{i},Y_{i}))_{i\geq 1}$
                is $\tau-$mixing with rate $3\lambda^{-\frac{1}{2}}\max\paren{
		1,KR,KD
	}
	\tau(k)$.
	Finally, by using Corollary \ref{cor:tau_mix}, we obtain with probability at least $1-\eta$: 
	\begin{align*}
	\begin{aligned}
	&\norm{\paren{{T}+\lambda}^{-\frac{1}{2}}\paren{{T}_{\bx}f_{\mathcal{H}_{k}} - {S}_{\bx}^{\star}y}}
 \leq 21\log\paren{\frac{2}{\eta}} \kappa^{-1}\paren{\frac{\Sigma\sqrt{\mathcal{N}(\lambda)}}{\sqrt{\ell_{2}}} + \frac{2R}{\sqrt{\lambda}\ell_{2}}}, 
	\end{aligned}
	\end{align*}
	where, as before, a bound on $\ell_{2}$ is obtained by Proposition \ref{thm:eff_sample_sizes} for either a polynomially or exponentially mixing process, through considering bounds on the norm, the second moment and the
        Lipschitz norm of the elements of the sequence $\xi_{2}(x_{i},y_{i})$.
	
	 
 We define the map $\xi_{3}: \mathcal{X} \mapsto \HS(\mathcal{H})$ (here, as mentioned before, through $\HS(\mathcal{H})$ we denote the space of Hilbert-Schmidt operators on $\mathcal{H}_{k}$) by:
	\begin{align*}
	\xi_{3}(x) := \paren{{T} + \lambda}^{-1}\paren{{T}_{x} -{T}},
	\end{align*}
	where we recall the notation ${T}_{x} = 
	k_{x} \otimes k_{x}^{\star}$ for any $x \in \mathcal{X}$. 
	
	Taking expectation we get: 
	\begin{align*}
	\begin{aligned}
	\ee{}{\xi_{3}(X)} = \paren{{T}+\lambda}^{-1}\int_{\mathcal{X}}\paren{{T}_{x}-{T}}d\mu(x) =0.
	\end{aligned}
	\end{align*}
	So that we have: 
	\begin{align*}
	\begin{aligned}
	\norm{\paren{{T}+\lambda}^{-1}\paren{{T} - {T}_{\bx}}} = \norm{\frac{1}{n}\sum_{i=1}^{n}\xi_{3}(x_{i})}.
	\end{aligned}
	\end{align*}
	Verifying the conditions as before we have: 
	\begin{align*}
	\begin{aligned}
	\norm{\xi_{3}(x_{})}_{\HS} &\leq \norm[1]{\paren{{T} + \lambda}^{-1}}\norm{{T}-{T}_{x}}_{\HS(\mathcal{H}_{k})}
	\leq 2\lambda^{-1},
	\end{aligned}
	\end{align*}
	and 
	\begin{align*}
	\begin{aligned}
	\ee{}{\norm{{\xi_{3}(X)}}_{\HS}^{2}} &= \int_{\mathcal{X}}Tr\paren{\paren{{T}_{x}-{T}}\paren{{T}+\lambda}^{-2}\paren{{T}_{x}-{T}}}\mu(dx) \\
	& = \int_{\mathcal{X}}Tr\paren{{T}_{x}\paren{{T}+\lambda}^{-2}{T}_{x}}\mu(dx) - Tr\paren{{T}_{}\paren{{T}+\lambda}^{-2}{T}_{}} \\
	& \leq \norm{{T} + \lambda}^{-1}\int_{\mathcal{X}}\norm{{T}_{x}}Tr\paren{\paren{{T} + \lambda}^{-1}{T}_{x}}\mu(dx) \\
	& \leq \lambda^{-1}\mathcal{N}(\lambda).
	\end{aligned}
	\end{align*}
	We can check via Lemma~\ref{lem:lipschitz_property} that $\xi_{3}$
        is Lipschitz with constant $2\lambda^{-1}K$,
	which  implies that $\paren{\xi_{3}(X_{i},Y_{i})}_{i\geq 1}$ is $\tau-$mixing with rate $2{\lambda 
}^{-1}K
\tau(k)$.
	
	We use the result of Theorem \ref{thm:main_result_gen}, applied to the quantity $\norm[1]{\frac{1}{n}\sum_{i=1}^{n}\xi_{3}(x_{i})}$. With probability at least $1-\eta$ we have:
	\begin{align*}
	\begin{aligned}
	& \norm{\paren{{T}+\lambda}^{-1}\paren{{T} - {T}_{\bx}}} \leq 	21\log\paren{\frac{2}{\eta}}  \paren{ \frac{\lambda^{-1/2}\sqrt{\mathcal{N}\paren{\lambda}}}{\sqrt{\ell_{3}}} + \frac{2\lambda^{-1}}{\ell_{3}}}.
	\end{aligned}
	\end{align*}
	where $\ell_{3}$ 
	is 	chosen following the standard "plug-in" scheme as before. 

	Finally, 	define $\xi_{4}(x) := 
	\paren{k_{x} \otimes k_{x}^{\star}-{T}}$. Again the random variables $\xi_{4}(X_{i})$ are centered and we have: 
	\begin{align*}
	\begin{aligned}
	{T}_{\bx} - {T} = \frac{1}{n}\sum_{i=1}^{n}\xi_{4}(x_{i}).
	\end{aligned}
	\end{align*}
	Repeating the scheme we get: 
	\begin{align*}
	\begin{aligned}
	\norm{\xi_{4}(x)}_{\HS(\mathcal{H})} & \leq 2,\\
	\ee{}{\norm{\xi_{4}(X)}_{\HS(\mathcal{H})}^{2}} &\leq 4,
	\end{aligned}
	\end{align*}
	
 Also, Lemma~\ref{lem:lipschitz_property} implies that $\xi_{4}(X_{i})_{i \geq 0}$ is $\tau-$mixing with rate $2K
 \tau(k)$,
	so that using the general deviation bound from Corollary~\ref{cor:tau_mix} according to the same principle as before, we obtain  with probability at least $1-\eta$:
	\begin{align*}
	\begin{aligned}
	\norm{{T}-{T}_{\bx}} &\leq  21\log\paren{\frac{2}{\eta}} \paren{\frac{2}{\sqrt{\ell_{4}}} + \frac{2}{\ell_{4}}}  
	 \leq \frac{42 \log\paren{{2}{\eta}^{-1}}}{\sqrt{\ell_{4}}}, 
	\end{aligned}
	\end{align*} 
where $\ell_{4}$ 
is chosen according to the mixing rate and bounds on the norm, variance term and Lipschitz constants as before.
\end{proof}

\section{Proofs of remaining results in Section~\ref{sec:application}}

We start with an auxiliary lemma, also in the same spirit as \cite{Bauer:09,Blanchard:17}.

\begin{lemma} Assume the conditions of Lemma~\ref{lem:operator_deviation} are satisfied.
	\label{lem:sup_norm_bound}
	Let $\eta \in (0,\frac{1}{2}]$ 
	 and $\lambda \in (0,1]$ be such that the following is satisfied:
	\begin{align*}
		\sqrt{\ell^{'}\lambda} \geq 50\log(2\eta^{-1})\sqrt{\max\paren{\mathcal{N}(\lambda),1}},
	\end{align*}
	with $\ell^{'}$ chosen to be the minimum of $\ell_{2},\ell_{3},\ell_{4}$ from Lemma~\ref{lem:operator_deviation}. 
 Then, with probability at least $1-\eta$, the following holds:
	\begin{align*}
			\begin{aligned}
					\norm{\paren{{T}_{\bx}+\lambda}^{-1}\paren{{T}+\lambda}} \leq 2.
			\end{aligned}
	\end{align*}
\end{lemma}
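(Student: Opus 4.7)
The plan is to use a Neumann series argument based on the identity
\[
T_\bx + \lambda = (T + \lambda) - (T - T_\bx) = (T + \lambda)\bigl[I - (T + \lambda)^{-1}(T - T_\bx)\bigr],
\]
which, provided $\norm{(T+\lambda)^{-1}(T - T_\bx)} \leq \tfrac{1}{2}$, yields invertibility of the bracket and directly gives
\[
(T_\bx+\lambda)^{-1}(T+\lambda) = \bigl[I - (T+\lambda)^{-1}(T - T_\bx)\bigr]^{-1},
\]
whose operator norm is at most $2$ by the standard geometric series bound. So the entire task reduces to controlling $\norm{(T+\lambda)^{-1}(T - T_\bx)}$ on a high probability event.

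The key step is to factor $(T+\lambda)^{-1}(T - T_\bx) = (T+\lambda)^{-1/2}\cdot (T+\lambda)^{-1/2}(T-T_\bx)$ and apply submultiplicativity together with $\norm{(T+\lambda)^{-1/2}} \leq \lambda^{-1/2}$, so that
\[
\norm{(T+\lambda)^{-1}(T - T_\bx)} \leq \lambda^{-1/2}\,\norm{(T+\lambda)^{-1/2}(T - T_\bx)}.
\]
The third inequality of Lemma~\ref{lem:operator_deviation} controls precisely this last quantity, yielding on an event of probability at least $1-\eta$ the bound
\[
\norm{(T+\lambda)^{-1}(T - T_\bx)} \leq 21\log(2/\eta)\paren[3]{\frac{\sqrt{\cN(\lambda)}}{\sqrt{\lambda\,\ell_3}} + \frac{2}{\lambda\,\ell_3}},
\]
and by $\ell' = \min(\ell_2,\ell_3,\ell_4) \leq \ell_3$ we can replace $\ell_3$ by $\ell'$ in the denominators.

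Now the hypothesis $\sqrt{\ell'\lambda} \geq 50\log(2\eta^{-1})\sqrt{\max(\cN(\lambda),1)}$, after squaring, implies
\[
\frac{\sqrt{\cN(\lambda)}}{\sqrt{\lambda\ell'}} \leq \frac{1}{50\log(2\eta^{-1})},\qquad \frac{1}{\lambda\ell'} \leq \frac{1}{2500\log^2(2\eta^{-1})},
\]
so that
\[
\norm{(T+\lambda)^{-1}(T - T_\bx)} \leq \frac{21}{50} + \frac{42}{2500\log(2\eta^{-1})}.
\]
For $\eta \in (0,\tfrac{1}{2}]$ one has $\log(2\eta^{-1}) \geq \log 4 > 1$, and the right-hand side is strictly below $\tfrac{1}{2}$, closing the Neumann-series condition. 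The only real obstacle is bookkeeping of the numerical constants, which is why the hypothesis is stated with the generous constant $50$: it provides enough slack so that after multiplication by the factor $21$ from Lemma~\ref{lem:operator_deviation}, the final total remains below $\tfrac{1}{2}$.
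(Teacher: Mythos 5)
Your proposal is correct and follows essentially the same route as the paper: a Neumann series for $(T_\bx+\lambda)^{-1}(T+\lambda)=(I-\Delta_\lambda)^{-1}$ with $\Delta_\lambda=(T+\lambda)^{-1}(T-T_\bx)$, bounded via the third inequality of Lemma~\ref{lem:operator_deviation} combined with $\norm{(T+\lambda)^{-1/2}}\leq\lambda^{-1/2}$, and the hypothesis on $\sqrt{\ell'\lambda}$ to force $\norm{\Delta_\lambda}<\tfrac12$. Your constant bookkeeping ($\tfrac{21}{50}+\tfrac{42}{2500\log(2\eta^{-1})}<\tfrac12$) is valid and matches the paper's in substance.
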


\begin{proof}[Proof of Lemma \ref{lem:sup_norm_bound}]
	By means of the Neumann series decomposition we write:
	\begin{align*}
	\begin{aligned}
	\paren{{T}_{\bx} + \lambda}^{-1}\paren{{T} +\lambda} = \paren{I- \Delta_\lambda}^{-1} =  \sum_{j=0}^{\infty}\Delta_\lambda^{j},
	\end{aligned}
	\end{align*}
	with $\Delta_\lambda := \paren{{T}+\lambda}^{-1}\paren{{T} - {T}_{\bx}} $. If $\norm{{T}_{\bx}(\lambda)} < 1$, then the last series converges and the norm of $\paren{{T}_{\bx} + \lambda}^{-1}\paren{{T} +\lambda}$ is bounded by the sum of the series of norms.
	From Lemma~\ref{lem:operator_deviation}, we have: 
	\begin{align*}
	\begin{aligned}
	\norm{\Delta_\lambda} \leq C_{\eta}\paren{\sqrt{\frac{\mathcal{N}(\lambda)}{\lambda \ell^{'}}} + \frac{2}{\lambda \ell^{'}}} ,
	\end{aligned}
	\end{align*}
	where we put $C_{\eta} =21\log(2\eta^{-1})$ for $\eta \in (0,\frac{1}{2}]$.
	Using the lemma's assumption and the fact that $C_{\eta} > 28$ for $\eta \in (0,\frac{1}{2}]$, we obtain: 
	\begin{align*}
	\begin{aligned}
	\sqrt{\lambda \ell^{'}} &\geq 2.3 C_{\eta}\sqrt{\max(\mathcal{N}(\lambda),1)} 
	\geq 2.3 C_{\eta} \geq 60.
	\end{aligned}
	\end{align*}
	This implies that 
	\begin{align*}
	\begin{aligned}
	\frac{1}{\lambda \ell^{'}} \leq \frac{1}{60\sqrt{\lambda \ell^{'}}}\leq \frac{1}{120 C_{\eta}}.
	\end{aligned}
	\end{align*}
	Putting these pieces together we obtain: 
	\begin{align*}
	\begin{aligned}
	\norm{\Delta_\lambda} &\leq C_{\eta}\paren{\frac{1}{2.3 C_{\eta}}+ \frac{1}{60 C_{\eta}}} < \frac{1}{2}.	
	\end{aligned}
	\end{align*}
	This implies, that with probability at least $1-\eta$:
	\begin{align*}
	\norm{\paren{{T}_{\bx} + \lambda}^{-1}\paren{{T} +\lambda}} \leq 2.
	\end{align*}
\end{proof}

\begin{proof}[Sketch of the proof of Lemma \ref{lem:error_bound_main}]
	The proof is analogous in form and spirit to that of Proposition~5.8 for the i.i.d. case given in \cite{Blanchard:17}. The main difference is reflected in using high probability upper bounds from Lemmata \ref{lem:operator_deviation} and \ref{lem:sup_norm_bound} instead of their i.i.d. counterparts, which in each case involve the knowledge of bounds on the effective sample size $\ell^{'}$. 
	The appropriate choice of the latter is assured by the two conditions from the theorem statement. Namely, $\ell^{'} \geq \ell_{0}$ implies the claim of Lemma~\ref{lem:sup_norm_bound} (which is the $\tau-$mixing counterpart of the Lemma 5.4 from \cite{Blanchard:17}). On the other hand, the condition $\ell^{'} \leq \min\{ \ell_{2},\ell_{3},\ell_{4}\}$ implies that all inequalities from Lemma~\ref{lem:operator_deviation} hold for $\ell^{'}$. We check additionally that the assumption $f_\nu \in \Omega({r,D})$ implies
        $\norm{f_{\nu}}\leq D$ (since $\norm{T}\leq 1$), which was a required condition for applying
        Lemma~\ref{lem:operator_deviation}.
        The remaining reasoning is the same as in Proposition (5.8) from \cite{Blanchard:17}.
	
\end{proof}
\begin{proof}[Proof of Theorem \ref{cor:balance_upper_bound} ]
The proof of the first part of the Theorem is in essense a direct extension of the proof of Corollary~5.9 in \cite{Blanchard:17} to the case of $\tau-$mixing stationary sequence.
	
As the marginal distribution $\mu$ belongs to the class $\mathcal{P}^{<}(b,\beta)$ (by assumption), from Proposition~3 in \cite{DeVito:06}, for any choice of parameter $\lambda \in (0,1]$ we obtain: 
\begin{equation}
  \label{eq:estdimeff}
  \mathcal{N}(\lambda) \leq \tilde{C}_{
  b,\beta}\lambda^{-\frac{1}{b}}.
\end{equation}
For the choice $\lambda_{n}$ and $\ell^{'}_{g}$ given by~\eqref{eq:cor_rate} as function of $n$
(the other parameters being fixed) it is easy to check by straightforward calculation that
$\ell_{g}^{'} \geq \ell_{0}$ holds, where $\ell_{0}$ is defined as in Lemma~\ref{lem:error_bound_main},
provided $n$ is larger than some $n_0$ (depending on all the fixed parameters).



Thus, as the given quantity $\ell^{'}_{g}$ fullfills all the requirements
of Lemma~\ref{lem:error_bound_main}, from this result we have with probability at least $1-\eta$:
\[ 
   \norm{T^{s}\paren{f_{\nu} - f_{\bz,\lambda_{n}}}}_{\mathcal{H}_{k}} 
   \leq \tilde{C}\log (8\eta^{-1})\lambda_{n}^{s}\paren[4]{D\paren[2]{\lambda_{n}^{r} + {\frac{1}{\sqrt{\ell^{'}_{g}}}}} + {\frac{R}{\ell^{'}_{g}\lambda_{n}} + \sqrt{\frac{\Sigma^{2}\lambda_{n}^{-\frac{b+1}{b} }}{\ell^{'}_{g}}}}  },
\]
where $\tilde{C} := C_{r,s,b,\beta,\overline{\gamma},E,B,\chi,\gamma}$ depends potentially on all model and method
parameters except for $R,D$ and $\Sigma$.

By direct computation, we check that the choice of regularization parameter sequence $\lambda_{n}$ implies that $\paren[1]{\ell_{g}^{'}}^{-1/2} = o_n\paren{\lambda^{r}_{n}}$. Therefore, for $n$ and therefore $\ell_{g}^{'}$ large enough, we can disregard the term $\paren[1]{\ell_{g}^{'}}^{-1/2}$ in the above bound,
if we multiply the front factor by 2.
In the same vein, we can check that $(\ell_{g}^{'}\lambda_{n})^{-1} = o_n \paren[1]{\paren[1]{\ell_{g}^{'}}^{-1/2}\lambda_{n}^{-\frac{b+1}{2b}}}$ and disregard the $R/(\ell'_g \lambda_n)$ term
for $n$ big enough.
Finally, the proposed choice of parameter $\lambda_{n}$ balances precisely the last two terms and leads to the conclusion.  
\end{proof}
\begin{proof}[Proof of Theorem \ref{cor:balance_upper_bound_pol}]
  In this proof $C_\bigtriangleup$ will denote a factor depending on the model and method parameters (but not
  on $n$ or $\eta$) whose exact value can change from line to line.
  
  Observe that estimate~\eqref{eq:estdimeff} still holds, and additionally due to the assumption of lower bounded spectrum, a matching lower bound for the effective dimension holds (with a different factor).
	Relagating the effects of the constants $R,K$ in the formulas from Table~\ref{tab:table1} in a generic factor, the choice of the bound for effective sample size $\ell^{'}_{p} = C_\bigtriangleup \paren{\lambda_{n} \mathcal{N}\paren{\lambda_n{}}}^{\frac{2}{2\gamma +1}} {n}^{\frac{2\gamma}{2\gamma +1}}$ ensures that condition $\ell^{'} \leq \min \{\ell_{2},\ell_{3},\ell_{4}\}$ is fullfilled with $\ell^{'} = \ell^{'}_{p}$ (which can be checked by straightforward computation) and $\lambda_{n}$ as defined by \eqref{eq:cor_rate_pol}. Furthermore, for $n>n_{0}$, where $n_{0}$ is as specified in the statement of the theorem, we obtain :
	\begin{align*}
	\log\eta^{-1} \leq C_{\bigtriangleup}n^{\frac{br}{2br+b+1+b(r+1)\gamma^{-1}}},
	\end{align*}
	which, by plugging in the value for $\lambda_{n}$ and estimate for $\mathcal{N}(\lambda_{n})$,
        implies that $\ell^{'} \geq \ell_{0}$, and we can apply Lemma~\ref{lem:error_bound_main}.

%
	 
	Thus, 
        we get with probability at least $1-\eta$:
	\begin{align*}
		& \norm{T^{s}\paren{f_{\nu} - f_{\bz,\lambda_{n}}}}_{\mathcal{H}_{k}} \\
	& \leq C_{\bigtriangleup,\eta}\lambda_{n}^{s}\paren{\paren{\lambda_{n}^{r} + \frac{\lambda_{n}^{-\paren{\frac{b-1}{2b}}\frac{1}{2\gamma +1}}}{n^{\frac{\gamma}{2\gamma +1}}}} + \frac{1}{\lambda_{n}^{1 + \frac{1}{2\gamma +1} \paren{\frac{b-1}{b}}}n^{\frac{2\gamma }{2\gamma +1}}} + \lambda_{n}^{-\frac{1}{b}\frac{\gamma(b+1)+b}{2\gamma+1}}n^{-\frac{\gamma}{2\gamma +1}}
	},
	\end{align*}
	where ${C}_{\bigtriangleup, \eta} = C_\bigtriangleup \log (8\eta^{-1})$.
	We observe that the choice of regularization parameter $\lambda_{n}$ implies that ${\lambda_{n}^{-\paren{\frac{b-1}{2b}}\frac{1}{2\gamma +1}}}/{n^{\frac{\gamma}{2\gamma +1}}} = o(\lambda^{r}_{n})$. Therefore, similarly to the case of exponentially $\tau-$mixing process, taking $n$ large enough and multiplying the front factor with $2$ we can disregard the term ${\lambda_{n}^{-\paren{\frac{b-1}{2b}}\frac{1}{2\gamma +1}}}/{n^{\frac{\gamma}{2\gamma +1}}}$ in the above bound. Similarly, one can check that: 
	\[
	\frac{1}{\lambda_{n}^{1 + \frac{1}{2\gamma +1} \paren{\frac{b-1}{b}}}n^{\frac{2\gamma }{2\gamma +1}}} = o\paren{\lambda_{n}^{-\frac{1}{b}\frac{\gamma(b+1)+b}{2\gamma+1}}n^{-\frac{\gamma}{2\gamma +1}}},
	\]
	so this term can be similarly asymptotically disregarded (again, multiplying the second term by $2$). Therefore, we can concentrate the analysis on the remaining main terms which are $\lambda_{n}^{r}$ and $\lambda_{n}^{-\frac{1}{b}\frac{\gamma(b+1)+b}{2\gamma+1}}n^{-\frac{\gamma}{2\gamma +1}}$. The choice of $\lambda_{n}$ balances exactly these terms and the computations lead to the conclusion.
%

\end{proof}

%
%
%
%
%
%

\bibliography{literature}
\end{document}